\newcommand{\xmark}{\ding{55}}%
\title{\textbf{VJEPA: Variational Joint Embedding Predictive Architectures as Probabilistic World Models}}
\author{Yongchao Huang \footnote{Author email: yongchao.huang@abdn.ac.uk}}
\date{01/01/2026}
\newtheorem{theorem}{Theorem}
\newtheorem{proposition}{Proposition}
\newtheorem{lemma}{Lemma}
\newtheorem{definition}{Definition}
\theoremstyle{remark}
\newtheorem{remark}{Remark}
\begin{document}
\maketitle

\begin{abstract}
Joint Embedding Predictive Architectures (JEPA) offer a scalable paradigm for self-supervised learning by predicting latent representations rather than reconstructing high-entropy observations. However, existing formulations rely on \textit{deterministic} regression objectives, which mask probabilistic semantics and limit its applicability in stochastic control.
In this work, we introduce \emph{Variational JEPA (VJEPA)}, a \textit{probabilistic} generalization that learns a predictive distribution over future latent states via a variational objective.
We show that VJEPA unifies representation learning with Predictive State Representations (PSRs) and Bayesian filtering, establishing that sequential modeling does not require autoregressive observation likelihoods.
Theoretically, we prove that VJEPA representations can serve as sufficient information states for optimal control without pixel reconstruction, while providing formal guarantees for collapse avoidance.
We further propose \emph{Bayesian JEPA (BJEPA)}, an extension that factorizes the predictive belief into a learned dynamics expert and a modular prior expert, enabling zero-shot task transfer and constraint (e.g. goal, physics) satisfaction via a Product of Experts.
Empirically, through a noisy environment experiment, we demonstrate that VJEPA and BJEPA successfully filter out high-variance nuisance distractors that cause representation collapse in generative baselines.
By enabling principled uncertainty estimation (e.g. constructing credible intervals via sampling) while remaining likelihood-free regarding observations, VJEPA provides a foundational framework for scalable, robust, uncertainty-aware planning in high-dimensional, noisy environments.
\end{abstract}

\tableofcontents
\newpage

\section{Introduction} \label{sec:intro}

Self-supervised representation learning has traditionally followed two dominant paradigms:
(i) generative modeling, which optimizes likelihoods over high-dimensional observations such as pixels or tokens
\cite{kingma2014vae, rezende2014stochastic, oord2016pixelrnn, raffel2020t5}, and
(ii) contrastive learning, which discriminates between positive and negative pairs
\cite{hadsell2006dimensionality, oord2018cpc, chen2020simclr, he2020moco}.
Both approaches suffer from fundamental limitations: generative models must model nuisance variability and high-entropy details irrelevant for downstream tasks
\cite{alemi2017ib, higgins2017beta, locatello2019challenging}, while contrastive methods require carefully engineered negative sampling schemes and can introduce representational bias
\cite{tschannen2019mutual, saunshi2019theoretical, chen2020simclr}.

Joint Embedding Predictive Architectures (JEPA) offer a third paradigm
\cite{lejepa2022ijepa, bardes2024vjepa}. Rather than reconstructing observations or contrasting samples, JEPA learns by predicting \emph{representations} of missing or future data \cite{lejepa2022ijepa}. This removes the need for observation-level likelihoods and explicitly biases learning toward predictable, task-relevant structure \cite{bardes2024vjepa, larsen2016autoencoding, Singh2004}.
Recent instantiations such as I-JEPA \cite{lejepa2022ijepa} and V-JEPA \cite{bardes2024vjepa,bardes2024vjepa_openReview} have demonstrated strong empirical performance on vision and video tasks. Early on, LeCun \cite{LeCun2022HJepa} proposed \emph{Hierarchical JEPA} (H-JEPA), a conceptual framework for multi-level,
multi-timescale predictive models that emphasizes learning latent dynamics without observation reconstruction as a basis for model-predictive control and long-horizon reasoning.

More recently, several works have begun to instantiate JEPA-style predictors as \emph{world models} for planning and control. Bardes et al.~\cite{Bardes2025VJEPA2} show that large-scale JEPA pretraining on internet video, combined with lightweight
action-conditioned prediction, enables zero-shot robotic planning via latent-space model-predictive control. Terver et al.~\cite{terver2025jepaworldmodels} systematically study action-conditioned \emph{JEPA World Models} (JEPA-WMs), demonstrating that planning directly in representation space can solve navigation and
manipulation tasks without observation reconstruction and without training on reward-labeled data. Destrade et al.~\cite{Destrade2026ValueGuidedJEPA} further study \emph{value-guided JEPA planning}, which shapes the representation space so that distances approximate goal-conditioned value, and improves optimization-based planning.

Despite these advances, existing JEPA-based world models remain largely \emph{deterministic} and are trained using regression-based objectives \cite{lejepa2022ijepa, bardes2024vjepa}. As a result, the underlying probabilistic semantics of JEPA predictions, the representation of uncertainty over future latent states, and the conditions under which the learned representation constitutes a sufficient information state for planning and control remain unformalized. In particular, current approaches implicitly assume predictive sufficiency, but do not characterize when or why this assumption should hold.

In this work, we ask and answer several foundational questions about JEPA:
\begin{itemize}
    \item \textit{Implicit Probabilistic Model:} what probabilistic objective does deterministic JEPA implicitly optimize, and how can it be generalized to handle uncertainty?
    \item \textit{Dynamics and Control:} can JEPA be formalized as a latent dynamical system where the learned representation serves as a \emph{sufficient information state} for optimal control, without reconstructing observations?
    \item \textit{Bayesian Connection:} how does JEPA relate to Bayesian filtering and Predictive State Representations (PSRs), and can this connection enable the injection of structural priors or constraints into latent planning?
    \item \textit{Nuisance Invariance:} does introducing temporal structure force JEPA to adopt autoregressive observation likelihoods, or can it remain likelihood-free to avoid modeling high-variance nuisance distractors?
\end{itemize}
We address these questions by formulating JEPA as a \emph{probabilistic predictive state-space model}, and showing that sequential predictive modeling does not require autoregressive likelihood factorization and that JEPA representations can serve as sufficient information states for control and planning.

\paragraph{Contributions.}
We propose \emph{Variational JEPA (VJEPA)}, a probabilistic formulation that explicates the implicit generative model of JEPA. We theoretically establish VJEPA as a predictive state-space model, proving that its representations serve as \emph{sufficient information states} for optimal control and providing formal guarantees for collapse avoidance. We further extend this to \emph{Bayesian JEPA (BJEPA)}, a modular framework that factorizes predictive belief into learned dynamics and structural priors, enabling the insertion of constraints and physical laws, as well as zero-shot task transfer via Product of Experts. Finally, we empirically demonstrate in a noisy environment that VJEPA and BJEPA successfully filter out high-variance distractors, validating that principled uncertainty-aware world modeling is achievable without autoregressive observation reconstruction.

\section{Related Work} \label{sec:related_work}

\subsection{Predictive State Representations}

Predictive State Representations (PSRs) were introduced by Sutton et al. \cite{Sutton2001} and independently formalized by Singh et al. \cite{Singh2004}. They replace latent states with sufficient statistics of future observations, representing the state as
\[
s_t = p(o_{t+1:t+H} \mid o_{\le t}),
\]
where $H$ denotes the prediction horizon. 
Sutton et al. \cite{Sutton2001} emphasized the theoretical connection between PSRs and history-based models, showing how state can be represented purely in terms of observable predictions. 
Singh et al. \cite{Singh2004} developed the system-dynamics matrix formalism and demonstrated that PSRs generalize $n$-th order Markov models and hidden-state models such as HMMs or POMDPs. 
Later, we will see that VJEPA can be viewed as a neural, amortized PSR that compresses predictive distributions into a latent embedding $Z_t$.

\subsection{World Models and Latent Dynamics}

Model-based latent dynamics have been widely explored in reinforcement learning. PlaNet \cite{hafner2018planet} introduced a probabilistic latent dynamics model that predicts future latent states $z_{t+1}$ conditioned on actions $a_t$ and reconstructs observations $x_t$, optimizing an evidence lower bound (ELBO) over pixels.
Dreamer \cite{hafner2019dreamer} extended PlaNet by integrating actor-critic reinforcement learning in the learned latent space, enabling policy optimization without decoding full trajectories to pixels. 
Formally, these methods learn
\[
p(z_{t+1} \mid z_t, a_t), \quad p(x_t \mid z_t),
\]
where the observation likelihood $p(x_t \mid z_t)$ is explicitly modeled.

In contrast, JEPA \cite{lejepa2022ijepa}, V-JEPA (video-JEPA) \cite{bardes2024vjepa} and VJEPA (this work) discard observation reconstruction, learning predictive latent representations that compress future information into $Z_t$, emphasizing predictive sufficiency rather than pixel fidelity.

MuZero \cite{Schrittwieser2020muzero} also learns latent dynamics without reconstructing observations, but it still relies on value and policy heads for planning. 
VJEPA differs by learning latent dynamics purely through representation prediction, without access to rewards, policies, or observation likelihoods, making it a self-supervised model suitable for both control and general representation learning.

\subsection{Active Inference}

Active Inference is a theoretical framework proposed by Friston and colleagues in which perception, action, and learning are cast as variational inference in a latent variable model of the world
\cite{friston2010free, friston2012perceptions, friston2017active}.
The central idea is that biological agents act to minimize \emph{variational free energy}, which provides an upper bound on sensory surprise (negative log-evidence). Formally, variational free energy is defined as
\[
\mathcal{F}(q)
=
\mathbb{E}_{q(z)} \left[\log q(z) - \log p(x, z)\right],
\]
where $z$ denotes latent (hidden) states of the environment, $x$ denotes sensory observations, and $q(z)$ is an approximate posterior belief.
Minimizing $\mathcal{F}(q)$ drives $q(z)$ towards the true Bayesian posterior $p(z \mid x)$, thereby implementing perception as approximate Bayesian inference.
This formulation was introduced and developed in early work by Friston \cite{friston2010free} and extended to perception-action loops in subsequent studies \cite{friston2012perceptions}.

Within Active Inference, belief updating corresponds to approximate Bayesian filtering over latent states, often implemented as gradient descent on free energy.
More recent work has emphasized a distinction between perception and action: while perception minimizes current free energy, action selection is driven by the minimization of \emph{expected free energy} over future trajectories, enabling agents to plan by imagining future outcomes and selecting policies that reduce expected surprise and uncertainty \cite{friston2017active}.

VJEPA aligns with Active Inference at the level of \emph{latent belief propagation}: both frameworks aim to learn compact latent representations that summarize past observations and support prediction of future states.
However, VJEPA deliberately omits modeling sensory likelihoods $p(x \mid z)$, preferences, or policy evaluation.
Instead, it focuses exclusively on learning predictive latent dynamics through representation prediction, yielding a self-supervised model of belief dynamics rather than a full Active Inference agent.

\subsection{Joint Embedding Predictive Architectures (JEPAs)}

JEPA was introduced to self-supervised learning by Assran et al. \cite{lejepa2022ijepa} in the Image-based JEPA (I-JEPA) model, where representations of held-out image regions are predicted from encoded context patches. I-JEPA demonstrated that prediction in a latent representation space can produce highly semantic features without pixel reconstruction or handcrafted augmentations, and scales to large vision models with competitive performance \cite{lejepa2022ijepa}.

Subsequent work extended the JEPA paradigm to video. Bardes et al. proposed Video JEPA (V-JEPA) for self-supervised learning from video by predicting masked spatio-temporal regions in a learned latent space \cite{bardes2024vjepa}. While V-JEPA improves temporal representation learning relative to image-only JEPA, it retains the deterministic regression objective.

Beyond these instantiations, the JEPA framework has inspired a variety of architectural and domain adaptations. For example, MC-JEPA \cite{bardes2023mcJEPA} incorporates motion and content prediction objectives for richer video representation \cite{bardes2023mcJEPA}, S-JEPA applies the JEPA principle to skeletal action recognition \cite{abdelfattah2024sjepa}, and Point-JEPA adapts JEPA to point cloud self-supervised learning \cite{saito2025pointjepa}. Other variants explore JEPA in graph domains \cite{skenderi2025graphJEPA} and in joint vision-language models \cite{chen2025vlJEPA}. Additional work (e.g. C-JEPA \cite{mo2024cJEPA}, DSeq-JEPA \cite{he2025dseqJEPA}) proposes contrastive or sequential extensions that address specific limitations such as representation collapse or prediction ordering.

Despite this growing body of JEPA-based methods and applications, to our knowledge there has been no formal variational or Bayesian JEPA that explicitly models predictive uncertainty and the underlying probabilistic generative process. Some work incorporates auxiliary losses to stabilize training (e.g. VICReg-style regularization \cite{bardes2022VICReg} in C-JEPA \cite{mo2024cJEPA}) or implicitly addresses uncertainty through architectural choices, but these do not derive from a principled likelihood-based objective. In contrast, our VJEPA formulates JEPA as a variational predictive model, learning an explicit predictive distribution over future latent states with likelihood and KL regularization, enabling a formal probabilistic interpretation and control-theoretic extensions not addressed by prior JEPA work.

\subsection{JEPA World Models and Latent-Space Planning}

The use of JEPA as a foundation for world modeling and planning was first articulated at a conceptual level by LeCun~\cite{LeCun2022HJepa}. In his proposal of \emph{Hierarchical JEPA} (H-JEPA), LeCun argues that autonomous agents should learn multi-level, multi-timescale predictive representations of the world that support planning and model-predictive control without relying on observation reconstruction or explicit reward modeling. H-JEPA provides a normative blueprint for predictive world modeling and hierarchical control, but does not specify a concrete training objective, probabilistic semantics, or learning algorithm.

Building on this vision, more recent work has developed concrete, action-conditioned JEPA-based world models that enable planning directly in representation space. Terver et al.~\cite{terver2025jepaworldmodels} introduce \emph{JEPA World Models} (JEPA-WMs), which combine pretrained visual encoders with action-conditioned JEPA predictors trained via a deterministic representation prediction loss. Planning is performed by sampling or optimizing action sequences whose predicted latent trajectories minimize a goal-conditioned cost in embedding space. Their results demonstrate that latent-space planning without observation reconstruction can match or outperform reconstruction-based world models, particularly when reward signals are unavailable or sparse.

Several subsequent works explore extensions of JEPA-based planning. Destrade et al.~\cite{Destrade2026ValueGuidedJEPA} propose \emph{value-guided JEPA planning}, in which the latent representation space is shaped so that distances approximate goal-conditioned value functions, improving the optimization landscape for planning.
More recently, Bardes et al.~\cite{Bardes2025VJEPA2} demonstrate that large-scale JEPA pretraining on internet video, followed by lightweight action-conditioned prediction, enables zero-shot robotic manipulation via latent-space model-predictive control.

While these works provide strong empirical evidence that JEPA representations can support planning and control, they are uniformly based on \emph{deterministic} predictors trained via regression objectives. As a result, uncertainty over future latent states, belief propagation, and the conditions under which the learned representation constitutes a sufficient information state for optimal control remain implicit and unformalized. In particular, these approaches assume predictive sufficiency of the latent state without providing a probabilistic model or formal guarantees.

Our work complements JEPA-based world models by providing a \emph{variational} formulation of JEPA that makes the underlying predictive distribution explicit. By modeling uncertainty over future latent states, VJEPA enables belief propagation, distributional planning, and formal connections to predictive state representations and Bayesian filtering, which are capabilities that not addressed by existing deterministic JEPA world models.

\vspace{0.6cm}
Taken together, the above lines of work leave open a gap that is not fully addressed by existing approaches.
Predictive State Representations provide a principled account of state as predictive statistics, but are not designed as scalable, amortized representation learners for high-dimensional sensory data.
Latent world models such as PlaNet, Dreamer, and MuZero learn predictive dynamics suitable for control, but rely on observation reconstruction, reward supervision, or policy optimization objectives that entangle representation learning with task-specific signals.
Active Inference offers a unifying theoretical framework for perception and action, but requires explicit modeling of sensory likelihoods, preferences, and policies, and is not typically instantiated as a self-supervised representation learning algorithm.
Finally, existing JEPA formulations learn predictive representations using deterministic regression objectives, which obscure the underlying probabilistic structure and preclude explicit modeling of uncertainty over future latent states.

VJEPA occupies a complementary point in this design space by extending JEPA with an explicit probabilistic formulation.
It learns predictive latent states as amortized representations while modeling a full predictive distribution over future embeddings, rather than a single point estimate.
This enables principled uncertainty estimation, belief propagation, and compatibility with Bayesian filtering and control, while retaining JEPA’s core advantages: avoiding observation reconstruction, reward supervision, and autoregressive likelihood factorization.
By focusing exclusively on predictive sufficiency through variational representation prediction, VJEPA bridges predictive state representations, latent world models, and belief propagation frameworks, while remaining a purely self-supervised model that is directly applicable to downstream control and planning.

\section{JEPA: Background and Deterministic Formulation} \label{sec:jepa}

Joint Embedding Predictive Architectures (JEPA) were introduced as a self-supervised learning framework that learns representations by predicting embeddings of missing or future data, rather than reconstructing observations or discriminating between positive and negative samples \cite{lejepa2022ijepa, bardes2024vjepa}. The central design principle of JEPA is to encourage representations that capture \emph{predictable, task-relevant structure} while discarding nuisance variability that is difficult or unnecessary to predict.

Given an input $x$ (e.g. an image or a video clip), JEPA first partitions the input into two disjoint subsets: a \emph{context} $x_C$ and a \emph{target} $x_T$. The context contains the information available to the model, while the target represents missing or held-out content that must be predicted at the representation level. This partitioning can correspond to spatial masking (as in I-JEPA \cite{lejepa2022ijepa}) or temporal masking (as in video JEPA \cite{bardes2024vjepa}), but the formulation itself is agnostic to the specific modality. 
The context is mapped to a latent representation using a trainable encoder
\begin{equation} \label{eq:context_encoder}
    Z_C = f_\theta(x_C),
\end{equation}
which produces a compact embedding summarizing the available information.
The target is encoded using a separate encoder
\begin{equation} \label{eq:target_encoder}
Z_T = f_{\theta'}(x_T),
\end{equation}
whose parameters $\theta'$ are updated as an exponential moving average (EMA) of the context encoder parameters\cite{mnih2015human,mnih2016asynchronousmethodsdeepreinforcement,lillicrap2019continuouscontroldeepreinforcement,grill2020byol}:
\begin{equation} \label{eq:target_encoder_weights_EMA}
\theta' \leftarrow \tau \theta' + (1-\tau)\theta.
\end{equation}
This asymmetric design, inspired by prior work \cite{grill2020byol}, stabilizes training and prevents representational collapse by ensuring that the target embeddings evolve more slowly than the predictor.

To predict the target representation, JEPA employs a predictor network
\begin{equation} \label{eq:context_predictor}
\hat{Z}_T = g_\phi(Z_C, \xi_T),
\end{equation}
where $\xi_T$ encodes structural (side) information about the target, such as spatial location or temporal index. The predictor is tasked with mapping the context representation to the target embedding without direct access to the target observations themselves.

With these settings, training is then performed by minimizing a regression loss between the predicted and target embeddings:
\begin{equation} \label{eq:JEPA_squared_loss}
\begin{aligned}
\mathcal{L}_{\text{JEPA}}
&= \left\| \hat{Z}_T - Z_T \right\|^2 \\
&= \left\| g_\phi \left(Z_C, \xi_T \right) - Z_T \right\|^2 \\
&= \left\| g_\phi \left(f_\theta(x_C), \xi_T \right)
      - f_{\theta'}(x_T) \right\|^2 \\
&= \left\| g_\phi \left(f_\theta(x_C), \xi_T \right)
      - f_{\tau \theta' + (1-\tau)\theta}(x_T) \right\|^2 ,
\end{aligned}
\end{equation}
where the target encoder parameters are updated as per Eq.~\eqref{eq:target_encoder_weights_EMA}. Unlike generative models, this objective does not require reconstructing observations or modeling pixel-level likelihoods; instead, it enforces consistency between representations, encouraging the latent space to organize itself around predictable relationships in the data.

Although the JEPA loss is typically presented as a mean-squared error, it admits a natural probabilistic interpretation. Specifically, minimizing $\mathcal{L}_{\text{JEPA}}$ is equivalent to maximizing the log-likelihood of $Z_T$ under an isotropic Gaussian distribution centered at $\hat{Z}_T$ with fixed variance (see Appendix.\ref{app:lsq_equivalence_with_MLE} a brief derivation):
\begin{equation} \label{eq:JEPA_loss_prob_explain}
\mathcal{L}_{\text{JEPA}}
 \propto 
- \log p(Z_T \mid Z_C),
\quad
p(Z_T \mid Z_C)
=
\mathcal{N} \left(
Z_T  \middle| 
g_\phi(Z_C, \xi_T),  \sigma^2 I
\right),
\end{equation}
This implicit assumption of a simple predictive distribution motivates the variational extension of JEPA introduced in the next section, in which predictive uncertainty is modeled explicitly.

\section{VJEPA: Variational Formulation of JEPA}
\label{sec:vjepa}

Eq.~\eqref{eq:JEPA_squared_loss} shows that the deterministic JEPA objective is equivalent to maximum likelihood under an isotropic Gaussian predictive model with fixed variance (Eq.~\eqref{eq:JEPA_loss_prob_explain}). This suggests a natural generalization: instead of predicting a single target embedding $\hat Z_T$ (i.e. a point estimate), we can model an explicit \emph{predictive distribution} over possible target embeddings, enabling uncertainty-aware prediction and multi-modal futures. Importantly, as in deterministic JEPA, prediction is conditioned not only on the context representation but also on structural (side) information describing the target.

\paragraph{From deterministic JEPA to VJEPA.}
VJEPA can be understood as a minimal probabilistic extension of deterministic JEPA. Rather than altering the architectural principles of JEPA, VJEPA replaces point prediction in representation space with distributional prediction while preserving the same context-target structure, conditioning mechanism, and training asymmetry. As a preview, we summarise the key differences in Table.\ref{tab:jepa_vs_vjepa}; these features will be progressively introduced in later sections.

\begin{table}[H]
\centering
\footnotesize
\begin{tabular}{lll}
\hline
\textbf{Feature} & \textbf{JEPA} & \textbf{VJEPA} \\
\hline
Target prediction
&
$\hat Z_T = g_\phi(Z_C,\xi_T)$
&
$p_\phi(Z_T \mid Z_C,\xi_T)$
\\[0.4em]

Prediction type
&
Point estimate
&
Predictive distribution
\\[0.4em]

Training objective
&
MSE (equally Gaussian likelihood)
&
Negative log-likelihood $+$ KL
\\[0.4em]

Uncertainty modeling
&
Implicit (fixed variance)
&
Explicit (learned variance / distribution)
\\[0.4em]

Multi-modal futures
&
Not supported
&
Supported
\\[0.4em]

Collapse avoidance
&
Architectural (EMA, asymmetry)
&
Objective-level (predictive mismatch + KL reg)
\\[0.4em]

Probabilistic interpretation
&
Implicit Gaussian
&
Explicit variational model
\\[0.4em]

Compatibility with filtering / control
&
Limited
&
Natural (belief propagation)
\\
\hline
\end{tabular}
\caption{Comparison between deterministic JEPA and its variational extension VJEPA. VJEPA preserves the structural design of JEPA while endowing it with an explicit probabilistic semantics.}
\label{tab:jepa_vs_vjepa}
\end{table}

\subsection{Probabilistic Predictive Model}

Let $x$ be an input sample and $(x_C, x_T)$ be a context/target partition.
As in JEPA, the context is encoded as deterministic $Z_C = f_\theta(x_C)$, see Eq.~\eqref{eq:context_encoder}. In addition, let $\xi_T$ denote side information specifying the structure of the target (e.g. spatial location, temporal index, or masking pattern, etc), as used in Eq.~\eqref{eq:context_predictor}.
VJEPA introduces a probabilistic predictive model over the target representation:
\begin{equation}
\label{eq:vjepa_predictive_model}
p_\phi(Z_T \mid Z_C, \xi_T),
\end{equation}
which generalizes the deterministic predictor $g_\phi(Z_C,\xi_T)$ by allowing the predicted target representation to be stochastic, e.g. $p_\phi(Z_T \mid Z_C, \xi_T)=\mathcal{N}\!\big(\mu_\phi(Z_C, \xi_T),\,\Sigma_\phi(Z_C)\big)$; if $p_\phi(Z_T \mid Z_C, \xi_T)=g_\phi(Z_C,\xi_T)$ is chosen, VJEPA reduces to the deterministic JEPA. The probabilistic, predictive model Eq.~\eqref{eq:vjepa_predictive_model} can be further decomposed, e.g. formulated as a Bayesian posterior (see later Section.\ref{sec:bjepa}).

\textit{Optionally}, one may also posit an observation model
\begin{equation}
\label{eq:vjepa_obs_model_optional}
p_\psi(x_T \mid Z_T),
\end{equation}
yielding the joint factorization
\begin{equation}
\label{eq:vjepa_joint}
p_{\phi,\psi}(Z_T, x_T \mid x_C, \xi_T)
=
p_\phi(Z_T \mid Z_C, \xi_T)\, p_\psi(x_T \mid Z_T).
\end{equation}
In this text, in the spirit of JEPA, we \emph{do not} optimize an observation-level reconstruction objective over $p_\psi(x_T \mid Z_T)$; the learning signal is provided solely through the predictive distribution in representation space, $p_\phi(Z_T \mid Z_C,\xi_T)$.

\subsection{Inference Model from the Target Encoder}

To train the predictive model without reconstructing $x_T$, we require a representation-space ``posterior'' for $Z_T$ given the target observation.
We define an amortized inference distribution using the (EMA) target encoder:
\begin{equation}
\label{eq:vjepa_inference_target}
q_{\theta'}(Z_T \mid x_T),
\end{equation}
which may be instantiated as a diagonal Gaussian whose mean and variance are parameterized by $f_{\theta'}(x_T)$.
The EMA update in Eq.~\eqref{eq:target_encoder_weights_EMA} ensures that the inference target evolves slowly, which stabilizes training and prevents degenerate fixed points.

\subsection{Variational Objective}

Given $(x_C, x_T, \xi_T)$, VJEPA trains the predictive model $p_\phi(Z_T\mid Z_C,\xi_T)$ to match the target-encoder distribution $q_{\theta'}(Z_T\mid x_T)$.
A simple and effective objective is the \emph{regularized negative log-likelihood}:
\begin{equation}
\label{eq:vjepa_objective}
\mathcal{L}_{\text{VJEPA}}
=
\mathbb{E}_{x} 
\mathbb{E}_{Z_T \sim q_{\theta'}(\cdot \mid x_T)}
\Big[-\log p_\phi(Z_T \mid Z_C, \xi_T)\Big]
+
\beta\,
\mathbb{E}_{x} 
\mathrm{KL} \left(q_{\theta'}(Z_T \mid x_T)\,\|\,p(Z_T)\right),
\end{equation}
where $p(Z_T)$ is a fixed prior (e.g. $\mathcal{N}(0,I)$) and $\beta>0$ controls the strength of regularization.

The first term trains a probabilistic predictor in \emph{representation space}, conditioned on both the available context and the structural specification of the target.
The second term prevents trivial or degenerate solutions by encouraging the inferred target representation distribution to remain non-collapsed and well-calibrated relative to the prior.
Together, this objective (i) introduces explicit predictive uncertainty, (ii) permits multi-modal futures through expressive $p_\phi(\cdot\mid Z_C,\xi_T)$, and (iii) yields collapse-avoidance guarantees under mild conditions (see later Section~\ref{sec:properties_of_VJEPA}).

\paragraph{Deterministic JEPA as a special case.}
If we set
\[
q_{\theta'}(Z_T\mid x_T)=\delta(Z_T-f_{\theta'}(x_T))
\]
and
\[
p_\phi(Z_T\mid Z_C,\xi_T)
=
\mathcal{N}(Z_T\mid g_\phi(Z_C,\xi_T), \sigma^2 I)
\]
with fixed $\sigma^2$, then minimizing Eq.~\eqref{eq:vjepa_objective} reduces (up to additive constants) to the squared-loss JEPA objective in Eq.~\eqref{eq:JEPA_squared_loss}.

\subsection{Schematic Overview}

The complete VJEPA architecture, illustrating the probabilistic predictor, target sampling, and the variational loss components, is schematically shown in Fig.\ref{fig:vjepa_architecture}.

\begin{figure}[H]
\centering
\begin{tikzpicture}[
    font=\small\sffamily,
    >=Latex,
    node distance=1.2cm and 1.6cm,
    box/.style={
        draw, rounded corners=2pt,
        minimum width=3.8cm, minimum height=1.05cm,
        text width=3.6cm, align=center
    },
    sbox/.style={
        draw, rounded corners=2pt,
        minimum width=4.2cm, minimum height=1.15cm,
        text width=4.0cm, align=center
    },
    op/.style={
        draw, rounded corners=2pt,
        minimum width=3.0cm, minimum height=0.95cm,
        align=center
    },
    circ/.style={
        draw, circle, minimum size=8mm, inner sep=0pt
    },
    sumcirc/.style={
        draw, circle, minimum size=7mm, inner sep=0pt
    },
    redbox/.style={
        draw=red!80!black, very thick, rounded corners=2pt,
        minimum width=4.6cm, minimum height=1.05cm,
        text width=4.4cm, align=center
    },
    bluebox/.style={
        draw=blue!80!black, very thick, rounded corners=2pt,
        minimum width=3.8cm, minimum height=1.05cm,
        text width=3.6cm, align=center
    },
    arr/.style={->, thick},
    redarr/.style={->, very thick, draw=red!80!black},
    bluearr/.style={->, very thick, draw=blue!80!black}
]

\node[circ] (xC) at (0,0) {$x_C$};
\node[circ] (xT) at (9,0) {$x_T$};

\node[below=0.25cm of xC, font=\footnotesize] {History Context};
\node[below=0.25cm of xT, font=\footnotesize] {Training Target};

\node[box, above=0.9cm of xC] (ctxenc)
{$f_\theta$\\ \footnotesize Context Encoder};

\node[box, above=0.9cm of xT] (tgtenc)
{$f_{\theta'}$\\ \footnotesize Target Encoder};

\node[circ, above=0.9cm of ctxenc] (ZC) {$Z_C$};
\node[circ, above=0.9cm of tgtenc] (qT) {$q_{\theta'}$};

\node[sbox, right=2.4cm of ZC] (pred)
{Probabilistic Predictor\\ \footnotesize $p_\phi(Z_T \mid Z_C,\xi_T)$};

\node[op, above=0.9cm of qT] (sample)
{Sample\\ \footnotesize $Z_T \sim q_{\theta'}$};

\node[redbox, above=2.0cm of pred] (nll)
{NLL Loss \quad $-\log p_\phi(Z_T \mid Z_C,\xi_T)$};

\node[bluebox, right=1.8cm of nll] (kl)
{KL Reg \quad $\mathrm{KL}(q_{\theta'} \,\|\, p_{\text{prior}})$};

\node[sumcirc] (plus) at ($(nll.east)!0.5!(kl.west)$) {$+$};

\draw[arr] (xC) -- (ctxenc);
\draw[arr] (ctxenc) -- (ZC);
\draw[arr] (ZC) -- (pred);

\draw[arr] (xT) -- (tgtenc);
\draw[arr] (tgtenc) -- (qT);
\draw[arr] (qT) -- (sample);

\draw[redarr] (pred.north) -- (nll.south);

\path (pred.north) -- (nll.south) coordinate[pos=0.55] (redmid);
\draw[redarr] (sample.west) -- (redmid);

\draw[bluearr] (qT.east) -- ++(1.6cm,0) -- ++(0,2.2cm) -- (kl.south);

\end{tikzpicture}

\caption{\textbf{Variational JEPA (VJEPA) architecture.}
The context $x_C$ is encoded into $Z_C$, conditioning a probabilistic predictor $p_\phi(Z_T \mid Z_C,\xi_T)$. The target $x_T$ is encoded into a distribution $q_{\theta'}$ (typically via EMA), from which a sample $Z_T$ is drawn. Training minimizes the sum of a negative log-likelihood term (red box) and a KL regularization term (blue box). $f_{\theta'}$ can output sufficient statistics (e.g. mean and covariance for Gaussians) for parameterizing $q_{\theta'}$.}
\label{fig:vjepa_architecture}
\end{figure}
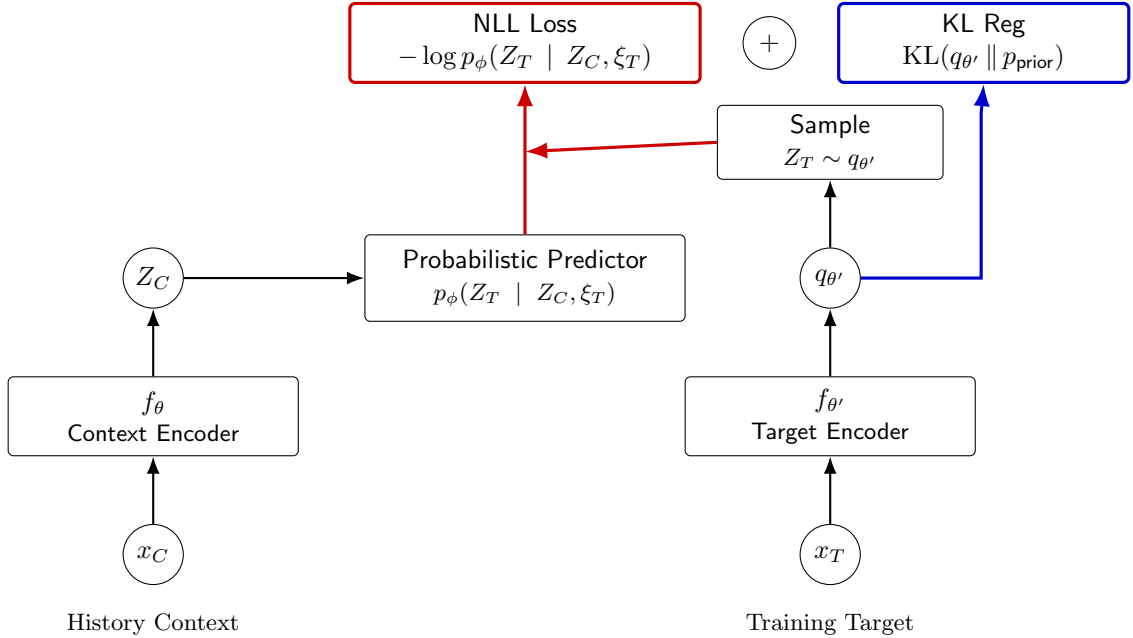

\subsection{Training Procedure}

Algorithm~\ref{alg:vjepa} summarizes VJEPA training.
In practice, the reparameterization trick may be used when $q_{\theta'}$ is continuous, and $p_\phi$ may be implemented as a Gaussian with learned mean and covariance (diagonal or structured), a mixture model, or a normalizing flow, depending on the desired expressivity. A concrete example is given in Section.\ref{sec:toy_experiment}.

\begin{algorithm}[H]
\caption{VJEPA Training}
\label{alg:vjepa}
\begin{algorithmic}[1]
\REQUIRE Dataset $\{x^{(i)}\}_{i=1}^N$, context encoder $f_\theta$, target encoder $f_{\theta'}$, predictive model $p_\phi$, prior $p(Z)$, EMA rate $\tau$, regularization weight $\beta$
\FOR{each minibatch $\{x\}$}
    \STATE Sample a context-target partition $(x_C, x_T)$ and corresponding target structure $\xi_T$
    \STATE Encode context:
    $
    Z_C \leftarrow f_\theta(x_C)
    $
    \STATE Compute inference distribution from target:
    $
    q_{\theta'}(Z_T \mid x_T)
    $
    \STATE Sample target representation (e.g. using reparameterization):
    $
    Z_T \sim q_{\theta'}(Z_T \mid x_T)
    $
    \STATE Evaluate predictive distribution:
    $
    p_\phi(Z_T \mid Z_C, \xi_T)
    $
    \STATE Compute variational loss:
    $
    \mathcal{L}
    =
    -\log p_\phi(Z_T \mid Z_C, \xi_T)
    + \beta \,\mathrm{KL} \left(q_{\theta'}(Z_T \mid x_T)\,\|\,p(Z_T)\right)
    $
    \STATE Update $\theta$ and $\phi$ via gradient descent on $\mathcal{L}$
    \STATE Update target encoder via EMA:
    $
    \theta' \leftarrow \tau \theta' + (1-\tau)\theta
    $
\ENDFOR
\end{algorithmic}
\end{algorithm}

\paragraph{Remark.}
Algorithm~\ref{alg:vjepa} reduces to standard JEPA training when the inference distribution
\(
q_{\theta'}(Z_T \mid x_T)
\)
collapses to a Dirac delta,
\(
q_{\theta'}(Z_T \mid x_T)=\delta \left(Z_T-f_{\theta'}(x_T)\right),
\)
and the predictive distribution is chosen as an isotropic Gaussian with fixed variance,
\(
p_\phi(Z_T \mid Z_C,\xi_T)=\mathcal{N} \left(Z_T \mid g_\phi(Z_C,\xi_T), \sigma^2 I\right).
\)
In this case, minimizing the VJEPA objective in Eq.~\eqref{eq:vjepa_objective} is equivalent (up to constants) to minimizing the squared-loss JEPA objective in Eq.~\eqref{eq:JEPA_squared_loss}.

\subsection{Prediction and Uncertainty Propagation}

After training, VJEPA can be used as a predictive model in representation space.
Given a new input $x_C$ and target specification $\xi_T$, we first compute the context representation (i.e. the context encoder Eq.~\eqref{eq:context_encoder})
\[
Z_C = f_\theta(x_C),
\]
and then form the predictive distribution (i.e. the predictor Eq.~\eqref{eq:vjepa_predictive_model})
\[
p_\phi(Z_T \mid Z_C, \xi_T).
\]

\paragraph{Point prediction.}
A point estimate of the target representation may be obtained from the predictive distribution
$p_\phi(Z_T \mid Z_C, \xi_T)$ in several standard ways.
A common choice is the predictive mean,
\[
\hat Z_T^{\text{mean}}
=
\mathbb{E}_{p_\phi(Z_T \mid Z_C, \xi_T)}[Z_T],
\]
which reduces to the deterministic JEPA predictor when $p_\phi$ is Gaussian with fixed variance.

Alternatively, a maximum \emph{a posteriori} (MAP) estimate may be used:
\[
\hat Z_T^{\text{MAP}}
=
\arg\max_{Z_T}  p_\phi(Z_T \mid Z_C, \xi_T),
\]
which coincides with the predictive mean for unimodal Gaussian models but differs for
multi-modal or heavy-tailed distributions.

More generally, one may consider a \emph{mode} prediction,
\[
\hat Z_T^{\text{mode}}
=
\operatorname*{mode} \left[p_\phi(Z_T \mid Z_C, \xi_T)\right],
\]
which selects a single most probable latent realization. For expressive predictive models (e.g. mixture models or normalizing flows), MAP or mode predictions allow VJEPA to select a representative future consistent with the learned multi-modal predictive structure.

\paragraph{Distributional prediction and uncertainty representation.}
Rather than using a single predicted embedding, VJEPA represents future uncertainty
explicitly through the predictive distribution $p_\phi(Z_T \mid Z_C, \xi_T)$. Downstream tasks may operate directly on this distribution, either by propagating
summary statistics (e.g. moments) when $p_\phi$ admits a closed form, or by approximating expectations using samples. In this sense, VJEPA produces a \emph{belief} over future representations rather than a point forecast. A general and flexible way to work with the predictive belief is via Monte Carlo sampling:
\[
Z_T^{(m)} \sim p_\phi(Z_T \mid Z_C, \xi_T),
\qquad m = 1,\dots,M.
\]
where $M$ is the total number of samples. These samples represent alternatively plausible future representations consistent with the observed context and target specification. Expectations of downstream quantities can then be approximated as
\[
\mathbb{E}[f(Z_T)]
 \approx 
\frac{1}{M}\sum_{m=1}^M f \left(Z_T^{(k)}\right),
\]
allowing uncertainty-aware prediction even when $p_\phi$ is complex (e.g. mixtures or normalizing flows).

\paragraph{Multi-step prediction and uncertainty propagation.}
For sequential prediction, VJEPA propagates uncertainty by integrating the predictive model in latent space. Given a belief $p(Z_t)$, the predictive belief at a future step $t+\Delta$ is
\begin{equation} \label{eq:VJEPA_multi_step_prediction}
    p(Z_{t+\Delta})
    =
    \int p_\phi(Z_{t+\Delta} \mid Z_t, \xi_{t+\Delta})\, p(Z_t)\, dZ_t,
\end{equation}
which defines a belief-state evolution analogous to \textit{Bayesian filtering}.
In practice, this integral may be approximated by moment propagation (when available) or by Monte Carlo rollouts:
\[
Z_{t+1}^{(m)} \sim p_\phi(Z_{t+1} \mid Z_t^{(m)}, \xi_{t+1}),
\qquad m=1,\dots,M,
\]
which yields a distribution over latent trajectories rather than a single deterministic path. This form of distributional rollout underlies the planning and control algorithms introduced in later sections, e.g. model predictive control (MPC).

\paragraph{Connection to ensemble world models.}
Sampling-based prediction in VJEPA is closely related to ensemble world models commonly used in model-based reinforcement learning.
In ensemble methods, multiple deterministic dynamics models
$\{f_{\phi^{(m)}}\}_{m=1}^M$
are trained independently, and uncertainty is approximated by disagreement across model predictions.
In contrast, VJEPA represents uncertainty \emph{within a single probabilistic model}
$p_\phi(Z_T \mid Z_C,\xi_T)$, where stochasticity arises from an explicit predictive distribution rather than parameter diversity.
Monte Carlo sampling from VJEPA therefore plays a role analogous to sampling from an ensemble, while avoiding the computational overhead and calibration issues associated with maintaining multiple models.
This perspective highlights VJEPA as a distributional generalization of ensemble world models operating directly in representation space.

\subsection{Collapse Avoidance} \label{sec:properties_of_VJEPA}
Here we formalize the claim that VJEPA discourages representational collapse.

\begin{definition} [Collapse]
    We say the context representation \emph{collapses} if there exists a constant $c\in\mathbb{R}^d$ such that
    \[
    f_\theta(x_C) = c \qquad \forall x_C.
    \]
\end{definition}

\begin{theorem}[No Collapsed Global Optimum under Target Diversity]
\label{thm:collapse_avoidance}
Consider the VJEPA objective in Eq.~\eqref{eq:vjepa_objective}. Assume:
\begin{enumerate}[label=(\roman*)]
    \item (\textbf{Target diversity}) There exist target inputs $x_T$ and $x_T'$ such that $q_{\theta'}(\cdot\mid x_T) \neq q_{\theta'}(\cdot\mid x_T')$ (as distributions).
    \item (\textbf{Nontrivial conditioning}) The predictive family $\{p_\phi(\cdot\mid Z_C,\xi_T)\}$ is such that, for some $Z_C\neq Z_C'$ and the same $\xi_T$, the induced distributions can differ: $p_\phi(\cdot\mid Z_C,\xi_T) \neq p_\phi(\cdot\mid Z_C',\xi_T)$ for some $\phi$.
\end{enumerate}
Then any globally optimal solution of the VJEPA training objective Eq.~\eqref{eq:vjepa_objective} is non-collapsed, i.e. $f_\theta(x_C)$ cannot be constant over all contexts at a global minimum.
\end{theorem}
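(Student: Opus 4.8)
The plan is to argue by contradiction: assume some global minimizer $(\theta^\star,\phi^\star)$ has a collapsed context encoder, $f_{\theta^\star}(x_C)=c$ for all $x_C$, and then exhibit a strictly better non-collapsed competitor, contradicting optimality. The first move is to isolate the only term that the optimization variables $(\theta,\phi)$ actually influence. Treating the EMA teacher $q_{\theta'}$ as a fixed stop-gradient target, the regularizer $\beta\,\mathbb{E}_x\,\mathrm{KL}(q_{\theta'}(\cdot\mid x_T)\,\|\,p(Z_T))$ depends only on $\theta'$ and the fixed prior $p(Z_T)$, hence is constant with respect to $(\theta,\phi)$. Consequently a global minimizer of $\mathcal{L}_{\text{VJEPA}}$ is exactly a global minimizer of the negative log-likelihood (NLL) term alone.

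Next I would rewrite that NLL term as an expected cross-entropy and split it with Gibbs' identity:
\[
\mathbb{E}_x\,\mathbb{E}_{Z_T\sim q_{\theta'}(\cdot\mid x_T)}\!\big[-\log p_\phi(Z_T\mid Z_C,\xi_T)\big]
=
\mathbb{E}_x\big[H(q_{\theta'}(\cdot\mid x_T))\big]
+
\mathbb{E}_x\big[\mathrm{KL}\big(q_{\theta'}(\cdot\mid x_T)\,\|\,p_\phi(\cdot\mid Z_C,\xi_T)\big)\big],
\]
where $H(\cdot)$ is the (differential) entropy. The entropy term is again independent of $(\theta,\phi)$, so minimizing the objective is equivalent to minimizing the nonnegative quantity $\mathbb{E}_x\big[\mathrm{KL}\big(q_{\theta'}(\cdot\mid x_T)\,\|\,p_\phi(\cdot\mid Z_C,\xi_T)\big)\big]$, which attains its unconstrained floor of $0$ precisely when the predictor reproduces the teacher, $p_\phi(\cdot\mid Z_C,\xi_T)=q_{\theta'}(\cdot\mid x_T)$, on the data.

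The core of the argument is to show the collapsed solution cannot reach this floor. Under collapse $Z_C\equiv c$, so for each fixed structural index $\xi_T$ the predictor can emit only one distribution $p_{\phi^\star}(\cdot\mid c,\xi_T)$, regardless of which context generated the target. By target diversity (i) I would select two instances that share the same $\xi_T$ yet carry distinct teacher distributions $q_{\theta'}(\cdot\mid x_T)\neq q_{\theta'}(\cdot\mid x_T')$; a single distribution cannot equal both, so strict convexity of the KL (Gibbs' inequality) forces a strictly positive residual. Concretely, the best collapsed predictor at that index is the mixture (the weighted average of the teacher distributions sharing $\xi_T$), whose residual expected KL is bounded below by some $\delta>0$. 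Finally, nontrivial conditioning (ii) supplies the strictly better competitor: since the predictive family can realize different outputs for $Z_C\neq Z_C'$ at the same $\xi_T$, I pick a non-collapsed $f_\theta$ separating those two contexts and a predictor steering each code toward its matching target, driving the divergence on those instances below $\delta$ while not increasing it elsewhere. This yields a total loss strictly below the collapsed value, contradicting global optimality, so $f_{\theta^\star}$ cannot be constant.

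I expect the main obstacle to be the step that pairs assumptions (i) and (ii) into two data configurations sharing $\xi_T$ but demanding different predictions. If $\xi_T$ alone already disambiguated every target, a collapsed encoder could still achieve zero divergence by routing through $\xi_T$, and the conclusion would fail; the real substance of the hypotheses is that the relevant diversity is resolvable \emph{only} through the context code $Z_C$, which collapse destroys. Making this quantitative — exhibiting the gap $\delta>0$ via the M-projection/mixture argument and verifying that the competitor does not worsen the remaining instances — is where the strict convexity of KL and the expressivity guaranteed by (ii) do the real work.
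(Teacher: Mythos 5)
Your proof is correct and follows essentially the same route as the paper's: both reduce the collapsed case to an unconditional predictor per $\xi_T$, identify its optimum as the aggregated (mixture) target distribution, and locate the strict gap in the expected KL from the conditionals $q_{\theta'}(\cdot\mid x_T)$ to that mixture, which is exactly the conditional mutual information $I_q(Z_T;X_T\mid\xi_T)$ that the paper's Step~3 invokes. Your closing caveat is well taken: the literal statement of assumption (i) does not guarantee that the two diverse targets share a $\xi_T$, and both your argument and the paper's implicitly strengthen (i) to diversity that is unresolvable by $\xi_T$ alone (i.e.\ $I_q(Z_T;X_T\mid\xi_T)>0$ for some $\xi_T$ of positive probability), without which a collapsed encoder could indeed route all discrimination through the side information.
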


\paragraph{Proof sketch.}
If $f_\theta(x_C)\equiv c$, then the predictor cannot use the context: for each $\xi_T$ it reduces to an \emph{unconditional} distribution $p_\phi(Z_T\mid c,\xi_T)$. The best possible choice of such an unconditional predictor (for fixed $\xi_T$) minimizes the cross-entropy with the aggregated target distribution
$q_{\theta'}(Z_T\mid \xi_T):=\mathbb{E}_{x_T\mid \xi_T}[q_{\theta'}(Z_T\mid x_T)]$, hence achieves
\[
\inf_{p} \ \mathbb{E}_{x_T\mid \xi_T}\,\mathbb{E}_{Z_T\sim q_{\theta'}(\cdot\mid x_T)}
[-\log p(Z_T)]
=
H_{q}(Z_T\mid \xi_T),
\]
which equals $H_q(Z_T\mid X_T,\xi_T) + I_q(Z_T;X_T\mid \xi_T)$, where $I(\cdot;\cdot)$ denotes mutual information (see Appendix.\ref{app:info_theory_identities}). By Assumption (i), we have $I_q(Z_T;X_T\mid \xi_T)>0$ (for at least one $\xi_T$ with nonzero probability), so the collapsed predictor incurs a strictly positive irreducible gap relative to a predictor that can condition on informative $Z_C$.
Assumption (ii) ensures the model class can exploit non-collapsed $Z_C$ to reduce the first term in Eq.~\eqref{eq:vjepa_objective}. The KL regularizer further discourages degenerate target encodings by penalizing pathological $q_{\theta'}(\cdot\mid x_T)$, but is not the main source of the strict gap. 
A more detailed proof can be found in Appendix.\ref{app:collapse_avoidance_proof}.
\hfill $\square$

\begin{remark}
    Theorem~\ref{thm:collapse_avoidance} shows that collapse is not merely discouraged but is fundamentally incompatible with global optimality of the VJEPA objective (Eq.~\eqref{eq:vjepa_objective}) under target diversity. A collapsed context representation forces the predictor to ignore informative conditioning, yielding an irreducible excess loss relative to non-collapsed solutions. Therefore, collapse avoidance in VJEPA arises from the structure of the predictive objective itself, rather than relying solely on architectural asymmetry, stop-gradient operations, or EMA updates.
\end{remark}

\section{Time-indexed VJEPA as a Latent Space Dynamical System}
\label{sec:JEPA_as_dynamics}

VJEPA always learns a conditional predictive relation in representation space of the form (Eq.~\eqref{eq:vjepa_predictive_model}):
\[
Z_T  \sim  p_\phi(Z_T \mid Z_C, \xi_T),
\]
where $Z_C$ is a context representation, $Z_T$ is a target representation, and $\xi_T$ specifies structural information about the target (e.g. spatial location, temporal index, or masking pattern). However, \emph{whether this predictive relation constitutes a latent dynamical system} depends on how the context-target pair is constructed and how $\xi_T$ is interpreted. JEPA is therefore not inherently a sequential or observation-likelihood model; in particular, JEPA does \emph{not} inherently impose sequential or Markovian structure. This distinction is important for understanding both the capabilities and the limitations of existing JEPA variants, including Video JEPA \cite{bardes2024vjepa,bardes2024vjepa_openReview}.

Here we formalize a \emph{conditional} dynamical-system perspective on VJEPA, introducing a \emph{latent dynamical system} in representation space when the context and target are indexed by time. Under this time-indexed formulation, VJEPA defines a latent state, a predictive transition model, and a belief-propagation mechanism, enabling filtering-style updates and control directly in latent space. Table.\ref{tab:jepa_dynamics_cases} summarizes the different regimes in which JEPA-style objectives \footnote{The temporal formulation in this section also applied to JEPA.} are commonly used.

\begin{table}[H]
\centering
\footnotesize
\begin{tabular}{lccc}
\hline
\textbf{} & \textbf{What is Predicted} & \textbf{Structure Learned} & \textbf{Dynamical System?} \\
\hline
Image JEPA (I-JEPA \cite{lejepa2022ijepa})
& Masked image regions
& Spatial / semantic relations
& \texttimes \\[0.4em]

Clip-level V-JEPA \cite{bardes2024vjepa}
& Masked video blocks
& Intra-clip spatiotemporal correlations
& \texttimes\ (not compositional) \\[0.4em]

\textbf{Time-indexed JEPA}
& $Z_{t+\Delta}$ from $Z_t$
& Latent state transitions across time
& \checkmark \\
\hline
\end{tabular}
\caption{JEPA always learns conditional predictive relations, but it induces a latent
dynamical system only when contexts and targets are explicitly indexed by time and predictions are reused across steps. Clip-level V-JEPA captures short-range correlations but does not, by itself, define a compositional state transition model.}
\label{tab:jepa_dynamics_cases}
\end{table}

In this section, we focus on the \emph{time-indexed} formulation of JEPA/VJEPA, under which the learned predictive model can be interpreted as a latent state-space model. This perspective will later justify belief propagation, filtering-style updates, and planning directly in representation space, while remaining fully consistent with the original JEPA training objective.

\subsection{Background: State-Space Models}

A (controlled) state-space model posits latent states $\{s_t\}_{t=1}^T$ and observations $\{x_t\}_{t=1}^T$ with a transition model and an observation model:
\begin{equation}
\label{eq:ssm_factorization}
p(s_{1:T},x_{1:T}\mid u_{1:T-1})
=
p(s_1)\prod_{t=1}^{T-1} p(s_{t+1}\mid s_t,u_t)\prod_{t=1}^T p(x_t\mid s_t),
\end{equation}
where $u_t$ is a control/action input. Classical HMMs (see Appendix.\ref{app:hmmlink}) are the special case where $s_t$ is discrete and transitions/emissions belong to parametric families; modern nonlinear state-space models allow continuous $s_t$ and often use approximate inference.

The defining feature of \eqref{eq:ssm_factorization} is that prediction and control can be performed by propagating a \emph{belief} over latent states $p(s_t\mid x_{\le t},u_{<t})$ forward through the transition model (also sometimes termed the \textit{transition kernel}).

\subsection{From JEPA Masking to Temporal Dynamics}

JEPA is defined by choosing, for each sample $x$, a context-target partition $(x_C,x_T)$ and a target specification $\xi_T$ (e.g. the spatial location, temporal index, or masking pattern, etc). In sequential settings, $x$ can represent a window of a video or a trajectory segment, and the target can correspond to a future time index. Concretely, for a temporal horizon $\Delta$, one may take
\[
x_C \equiv x_{\le t} \quad\text{and}\quad x_T \equiv x_{t+\Delta},
\qquad \xi_T \equiv (t+\Delta,\Delta),
\]
so that predicting the target embedding corresponds to predicting a (arbitrary) future latent state \footnote{We term the time-indexed JEPA a \emph{sequential} (or \emph{temporal}) JEPA. A characteristic feature of JEPA is that it can be trained to predict representations at a \emph{chosen} future horizon $\Delta$ directly (i.e. skip-time prediction), rather than only learning one-step transitions that must be iterated. This capability is complementary to (not incompatible with) Markovian and diffusion models, and it does not require an autoregressive likelihood factorization over observations.}. Let the JEPA encoder define the latent state
\begin{equation}
\label{eq:latent_state_def}
Z_t  :=  f_\theta(x_{\le t}),
\end{equation}
where $x_{\le t}$ denotes the available history (or the context crop/patch/window).
Similarly, the target encoder produces a representation of the target observation
\[
Z_{t+\Delta}  :=  f_{\theta'}(x_{t+\Delta}),
\]
used as the training target (with EMA parameters $\theta'$ as in Eq.~\eqref{eq:target_encoder_weights_EMA}).

\subsection{Latent Transition Model Induced by JEPA/VJEPA}

In deterministic JEPA, the predictor $g_\phi$ induces a latent transition (prediction)
\begin{equation}
\label{eq:jepa_dynamics_deterministic}
\hat Z_{t+\Delta}
=
g_\phi(Z_t,\xi_{t+\Delta}),
\end{equation}
which can be read as a \emph{deterministic} latent dynamics model in representation space. In VJEPA, this becomes an explicit \emph{stochastic} transition:
\begin{equation}
\label{eq:vjepa_dynamics}
Z_{t+\Delta} \sim p_\phi(Z_{t+\Delta}\mid Z_t,\xi_{t+\Delta}),
\end{equation}
which is exactly the state-transition component of a (latent) state-space model, except that it is learned \emph{without} requiring an observation likelihood. Note that, JEPA/VJEPA allows $\Delta > 1$, i.e. predicting/transitioning to an arbitrary future (or masked) latent state.

To accommodate control, we let $u_{t:t+\Delta-1}$ denote control inputs between $t$ and $t+\Delta$. Then the most general controlled version of \eqref{eq:vjepa_dynamics} is
\begin{equation}
\label{eq:vjepa_controlled_dynamics}
Z_{t+\Delta} \sim p_\phi \big(Z_{t+\Delta}\mid Z_t,\xi_{t+\Delta},u_{t:t+\Delta-1}\big),
\end{equation}
where $\xi_{t+\Delta}$ can include the time index and horizon $\Delta$.

\paragraph{Optional observation model.}
For completeness, one may optionally posit a decoder (as in Eq.~\eqref{eq:vjepa_obs_model_optional})
\begin{equation}
\label{eq:ssm_obs_optional}
x_{t+\Delta} \sim p_\psi(x_{t+\Delta}\mid Z_{t+\Delta}),
\end{equation}
which would turn the latent process in Eq.~\eqref{eq:vjepa_dynamics} into a fully generative
state-space model. In this case, the implied generative structure is
\[
Z_t  \longrightarrow  Z_{t+\Delta}  \longrightarrow  x_{t+\Delta},
\]
i.e. future observations are generated from future latent states predicted from the current
latent state. This contrasts with autoregressive observation models, which directly factorize
\[
Z_t  \longrightarrow  x_t  \longrightarrow  x_{t+1},
\]
and therefore require modeling the full conditional distribution
$p(x_{t+1}\mid x_{\le t})$.

Importantly, the central JEPA/VJEPA design principle is that learning proceeds
\emph{without} optimizing the observation-level likelihood term
$\log p_\psi(x_{t+\Delta}\mid Z_{t+\Delta})$.
The decoder is indexed at $t+\Delta$ to mirror the skip-time prediction structure of JEPA
(Eq.~\eqref{eq:vjepa_dynamics}); when $\Delta=1$, this formulation reduces to the standard
one-step state-space model.

\subsection{Sequential but Not Autoregressive}
\label{sec:non_autoregressive}

We clarify here a common confusion of equating ``sequential modeling'' with ``autoregressive modeling''. Autoregression refers specifically to a likelihood factorization over observations:
\begin{equation}
\label{eq:autoregressive_factorization}
p(x_{1:T})
=
\prod_{t=1}^T p(x_t\mid x_{<t}),
\end{equation}
which underlies a wide class of generative sequence models. This includes classical probabilistic models such as Hidden Markov Models (HMMs, see Appendix.\ref{app:hmmlink}) \cite{rabiner1989hmm} and linear Gaussian state-space models (Kalman filters) \cite{kalman1960new}, as well as modern deep generative models such as diffusion models \cite{ho2020DDPM}. Although HMMs and Kalman filters introduce latent states, marginalizing these states still yields an autoregressive likelihood over observations \footnote{
Many classical sequence models are \emph{autoregressive at the observation level}, even when latent states are introduced. For example, a HMM defines
$
p(s_{1:T},x_{1:T})
=
p(s_1)\prod_{t=2}^T p(s_t\mid s_{t-1})\prod_{t=1}^T p(x_t\mid s_t),
$
which implies, after marginalizing latent states $\{s_t\}$,
$
p(x_{1:T})=\prod_{t=1}^T p(x_t\mid x_{<t}).
$
Linear Gaussian state-space models (Kalman filters) similarly assume latent dynamics
$
z_{t+1}=Az_t+\epsilon_t,  x_t=Cz_t+\delta_t,
$
and also induce an autoregressive observation likelihood
$
p(x_{1:T})=\prod_t p(x_t\mid x_{<t}).
$
Modern generative models such as diffusion models \cite{ho2020DDPM} and autoregressive transformers \cite{vaswani2023attentionneed} make this factorization explicit. Thus, autoregression is a property of \emph{observation likelihood modeling}, not of temporal dynamics. JEPA introduces temporal structure by predicting latent representations directly, without defining or factorizing an observation-level likelihood.
}. Optimizing Eq.~\eqref{eq:autoregressive_factorization} forces the model to account for the full entropy of observations (e.g. textures, noise, syntax, etc), including aspects irrelevant to prediction for downstream tasks such as control.

Instead, JEPA/VJEPA defines a predictive model in \emph{latent space} (Eq.~\eqref{eq:vjepa_controlled_dynamics}):
\begin{equation}
\label{eq:latent_prediction_not_ar}
p(Z_{t+\Delta}\mid Z_t,\xi_{t+\Delta},u_{t:t+\Delta-1}),
\end{equation}
and learns representations by matching predicted latent states to target-encoder latents. Importantly, this does \emph{not} require specifying or factorizing an observation likelihood $p(x_{t+1}\mid x_{\le t})$. Thus, JEPA can be sequential (it supports multi-step prediction and belief propagation in $Z$) without being autoregressive in $x$.

\paragraph{Belief propagation in latent space.}
Given a belief $p(Z_t)$, sequential prediction (and decision) proceeds by pushing the belief through the latent transition:
\begin{equation}
\label{eq:latent_belief_propagation}
p(Z_{t+\Delta})
=
\int p_\phi \big(Z_{t+\Delta}\mid Z_t,\xi_{t+\Delta},u_{t:t+\Delta-1}\big)\, p(Z_t)\, dZ_t,
\end{equation}
matching Eq.~\eqref{eq:VJEPA_multi_step_prediction} and making explicit the analogy to filtering and planning.

\subsection{Clip-Level Prediction (V-JEPA) \textit{vs.} Sequential Dynamics}

It is important to distinguish between \emph{latent prediction} and \emph{latent dynamics}. In its current instantiation, V-JEPA \cite{bardes2024vjepa,bardes2024vjepa_openReview} is trained to predict masked spatio-temporal regions within a fixed video clip, using a large masked context that may already contain information from both past and future frames \cite{bardes2024vjepa}. As a result, the learned predictor is not explicitly constrained to represent a compositional, step-by-step temporal transition model.

This design choice has been noted in recent discussions  \cite{bardes2024vjepa_openReview} of V-JEPA: while V-JEPA captures rich temporal cues and yields strong representations for downstream video understanding tasks, it does not enforce sequential consistency across multiple prediction steps, nor does it learn a Markovian latent transition operator that can be iterated forward in time. In particular, predictions at different horizons are made independently, and the training objective does not require that
\[
Z_{t+2} \approx p_\phi(\,\cdot \mid Z_{t+1}, \xi_{t+2})
\quad\text{with}\quad
Z_{t+1} \sim p_\phi(\,\cdot \mid Z_t, \xi_{t+1}).
\]

The dynamical-system perspective adopted here makes this distinction explicit.
By parameterizing JEPA/VJEPA with a time-indexed context $Z_t = f_\theta(x_{\le t})$ and a horizon-dependent target specification $\xi_{t+\Delta}$,
the predictor (Eq.~\eqref{eq:vjepa_dynamics})
$
p_\phi(Z_{t+\Delta}\mid Z_t,\xi_{t+\Delta})
$
can be interpreted as a latent transition model.
When this transition is applied recursively, it induces a belief-state evolution analogous
to classical state-space models, enabling filtering, multi-step prediction, and planning
directly in representation space.

From this perspective, clip-level V-JEPA can be seen as learning a powerful
\emph{one-shot latent predictor}, whereas the sequential formulation studied here extends JEPA/VJEPA into a latent dynamical system. This extension does not contradict the original V-JEPA design, but rather clarifies the additional structural assumptions required to support temporal composition, uncertainty propagation, and control.

\paragraph{Summary.}
Time-indexed JEPA/VJEPA can be viewed as learning a latent state-space model in which:
(i) the encoder defines the latent state $Z_t$,
(ii) the predictor defines a latent transition model,
and (iii) optional observation modeling is decoupled from representation learning.
This is the mathematical sense in which JEPA is sequential but not autoregressive.

\section{JEPA/VJEPA for Control}
\label{sec:jepa_control}

Having interpreted JEPA/VJEPA as defining a latent dynamical system in representation space, we now show how such latent predictive states can be
used for \textit{planning and control}. The key message is classical in partially observable control: \emph{optimal control does not require reconstructing observations, but only a sufficient belief (information state) for predicting future consequences of actions}.

\subsection{Background: POMDPs and Belief-State Control}

A partially\footnote{
POMDP \cite{POMDP_lovejoy,kaelbling1998planning} generalizes an MDP to settings where the agent cannot directly observe the true system state. The environment evolves according to Markovian dynamics in a latent state space, but the agent only receives observations generated probabilistically from the underlying state. As a result, the agent must maintain a \emph{belief state}, which is a probability distribution over latent states, based on the history of observations and actions. Policies in a POMDP therefore map from belief states (or equivalently observation histories) to actions, rather than from fully observed states as in an MDP.} observable Markov decision process (POMDP) is given by
\[
\mathcal{M} = (\mathcal{S},\mathcal{U},\mathcal{X},P,R,\gamma),
\]
with latent state $s_t\in\mathcal{S}$, action $u_t\in\mathcal{U}$, observation $x_t\in\mathcal{X}$, transition kernel $P(s_{t+1}\mid s_t,u_t)$, reward (utility, or negative cost) $R(s_t,u_t)$, and discount $\gamma\in(0,1)$. The agent does not observe $s_t$ directly and must act based on history
\[
h_t := (x_{1:t},u_{1:t-1}).
\]
A standard result is that an \emph{information state} (a Bayesian belief state) is sufficient for optimal control \cite{kaelbling1998planning,smallwood1973optimal,sondik1978optimal,astrom1965optimal}:
\[
b_t(s) := p(s_t=s \mid h_t),
\]
then there exists an optimal policy that depends on history only through $b_t$:
\begin{equation}
\label{eq:pomdp_belief_opt_policy}
\pi^\star(u_t\mid h_t) = \pi^\star(u_t\mid b_t).
\end{equation}
Thus, control reduces to propagating and planning in belief space rather than observation space.

The sufficiency of the \textit{belief state} (or \textit{information state}) for optimal control is a foundational result established across the control theory and operations research literature. Åström (1965) \cite{astrom1965optimal} originally demonstrated that the conditional probability distribution of the latent state serves as a sufficient statistic for the complete history of actions and observations, thereby transforming the partially observable problem into a completely observable Markovian problem over the space of distributions. This theoretical basis was operationally expanded by Smallwood and Sondik (1973) \cite{smallwood1973optimal} and Sondik (1978) \cite{sondik1978optimal}, who proved the existence of optimal policies that depend exclusively on this "information vector" for both finite and infinite horizon settings, deriving the associated dynamic programming equations. Synthesizing these classical results for artificial intelligence planning, Kaelbling et al. (1998) \cite{kaelbling1998planning} re-affirm that because the belief state encapsulates all necessary historical information, the control problem formally reduces to solving a "belief MDP" which is a fully observable Markov decision process defined over the continuous space of probability distributions. These collective work confirm that the optimal policy need only map the current belief to an action $\pi^\star(u_t\mid b_t)$, and justify the reduction of general history-based planning to belief-space planning.

\subsection{JEPA/VJEPA Latent State as a Predictive Information State}

JEPA/VJEPA does not explicitly model $p(x_t\mid s_t)$ and therefore does not construct the Bayesian belief $b_t$ in $\mathcal{S}$. Instead, it learns a latent representation $Z_t$ from available context/history and equips it with a predictive transition model (Section~\ref{sec:JEPA_as_dynamics}):
\[
Z_t = f_\theta(x_{\le t}), 
\qquad
Z_{t+\Delta} \sim p_\phi(Z_{t+\Delta}\mid Z_t,\xi_{t+\Delta},u_{t:t+\Delta-1}).
\]
The control-relevant requirement is not that $Z_t$ reconstructs $x_t$, but that it is \emph{predictively sufficient} for evaluating future outcomes under candidate actions.

\paragraph{Predictive sufficiency (latent information state).}
We call $Z_t$ a \textit{predictive information state} for control (over horizon $H$) if for any action sequence $u_{t:t+H-1}$, the conditional distribution of future task variables depends on history only via $Z_t$. A convenient instantiation is to require that $Z_t$ predicts future latent representations:
\begin{equation}
\label{eq:predictive_sufficiency_latent}
p(Z_{t+1:t+H}\mid h_t,u_{t:t+H-1})
=
p(Z_{t+1:t+H}\mid Z_t,u_{t:t+H-1}),
\end{equation}
at least approximately under the learned model. This factorization is precisely the notion of predictive sufficiency formalized in the next subsection, where we show that such a representation is sufficient for optimal control.

\paragraph{Why reconstruction is unnecessary.}
In model-based control, planning means using a predictive model to compare candidate actions by their anticipated consequences \footnote{Concretely, a planner simulates candidate futures under different action sequences and selects the action (or sequence) that optimizes an objective such as expected reward or cost.}. If the task objective can be expressed in terms of predictive features (for example via a learned cost head on top of $Z$), then planning can be performed entirely in latent space. An observation decoder $p_\psi(x_t\mid Z_t)$ is therefore optional: it may aid visualization or auxiliary tasks, but it is not required for the control computation itself. While JEPA/VJEPA does not \emph{guarantee} that $Z_t$ is a sufficient information state without additional assumptions, it is designed to \emph{encourage} predictive sufficiency by learning an amortized predictive state together with a transition model $p_\phi(Z_{t+\Delta}\mid Z_t,\xi_{t+\Delta},u_{t:t+\Delta-1})$ across horizons. Under standard realizability conditions, namely that the existence of a true predictive information state $\tilde Z_t$ (e.g. a belief state or predictive state representation) that renders future task-relevant variables conditionally independent of history, and that $q_{\theta'}$ and $p_\phi$ are sufficiently expressive and correctly specified, a near-optimal VJEPA solution can make $Z_t=f_\theta(x_{\le t})$ an information-preserving transformation of $\tilde Z_t$ (up to approximation error). Consequently, $Z_t$ inherits the control-relevant predictive sufficiency of $\tilde Z_t$ \cite{Sutton2001,boots2011psr}. These assumptions are standard in predictive-state and representation-learning analyses, so JEPA/VJEPA can be viewed as replacing explicit belief-state estimation with representation learning while preserving the information needed for optimal control.

\subsection{Predictive Sufficiency for Control (Formalization)}
\label{sec:predictive_sufficiency_control}

We now formalize the notion of predictive sufficiency introduced above and state the standard control-theoretic consequence for optimal policies. Consider a POMDP with latent state $s_t$, observation $x_t$, and control $u_t$ (Section~\ref{sec:jepa_control}). Let the agent’s history be $h_t=(x_{1:t},u_{1:t-1})$, and let the (post-action) stage cost be $c_{t+1}=c(s_{t+1},u_t)$ (or any task cost measurable with respect to the environment’s trajectory).

\paragraph{From latent predictive sufficiency to control-relevant sufficiency.}
In control, we care about predicting \emph{cost-relevant} consequences of actions. If the stage cost can be written as a (possibly learned) function of latent states, e.g. $c_{t+k+1} = \ell(Z_{t+k+1},u_{t+k})$ (or $\ell(h(Z_{t+k+1}),u_{t+k})$ for a cost head $h$), then the latent factorization in Eq.~\eqref{eq:predictive_sufficiency_latent} implies that the conditional law of any cumulative cost over horizon $H$ depends on history only through $Z_t$. This motivates the following definition:

\begin{definition}[Control-Relevant Predictive Sufficiency]
\label{def:control_sufficiency}
A representation $Z_t=f_\theta(x_{\le t})$ is \emph{(control-)predictively sufficient over horizon $H$} if for any
action sequence $u_{t:t+H-1}$, the conditional distribution of future cumulative cost depends on history only through
$Z_t$:
\[
p\!\left(\sum_{k=0}^{H-1} c(s_{t+k+1},u_{t+k}) \,\middle|\, h_t,\,u_{t:t+H-1}\right)
=
p\!\left(\sum_{k=0}^{H-1} c(s_{t+k+1},u_{t+k}) \,\middle|\, Z_t,\,u_{t:t+H-1}\right).
\]
\end{definition}

\begin{lemma}[Latent sufficiency implies cost sufficiency]
\label{lem:latent_to_cost_sufficiency}
Assume the stage cost is measurable with respect to the latent trajectory.
If Eq.~\eqref{eq:predictive_sufficiency_latent} holds for horizon $H$, then $Z_t$ satisfies
Definition~\ref{def:control_sufficiency}.
\end{lemma}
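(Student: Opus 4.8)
The plan is to prove Lemma~\ref{lem:latent_to_cost_sufficiency} by a straightforward \emph{pushforward} (data-processing) argument: the cumulative cost is a fixed measurable functional of the latent trajectory once the action sequence is conditioned on, so equality of the conditional law of that trajectory (Eq.~\eqref{eq:predictive_sufficiency_latent}) transfers immediately to equality of the conditional law of the cost. First I would make the measurability hypothesis concrete by invoking the cost-head assumption stated just before Definition~\ref{def:control_sufficiency}, namely that each stage cost factors through the representation latent, $c(s_{t+k+1},u_{t+k}) = \ell(Z_{t+k+1},u_{t+k})$. Fixing an action sequence $u_{t:t+H-1}$, this lets me define the single measurable map
\[
\Phi(\,\cdot\,;u_{t:t+H-1}) : (z_{t+1},\dots,z_{t+H}) \longmapsto \sum_{k=0}^{H-1} \ell(z_{t+k+1},u_{t+k}),
\]
so that the cumulative cost equals $C = \Phi(Z_{t+1:t+H};u_{t:t+H-1})$.

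Next I would perform the transfer set-by-set. For any Borel set $A\subseteq\mathbb{R}$, the event $\{C\in A\}$ is the preimage $\{Z_{t+1:t+H}\in\Phi^{-1}(A)\}$, where $\Phi^{-1}(A)$ is a measurable subset of trajectory space. Hence
\[
p\!\left(C\in A \mid h_t,u_{t:t+H-1}\right) = p\!\left(Z_{t+1:t+H}\in\Phi^{-1}(A) \mid h_t,u_{t:t+H-1}\right).
\]
Applying the latent predictive-sufficiency equality Eq.~\eqref{eq:predictive_sufficiency_latent}, evaluated on the measurable set $\Phi^{-1}(A)$, replaces the conditioning on $h_t$ by conditioning on $Z_t$, and re-identifying the preimage gives
\[
p\!\left(C\in A \mid h_t,u_{t:t+H-1}\right) = p\!\left(C\in A \mid Z_t,u_{t:t+H-1}\right).
\]
Since this holds for every Borel $A$, the two conditional laws of $C$ coincide, which is exactly the identity required by Definition~\ref{def:control_sufficiency}.

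The argument carries no serious analytic obstacle; the only real care-points are interpretive. The first is pinning down what ``measurable with respect to the latent trajectory'' means: the clean reading is that the cost factors through the representation trajectory $Z_{t+1:t+H}$ (the cost-head form $\ell$), rather than through the environment state $s$; if one insists the cost is literally a function of $s$, then an extra hypothesis is needed that $s_{t+k+1}$ be recoverable from $Z_{t+k+1}$, which I would flag but not assume. The second is that Eq.~\eqref{eq:predictive_sufficiency_latent} is stated to hold only approximately under the learned model, whereas the lemma treats it as exact; I would remark that under approximate equality the conclusion degrades gracefully to a bound on the discrepancy between the two cost laws (by the data-processing inequality, the total-variation gap between the pushforwards is no larger than that between the trajectory laws), but the exact conclusion is what Lemma~\ref{lem:latent_to_cost_sufficiency} asserts. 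The measure-theoretic bookkeeping (joint measurability of $\Phi$ and existence of regular conditional distributions) is routine under standard Borel assumptions, and I would not belabor it.
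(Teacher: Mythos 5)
Your proposal is correct and follows essentially the same route as the paper's own proof sketch: the cumulative cost is a measurable functional of the latent trajectory given the action sequence, so the equality of conditional trajectory laws in Eq.~\eqref{eq:predictive_sufficiency_latent} pushes forward to equality of the conditional cost laws. Your set-by-set preimage argument is simply a more explicit rendering of that one-line pushforward step, and your interpretive caveats (cost factoring through $Z$ rather than $s$, and exact versus approximate sufficiency) are consistent with how the paper frames the surrounding assumptions.
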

This lemma shows that latent predictive sufficiency implies control-relevant predictive sufficiency whenever the task cost is measurable with respect to the learned latent state.

\paragraph{Proof sketch.}
Eq.~\eqref{eq:predictive_sufficiency_latent} implies the conditional distribution of the latent trajectory
$Z_{t+1:t+H}$ given $(h_t,u_{t:t+H-1})$ depends on history only via $Z_t$.
Any measurable functional of that trajectory, including the cumulative cost, therefore has the same property.
\hfill $\square$

\begin{theorem}[Sufficiency for Optimal Control from a Predictive Information State]
\label{thm:predictive_sufficiency_control}
If $Z_t$ is control-predictively sufficient in the sense of Definition~\ref{def:control_sufficiency} for all horizons
$H$ (or for the planning horizon used by the controller), then there exists an optimal policy that depends on history
only through $Z_t$:
\[
\pi^\star(u_t\mid h_t) = \pi^\star(u_t\mid Z_t).
\]
\end{theorem}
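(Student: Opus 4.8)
The plan is to show that $Z_t$ qualifies as a \emph{sufficient information state} in the classical dynamic-programming sense, and then invoke the standard reduction of POMDP control to an MDP over that information state. The strategy mirrors the belief-state argument recalled in Eq.~\eqref{eq:pomdp_belief_opt_policy}, with $Z_t$ playing the role of $b_t$. Concretely, I would first distill from Definition~\ref{def:control_sufficiency} and Eq.~\eqref{eq:predictive_sufficiency_latent} the two one-step properties that drive every such proof: (P1) a \emph{cost-sufficiency} property, namely that the conditional law of the immediate cost $c(s_{t+1},u_t)$ given $(h_t,u_t)$ depends on history only through $(Z_t,u_t)$; and (P2) a \emph{transition-sufficiency} property, namely $p(Z_{t+1}\mid h_t,u_t)=p(Z_{t+1}\mid Z_t,u_t)$. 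Both follow by specializing the hypotheses to horizon $H=1$ --- (P1) from Definition~\ref{def:control_sufficiency} and (P2) from the $H=1$ instance of Eq.~\eqref{eq:predictive_sufficiency_latent} --- and, crucially, they hold at \emph{every} time step because $Z_{t+1}=f_\theta(x_{\le t+1})$ is again the information state of the extended history $h_{t+1}$.

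Next I would run the argument through the Bellman optimality recursion. Writing the optimal (finite-horizon, cost-minimizing) value function as $V_t^\star(h_t)=\min_{u_t}\big\{\mathbb{E}[c_{t+1}\mid h_t,u_t]+\gamma\,\mathbb{E}[V_{t+1}^\star(h_{t+1})\mid h_t,u_t]\big\}$, I would prove by backward induction on the horizon that $V_t^\star(h_t)=\tilde V_t(Z_t)$ for some measurable $\tilde V_t$. The base case is immediate (a terminal cost measurable with respect to $Z$, or zero). For the inductive step, the immediate-cost term is a function of $(Z_t,u_t)$ by (P1), while the continuation term $\mathbb{E}[\tilde V_{t+1}(Z_{t+1})\mid h_t,u_t]$ collapses to a function of $(Z_t,u_t)$ by (P2) via the law of total expectation; hence the Bellman backup maps functions of $Z$ to functions of $Z$, and the induction closes. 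The optimal action is then the minimizer of the state-action value $Q_t^\star(Z_t,u_t)$, which depends on history only through $Z_t$, so a measurable selection yields $\pi^\star(u_t\mid h_t)=\pi^\star(u_t\mid Z_t)$ as claimed.

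For the infinite-horizon discounted case I would replace backward induction with a fixed-point argument: the Bellman operator is a $\gamma$-contraction on bounded measurable functions, it maps the closed subspace of functions depending only on $Z_t$ into itself by the same (P1)--(P2) reasoning, and therefore its unique fixed point $V^\star$ lies in that subspace, after which the greedy policy is again $Z_t$-measurable. The main obstacle I anticipate is conceptual rather than computational: Definition~\ref{def:control_sufficiency} is stated in \emph{open-loop} terms (fixed action sequences $u_{t:t+H-1}$), whereas optimal control is \emph{closed-loop} (actions adapt to future observations). The resolution is to observe that the DP recursion only ever invokes the one-step specializations (P1) and (P2), and that closed-loop adaptivity is reintroduced automatically by the nested $\min_{u}$ operators together with the tower property, provided the recursive updatability of $Z_{t+1}$ from $(Z_t,u_t,x_{t+1})$ holds --- which is built into the definition $Z_{t+1}=f_\theta(x_{\le t+1})$. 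A secondary technical point is the existence of a measurable argmin, handled by standard measurable-selection theorems under, e.g., finite $\mathcal{U}$ or suitable compactness and continuity; this is precisely why the statement asserts \emph{existence} of an optimal $Z_t$-policy rather than uniqueness.
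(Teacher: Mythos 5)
Your proposal follows essentially the same route as the paper's proof, which is a two-sentence sketch invoking ``the standard POMDP information-state argument'': dynamic programming with $Z_t$ as the state variable. What you add is the actual content of that argument --- the backward induction showing the Bellman backup maps functions of $Z_t$ to functions of $Z_t$, the contraction argument for the discounted infinite-horizon case, and the measurable-selection caveat --- so your write-up is a strict refinement rather than a different approach. One point deserves emphasis, and it is to your credit that you flagged it: your step (P2), the one-step transition sufficiency $p(Z_{t+1}\mid h_t,u_t)=p(Z_{t+1}\mid Z_t,u_t)$, is \emph{not} implied by the theorem's literal hypothesis. Theorem~\ref{thm:predictive_sufficiency_control} assumes only Definition~\ref{def:control_sufficiency}, which is an \emph{open-loop} statement about the law of cumulative cost under fixed action sequences; it says nothing about how the information state itself propagates, and open-loop cost sufficiency alone does not close the Bellman recursion (closed-loop optimal control can distinguish two histories with identical $Z_t$ if future observations are differentially informative). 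You import (P2) from the $H=1$ instance of Eq.~\eqref{eq:predictive_sufficiency_latent}, i.e.\ from latent predictive sufficiency, which is the setting the paper actually intends (Lemma~\ref{lem:latent_to_cost_sufficiency} derives Definition~\ref{def:control_sufficiency} from it) but is formally an additional hypothesis. So your proof is correct under ``Definition~\ref{def:control_sufficiency} plus recursive/transition sufficiency,'' and in identifying that the extra assumption is needed you have exposed a gap in the theorem as stated that the paper's own sketch silently passes over.
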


\paragraph{Proof sketch.}
The statement follows the standard POMDP information-state argument: if $Z_t$ renders the conditional law of all
future (cost-relevant) consequences of action sequences independent of the full history, then dynamic programming (or
finite-horizon rollout evaluation) can be carried out using $Z_t$ as the state variable. Hence an optimal policy can be
chosen Markov in $Z_t$, i.e. $\pi^\star(u_t\mid h_t)=\pi^\star(u_t\mid Z_t)$. \hfill $\square$

\paragraph{Relation to VJEPA training.}
VJEPA does not, by itself, \emph{guarantee} the predictive sufficiency property in Definition~\ref{def:control_sufficiency} without additional realizability and optimization assumptions. Rather, it is designed to \emph{encourage} such sufficiency by learning a latent predictive state together with a predictive model
$p_\phi(Z_{t+\Delta}\mid Z_t,\xi_{t+\Delta},u_{t:t+\Delta-1})$
that supports latent-space planning over multiple horizons. Under standard realizability assumptions, namely: (i) that there exists a true latent predictive information state $\tilde Z_t$ (e.g. a Bayesian belief state or a predictive state representation in the sense of PSRs \cite{Sutton2001,boots2011psr}) rendering future task-relevant variables conditionally independent of history; (ii) that the target encoder family $q_{\theta'}(Z_T\mid x_T)$ is expressive enough to represent the true conditional distribution of future predictive states given observations; and (iii) that the predictive model family $p_\phi$ is well specified and contains the true latent transition kernel induced by $\tilde Z_t$, then if VJEPA training reaches a (near-)global optimum over a sufficiently rich set of prediction horizons $\Delta$, the learned representation $Z_t=f_\theta(x_{\le t})$ becomes an information-preserving (invertible up to approximation error) transformation of $\tilde Z_t$. Consequently, $Z_t$ inherits the predictive sufficiency of $\tilde Z_t$ for evaluating future costs under candidate action sequences, and Theorem~\ref{thm:predictive_sufficiency_control} applies. In this sense, VJEPA can be viewed as learning an \emph{amortized predictive information state} whose sufficiency for control follows from correct specification and optimization, rather than from observation reconstruction or explicit belief-state estimation. As in PSR and belief-state analyses, these assumptions characterize an idealized regime; in practice VJEPA is expected to approximate predictive sufficiency to the extent allowed by data, capacity, and optimization.

\subsection{Latent-Space Planning Objective}

\paragraph{Planning in POMDPs.}
In a POMDP, optimal control cannot be expressed directly as a function of the unobserved latent state $s_t$. Instead, planning must be carried out in terms of the \emph{belief state}
\[
b_t(s) := p(s_t=s \mid x_{1:t}, u_{1:t-1}),
\]
which summarizes all information available to the agent. Given a transition model $P(s_{t+1}\mid s_t,u_t)$, an observation model $p(x_t\mid s_t)$, and a reward (or cost) function, the POMDP induces a \emph{belief-space MDP}, in which actions are selected to maximize expected cumulative reward under the evolution of beliefs.
Planning in a POMDP therefore amounts to simulating how candidate action sequences propagate the belief distribution forward in time and selecting the action that optimizes expected return \cite{smallwood1973optimal,kaelbling1998planning,ross2008planning}:
\[
\pi^\star(u_t \mid h_t) = \pi^\star(u_t \mid b_t).
\]
This belief-space formulation is exact but typically intractable in high-dimensional or continuous settings \cite{Papadimitriou1987,Madani2003,Kurniawati2009,sunberg2018}.

\paragraph{Latent belief approximation via VJEPA.}
VJEPA replaces the intractable Bayesian belief $b_t$ with a learned latent representation $Z_t=f_\theta(x_{\le t})$ that serves as an amortized, predictive belief state \footnote{Here ‘belief’ refers to a predictive distribution over latent representations rather than a posterior over true environment states.}. The predictive model $p_\phi(Z_{t+1}\mid Z_t,\xi_{t+1},u_t)$ then plays the role of a belief transition operator, enabling planning directly in latent space without requiring an explicit observation likelihood.

\paragraph{Latent-space planning objective.}
Let $c(Z_{t+1},u_t)$ be a stage cost defined on latent states (or via a cost head applied to $Z_{t+1}$). Given the current latent state $Z_t$, the $H$-step planning problem is
\begin{equation}
\label{eq:latent_mpc_objective}
u_{t:t+H-1}^\star
\in
\arg\min_{u_{t:t+H-1}}
\mathbb{E} \left[
\sum_{k=0}^{H-1} c(Z_{t+k+1},u_{t+k})
 \middle| 
Z_t,  u_{t:t+H-1}
\right],
\end{equation}
where the expectation is taken under the latent dynamics induced by the VJEPA predictive model\footnote{This is \textit{learning-based latent-space MPC}, see e.g. \cite{Koller2018,Watter2015,hafner2019dreamer}.}:
\[
Z_{t+k+1} \sim p_\phi(Z_{t+k+1}\mid Z_{t+k},\xi_{t+k+1},u_{t+k}).
\]
This objective is formally equivalent to stochastic model predictive control (MPC \cite{camacho2013MPC,rawlings2017MPC,mesbah2016stochasticmpc}), but instantiated entirely in representation space.

\paragraph{Distributional planning.}
When $p_\phi$ is stochastic, as in VJEPA, the objective \eqref{eq:latent_mpc_objective} involves an expectation over latent trajectories.
This enables \emph{distributional planning} \cite{mesbah2016stochasticmpc,Schwarm1999}: risk-neutral control arises naturally, and risk-sensitive criteria \cite{Borkar2002} (e.g. variance penalties \cite{Lai_2011,mesbah2016stochasticmpc} or CVaR \cite{rockafellar2000cvar,Chow2015}) can be incorporated by modifying the functional applied to the trajectory distribution.

\subsection{Sampling \textit{vs.} Point Prediction Planning}
\label{sec:sampling_vs_map_control}

The choice of how to use the predictive distribution is central in practice.

\paragraph{MAP/mean (trajectory) planning.}
A computationally cheap approximation is to plan using a single representative trajectory, e.g. the predictive mean (or MAP point when well-defined):
\[
\hat Z_{t+1} = \mu_\phi(Z_t,u_t)
\quad\text{or}\quad
\hat Z_{t+1} = \arg\max_{Z_{t+1}} p_\phi(Z_{t+1}\mid Z_t,u_t).
\]
This reduces planning to deterministic rollout and is often effective when the predictive distribution is unimodal and uncertainty is weak or approximately action-independent.

\paragraph{Sampling-based (distributional) planning.}
Alternatively, one can approximate \eqref{eq:latent_mpc_objective} by Monte Carlo rollouts:
\[
Z_{t+k+1}^{(m)} \sim p_\phi(\,\cdot \mid Z_{t+k}^{(m)},u_{t+k}), 
\qquad m=1,\dots,M,
\]
and estimate the expected cost by sample averaging. $M$ is the number of samples drawn at each time step. This is useful in multi-modal dynamics or partial observability, where committing to a single MAP/mean trajectory can ignore low-probability but potentially high-cost outcomes. Sampling also naturally supports robust or risk-sensitive planning by changing how samples are aggregated.

\subsection{Algorithm: VJEPA Sampling-Based Latent MPC}
\label{sec:vjepa_mpc_algo}

We now formalize the sampling-based planning procedure discussed in Section~\ref{sec:sampling_vs_map_control}. This procedure, commonly referred to as \textit{random shooting} or \textit{Model Predictive Path Integral} (MPPI) control in the literature \cite{williams2015MPPI,williams2016MPPI}, approximates the minimization of the expected cost (Eq.~\eqref{eq:latent_mpc_objective}) by simulating $N$ parallel latent trajectories using the VJEPA predictive model.

\begin{algorithm}[H]
\caption{VJEPA-Based Model Predictive Control (VJEPA-MPC)}
\label{alg:jmpc}
\begin{algorithmic}[1]
\REQUIRE History $x_{\le t}$, Horizon $H$, Number of samples $M$, Cost function $c$
\STATE \textbf{Encode:} Estimate current predictive state:
\[ Z_t = f_\theta(x_{\le t}) \]
\FOR{sample $i = 1$ to $M$}
    \STATE Sample candidate action sequence $u^{(i)}_{t:t+H-1} \sim p(u)$
    \STATE $Z^{(i)}_t \leftarrow Z_t$
    \FOR{step $k=0$ to $H-1$}
        \STATE Sample next latent state (dynamics rollout):
        \[ Z^{(i)}_{t+k+1} \sim p_\phi(Z_{t+k+1} \mid Z^{(i)}_{t+k}, u^{(i)}_{t+k}) \]
    \ENDFOR
    \STATE Compute cumulative cost for trajectory $i$:
    \[ J^{(i)} = \sum_{k=0}^{H-1} c(Z^{(i)}_{t+k+1}, u^{(i)}_{t+k}) \]
\ENDFOR
\STATE \textbf{Select:} Optimal action sequence index $i^\star = \arg\min_i J^{(i)}$
\STATE \textbf{Execute:} Apply first action $u^{(i^\star)}_t$ to environment
\end{algorithmic}
\end{algorithm}

\subsection{When is MAP Control Optimal?}
\label{sec:map_optimal}

We formalize a simple sufficient condition under which planning on a point prediction (mean or MAP) is equivalent to planning under the full predictive distribution. Following standard practice in stochastic MPC and belief-space control, we index the stage cost at the \emph{post-action} state: applying action $u_t$ induces a transition $Z_t \to Z_{t+1}$, and the cost evaluates the consequence of this action on the resulting state
\cite{mesbah2016stochasticmpc,rawlings2017MPC,bertsekas2012dp}.

\begin{theorem}[Optimality of MAP (Mean) Control under Quadratic Costs]
\label{thm:map_optimal}
Consider a one-step control problem with latent predictive model $p_\phi(Z_{t+1}\mid Z_t,u_t)$ and stage cost $c(Z_{t+1},u_t)$. Assume that for each $(Z_t,u_t)$,
\[
p_\phi(Z_{t+1}\mid Z_t,u_t)
=
\mathcal{N} \big(Z_{t+1}\mid \mu_\phi(Z_t,u_t),\,\Sigma\big),
\]
where the covariance $\Sigma$ does \emph{not} depend on $u_t$. Let the stage cost be quadratic in the next latent state:
\[
c(Z_{t+1},u_t)
=
(Z_{t+1}-z^\star)^\top Q_c (Z_{t+1}-z^\star) + r(u_t),
\qquad Q_c \succeq 0,
\]
with arbitrary $r(\cdot)$. Then the action minimizing expected cost,
\[
u_t^\star
\in
\arg\min_{u_t} 
\mathbb{E} \left[c(Z_{t+1},u_t)\mid Z_t,u_t\right],
\]
is equivalently obtained by minimizing the cost at the predictive mean:
\[
u_t^\star
\in
\arg\min_{u_t} 
c(\mu_\phi(Z_t,u_t),u_t).
\]
Moreover, since a Gaussian distribution has its unique MAP point at the mean, this is equivalently \emph{MAP control}.
\end{theorem}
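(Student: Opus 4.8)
The plan is to evaluate the expected stage cost in closed form and show that the only term carrying the covariance is constant in $u_t$, so that it drops out of the minimization entirely. Writing $\mu := \mu_\phi(Z_t,u_t)$ for brevity, the first step is to invoke the standard identity for the expectation of a quadratic form under a Gaussian: if $Z_{t+1}\sim\mathcal{N}(\mu,\Sigma)$, then
\[
\mathbb{E}\big[(Z_{t+1}-z^\star)^\top Q_c (Z_{t+1}-z^\star)\big]
=
(\mu-z^\star)^\top Q_c (\mu-z^\star) + \operatorname{tr}(Q_c\Sigma).
\]
This follows by decomposing $Z_{t+1}-z^\star = (\mu-z^\star) + (Z_{t+1}-\mu)$, expanding the quadratic, using $\mathbb{E}[Z_{t+1}-\mu]=0$ to annihilate the cross term, and recognizing $\mathbb{E}\big[(Z_{t+1}-\mu)^\top Q_c (Z_{t+1}-\mu)\big] = \operatorname{tr}(Q_c\Sigma)$ via the cyclic property of the trace. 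Note that this identity requires no definiteness of $Q_c$, so the semidefiniteness hypothesis is used only to guarantee that the cost is a genuine penalty, not for the equivalence itself.

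The second step is to reinstate the deterministic action penalty $r(u_t)$, yielding
\[
\mathbb{E}\big[c(Z_{t+1},u_t)\mid Z_t,u_t\big]
=
\underbrace{(\mu_\phi(Z_t,u_t)-z^\star)^\top Q_c (\mu_\phi(Z_t,u_t)-z^\star) + r(u_t)}_{=\,c(\mu_\phi(Z_t,u_t),\,u_t)}
\;+\; \operatorname{tr}(Q_c\Sigma).
\]
The decisive observation, and the only place the hypotheses are genuinely used, is that $\operatorname{tr}(Q_c\Sigma)$ is a constant with respect to $u_t$ precisely because $\Sigma$ is assumed independent of the action. Hence the expected cost equals the cost-at-the-mean $c(\mu_\phi(Z_t,u_t),u_t)$ plus an additive constant, and two functions differing by a constant share the identical set of minimizers, so $\arg\min_{u_t}\mathbb{E}[c(Z_{t+1},u_t)\mid Z_t,u_t] = \arg\min_{u_t} c(\mu_\phi(Z_t,u_t),u_t)$. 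Finally, for the MAP claim I would invoke the elementary fact that a Gaussian density attains its unique maximum at its mean, so the mode (MAP) predictor coincides with $\mu_\phi(Z_t,u_t)$; substituting this identity shows that planning at the mean is identical to planning at the MAP point.

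The proof is essentially a one-line moment computation, so there is no serious analytical obstacle; the substantive content is conceptual rather than technical. The main point worth emphasizing is that the action-independence of $\Sigma$ is not a cosmetic hypothesis but exactly what decouples the irreducible predictive uncertainty from the control objective: were $\Sigma$ allowed to depend on the action, the term $\operatorname{tr}(Q_c\Sigma(u_t))$ would re-enter the minimization as a variance (risk) penalty, and mean/MAP planning would in general cease to be optimal. I would therefore record this failure mode as a short remark, both to delineate the precise boundary of the result and to connect it to the risk-sensitive distributional planning discussed in the preceding sampling-versus-point-prediction material.
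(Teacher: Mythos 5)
Your proof is correct and follows essentially the same route as the paper's: decompose the quadratic form around the predictive mean, kill the cross term via $\mathbb{E}[Z_{t+1}-\mu]=0$, reduce the centered term to $\operatorname{tr}(Q_c\Sigma)$ by cyclicity of the trace, and observe that this term is an additive constant in $u_t$ so the minimizers coincide, with the MAP claim following from the Gaussian mode being its mean. Your closing remark on action-dependent $\Sigma(u_t)$ reintroducing a risk penalty also mirrors the paper's own remark in the appendix, so there is nothing to add.
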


\begin{proof}
Using the quadratic form and standard Gaussian moment identities,
\[
\mathbb{E} \left[(Z_{t+1}-z^\star)^\top Q_c (Z_{t+1}-z^\star)\right]
=
(\mu_\phi-z^\star)^\top Q_c (\mu_\phi-z^\star)
+
\mathrm{tr}(Q_c\Sigma),
\]
where the trace term $\mathrm{tr}(Q_c\Sigma)$ does not depend on $u_t$ by assumption. Hence
\[
\arg\min_{u_t}  \mathbb{E}[c(Z_{t+1},u_t)]
=
\arg\min_{u_t} 
\Big(
(\mu_\phi-z^\star)^\top Q_c (\mu_\phi-z^\star) + r(u_t)
\Big)
=
\arg\min_{u_t}  c(\mu_\phi(Z_t,u_t),u_t).
\]
Finally, for a Gaussian predictive distribution, the MAP point coincides with the mean $\mu_\phi(Z_t,u_t)$. A more detailed proof is provided in Appendix~\ref{app:map_optimal_proof}.
\end{proof}

\paragraph{Discussion.}
Theorem~\ref{thm:map_optimal} captures a common regime in which point-prediction planning is justified: predictive uncertainty contributes only an additive constant to the objective and therefore does not affect the optimizer. When the predictive covariance depends on the action (risk-sensitive settings), when costs are non-quadratic, or when $p_\phi$ is multi-modal (non-Gaussian), sampling-based rollouts become important, as optimizing a single MAP/mean trajectory may ignore uncertainty and rare but costly outcomes.

\paragraph{Corollary (Multi-step mean/MAP planning under action-independent covariance).}
\label{cor:map_optimal_multistep}
Consider an $H$-step planning problem in latent space with stochastic predictive dynamics
\[
Z_{t+k+1} \mid Z_{t+k},u_{t+k}
 \sim 
\mathcal{N} \big(\mu_\phi(Z_{t+k},u_{t+k}),\,\Sigma_k\big),
\qquad k=0,\dots,H-1,
\]
where each covariance $\Sigma_k$ is positive semidefinite and does \emph{not} depend on the action sequence $u_{t:t+H-1}$. Assume the stage cost is quadratic in the \emph{next} latent state:
\[
c(Z_{t+k+1},u_{t+k})
=
(Z_{t+k+1}-z^\star)^\top Q_k (Z_{t+k+1}-z^\star) + r_k(u_{t+k}),
\qquad Q_k\succeq 0.
\]

Define the risk-neutral MPC objective\footnote{\emph{Risk-neutral} control refers to optimizing the expected cumulative cost $\mathbb{E}[\sum_t c(Z_t,u_t)]$ without explicit preference for or against uncertainty. Two action sequences with equal expected cost are treated as equally desirable, even if they differ in variance or tail risk. By contrast, \emph{risk-sensitive} control incorporates uncertainty directly into the objective, for example via variance penalties,
exponential utility functions, or tail-risk criteria such as CVaR \cite{rockafellar2000cvar,Chow2015}.}
\[
J(u_{t:t+H-1})
:=
\mathbb{E} \left[
\sum_{k=0}^{H-1} c(Z_{t+k+1},u_{t+k})
 \middle| 
Z_t,  u_{t:t+H-1}
\right].
\]

Assume additionally that the induced state covariances $\mathrm{Cov}(Z_{t+k})$ under the rollout are independent of the chosen action sequence (e.g. in linear-Gaussian dynamics with additive, action-independent noise). Then the optimal action sequence 
\[
u_{t:t+H-1}^\star
 \in 
\arg\min_{u_{t:t+H-1}} 
J(u_{t:t+H-1})
\]
is equivalently obtained by deterministic planning on the mean (or MAP) trajectory:
\[
u_{t:t+H-1}^\star
\in
\arg\min_{u_{t:t+H-1}}
\sum_{k=0}^{H-1} c(\bar Z_{t+k+1},u_{t+k}),
\]
where the mean trajectory $\{\bar Z_{t+k}\}_{k=0}^{H}$ is defined recursively by
\[
\bar Z_t := Z_t,
\qquad
\bar Z_{t+k+1} := \mu_\phi(\bar Z_{t+k},u_{t+k}).
\]
Since each Gaussian predictive distribution has its unique MAP point at the mean, this procedure is equivalently \emph{MAP planning} at every step.

\begin{proof}
Write $Z_{t+k+1} = \bar Z_{t+k+1} + \varepsilon_{t+k+1}$, where $\bar Z_{t+k+1} := \mathbb{E}[Z_{t+k+1}]$ under the chosen action sequence and $\varepsilon_{t+k+1}$ is a zero-mean deviation with covariance $\mathrm{Cov}(Z_{t+k+1})$. For each $k$, expand the quadratic term:
\[
(Z_{t+k+1}-z^\star)^\top Q_k (Z_{t+k+1}-z^\star)
=
(\bar Z_{t+k+1}-z^\star)^\top Q_k (\bar Z_{t+k+1}-z^\star)
+
2\,\varepsilon_{t+k+1}^\top Q_k (\bar Z_{t+k+1}-z^\star)
+
\varepsilon_{t+k+1}^\top Q_k \varepsilon_{t+k+1}.
\]
Taking expectations, the cross term vanishes because $\mathbb{E}[\varepsilon_{t+k+1}]=0$, and
\[
\mathbb{E}[\varepsilon_{t+k+1}^\top Q_k \varepsilon_{t+k+1}]
=
\mathrm{tr} \big(Q_k\,\mathrm{Cov}(Z_{t+k+1})\big).
\]
Therefore,
\[
\mathbb{E} \left[(Z_{t+k+1}-z^\star)^\top Q_k (Z_{t+k+1}-z^\star)\right]
=
(\bar Z_{t+k+1}-z^\star)^\top Q_k (\bar Z_{t+k+1}-z^\star)
+
\mathrm{tr} \big(Q_k\,\mathrm{Cov}(Z_{t+k+1})\big).
\]

Summing over $k$, the expected cost decomposes as
\[
J(u_{t:t+H-1})
=
\sum_{k=0}^{H-1}
\Big(
(\bar Z_{t+k+1}-z^\star)^\top Q_k (\bar Z_{t+k+1}-z^\star)
+
r_k(u_{t+k})
\Big)
+
\sum_{k=0}^{H-1}
\mathrm{tr} \big(Q_k\,\mathrm{Cov}(Z_{t+k+1})\big).
\]
Under the stated assumptions, the covariance terms are independent of the action sequence and thus constitute a constant offset with respect to optimization over
$u_{t:t+H-1}$. Dropping this constant yields deterministic planning on the mean trajectory. Finally, since each conditional Gaussian $\mathcal{N}(\mu_\phi(\cdot),\Sigma_k)$ has its MAP point at the mean, this is equivalently MAP planning, i.e. 
\[
\arg\min_{u_{t:t+H-1}} 
J(u_{t:t+H-1})
=
\arg\min_{u_{t:t+H-1}} 
\sum_{k=0}^{H-1}
\Big(
(\bar Z_{t+k+1}-z^\star)^\top Q_k (\bar Z_{t+k+1}-z^\star)
+
r_k(u_{t+k})
\Big).
\]
\end{proof}

\section{Information-Theoretic Analysis}
\label{sec:information_theoretic}

In this section, we analyze VJEPA through the lens of information theory, specifically relating the variational objective to the maximization of predictive mutual information and the Predictive Information Bottleneck (PIB) principle \cite{tishby2000information, bialek2001predictability,alemi2017ib,alemi2019variationalpredictiveinformationbottleneck,wang2019pastfuture,Wang_2021}.

\subsection{Variational Maximization of Predictive Information}

We establish that minimizing the VJEPA loss is equivalent to maximizing a lower bound on the mutual information\footnote{In probability theory and information theory, mutual information (MI) measures the mutual dependence between two random variables, quantifying the information obtained about one by observing the other. Let $(X, Y)$ be a pair of random variables over space $\mathcal{X} \times \mathcal{Y}$ with joint distribution $P_{(X,Y)}$ and marginals $P_X, P_Y$. The mutual information is defined as the KL divergence between the joint and product distributions: $I(X; Y) = D_{\mathrm{KL}}(P_{(X, Y)} \| P_X \otimes P_Y)$. It can also be expressed via entropy $H(\cdot)$ and conditional entropy $H(\cdot|\cdot)$ as: $I(X; Y) = H(X) - H(X|Y) = H(Y) - H(Y|X)$. By the properties of KL divergence, $I(X;Y) \ge 0$, with equality if and only if $X$ and $Y$ are independent. See Appendix.\ref{app:info_theory_identities} for more details.} between the current latent state $Z_t$ and the future latent state $Z_{t+\Delta}$.

\begin{theorem}[Variational Mutual Information Lower Bound]
\label{thm:mi_bound}
Let $(Z_t, Z_{t+\Delta})$ be the joint distribution of context and target representations induced by the data and the encoder policies. The mutual information $I(Z_t; Z_{t+\Delta})$ is lower-bounded by the negative cross-entropy (or expected log-likelihood) of the predictive distribution:
\[
I(Z_t; Z_{t+\Delta})
\ge
\mathbb{E}_{p(Z_t, Z_{t+\Delta})} \big[ \log p_\phi(Z_{t+\Delta} \mid Z_t) \big]
+ H(Z_{t+\Delta}),
\]
where $H(Z_{t+\Delta})$ is the marginal entropy of the target representations.
\end{theorem}

\begin{proof}
By definition, $I(Z_t; Z_{t+\Delta}) = H(Z_{t+\Delta}) - H(Z_{t+\Delta} \mid Z_t)$. The conditional entropy is defined as $H(Z_{t+\Delta} \mid Z_t) = -\mathbb{E} [\log p(Z_{t+\Delta} \mid Z_t)]$. Using the non-negativity of KL divergence, $D_{\text{KL}}(p(\cdot|Z_t) \,\|\, p_\phi(\cdot|Z_t)) \ge 0$, we have:
\[
\mathbb{E} [\log p(Z_{t+\Delta} \mid Z_t)] \ge \mathbb{E} [\log p_\phi(Z_{t+\Delta} \mid Z_t)].
\]
Substituting this inequality yields the result. This is known as the Barber-Agakov bound \cite{barber2003algorithm,Poole2019}.
\end{proof}

\paragraph{Implication.}
The VJEPA objective (Eq.~\ref{eq:vjepa_objective}) minimizes $-\log p_\phi(Z_{t+\Delta} \mid Z_t)$. Since $H(Z_{t+\Delta})$ depends only on the target encoder (which evolves slowly via EMA), minimizing the VJEPA loss effectively maximizes the mutual information between the past and the future representations, $I(Z_t; Z_{t+\Delta})$.

\subsection{Predictive Information Bottleneck}

The \emph{Information Bottleneck (IB)} method \cite{tishby2000information} provides a general information-theoretic framework for finding a compressed representation $Z$ of an input source $X$ that retains the maximum possible information about a relevant target variable $Y$. Formally, it seeks to minimize the functional
\[
\mathcal{L}_{\text{IB}} = I(X; Z) - \beta I(Z; Y),
\]
where $I(\cdot;\cdot)$ denotes mutual information\footnote{The mutual information $I(\cdot;\cdot)$ is a measure of mutual dependence between two random variables defined as the \textit{Kullback-Leibler divergence} between their joint distribution and the product of their marginals: $I(X;Y) = D_{\mathrm{KL}}(P_{(X,Y)} \| P_X \otimes P_Y) = H(X) - H(X|Y)$. See Appendix.\ref{app:info_theory_identities}.} and $\beta$ is a Lagrange multiplier controlling the trade-off between complexity (compression of $X$) and accuracy (prediction of $Y$).
The \emph{Predictive Information Bottleneck (PIB)} \cite{bialek2001predictability} specializes this principle to temporal data: it aims to extract a summary of the \emph{past} ($X = x_{\le t}$) that is maximally predictive of the \emph{future} ($Y = x_{t+\Delta}$), while discarding irrelevant, noisy, or redundant details.

A core distinction of VJEPA is its adherence to this predictive principle. Unlike autoencoders, which effectively maximize $I(Z_t; x_{\le t})$ by reconstructing the full input history, VJEPA acts as a PIB. It seeks to capture information about the future while remaining invariant to non-predictive details of the past.

Let $x_{\le t}$ be the observation history. We posit a generative process where observations consist of a predictable signal $S$ and nuisance noise $N$: $x = S + N$. The future $x_{t+\Delta}$ depends on history only through $S$.

\begin{proposition}[Invariance to Nuisance Variability]
\label{prop:nuisance_invariance}
Let the input $x$ decompose into a predictive signal $S$ and a nuisance variable $N$ (where $N$ is independent of the future target $Z_{t+\Delta}$).
Generative models maximizing $\log p(x \mid Z)$ must encode both $S$ and $N$ to minimize reconstruction error (since the entropy $H(x)$ includes $H(N)$).
In contrast, the VJEPA objective is invariant to representations that discard $N$ provided they preserve the mutual information with the future, $I(Z_t; Z_{t+\Delta})$. Consequently, VJEPA admits \emph{minimal sufficient statistics} that filter out nuisance variability while maintaining optimal prediction loss. (See Proof in Appendix.\ref{app:proof_nuisance}).
\end{proposition}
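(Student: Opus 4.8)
The plan is to prove the two halves of the statement separately --- first that reconstruction-based objectives are forced to encode the nuisance $N$, then that the VJEPA objective is blind to $N$ --- and finally to combine them into the minimal-sufficiency conclusion. Throughout I would work with the generative assumption in its information-theoretic form: that the nuisance carries no predictive information about the future, i.e. $I(N; Z_{t+\Delta}\mid S)=0$, which is the precise reading of ``the future depends on history only through $S$.'' I would also fix the encoder decomposition $Z_t = f_\theta(x_{\le t})$ with $x_{\le t}$ determined by $(S,N)$, so that any claim about ``discarding $N$'' becomes the statement that $Z_t$ is measurable with respect to $S$ alone.

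For the generative side, I would bound the best achievable reconstruction log-likelihood by a conditional entropy. For any decoder $p(x\mid Z)$, Gibbs' inequality gives $\mathbb{E}[\log p(x\mid Z)] \le -H(x\mid Z)$, with equality only when the decoder matches the true posterior. If the representation discards the nuisance, i.e. $Z$ is a function of $S$ alone, then $N$ is independent of $Z$ and conditioning reduces entropy gives $H(x\mid Z)\ge H(x\mid Z,S)=H(N)$, so the attainable reconstruction objective is penalized by at least $H(N)$ relative to a representation that retains $N$. This establishes that maximizing $\mathbb{E}[\log p(x\mid Z)]$ creates a strict incentive to encode $N$, which is the first clause of the proposition.

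For the VJEPA side, I would invoke Theorem~\ref{thm:mi_bound}, which ties the VJEPA prediction loss (up to the $N$-independent marginal entropy $H(Z_{t+\Delta})$) to $-I(Z_t; Z_{t+\Delta})$, so the objective depends on the representation only through this predictive mutual information. The key computation is the chain rule $I\big((S,N); Z_{t+\Delta}\big) = I(S; Z_{t+\Delta}) + I(N; Z_{t+\Delta}\mid S)$; the second term vanishes by the nuisance assumption, giving $I\big((S,N);Z_{t+\Delta}\big)=I(S;Z_{t+\Delta})$. Hence a minimal representation $Z_t^{\min}=g(S)$ that is a sufficient statistic of $S$ for the future attains the same predictive mutual information as the full representation encoding $(S,N)$, and by Theorem~\ref{thm:mi_bound} the same optimal VJEPA loss. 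I would conclude that $Z_t^{\min}$ is an admissible global optimum of the predictive objective that filters out $N$, so VJEPA admits a minimal sufficient statistic while the generative objective does not.

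The main obstacle I anticipate is not the mutual-information algebra, which is routine, but making the two ends of the argument rigorous and tight. On the VJEPA side I need a realizability assumption so that the Barber--Agakov bound of Theorem~\ref{thm:mi_bound} is tight at the optimum --- the predictor family $p_\phi(\cdot\mid Z_t)$ must be expressive enough to represent the true conditional $p(Z_{t+\Delta}\mid S)$ --- otherwise ``same mutual information'' does not immediately imply ``same attained loss.'' On the generative side I must ensure the $H(N)$ penalty is genuinely irreducible, which requires $N$ to be independent of $S$ (not merely of the future) so that no part of $N$ can be recovered from $S$; I would state this as the operative hypothesis. I would also flag the measure-theoretic caveat that for continuous variables these statements are cleanest in terms of relative rather than absolute entropies, though this does not affect the invariance conclusion.
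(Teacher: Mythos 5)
Your proposal is correct, but it reaches both halves of the proposition by a genuinely different route than the paper's Appendix~\ref{app:proof_nuisance}. On the generative side, the paper simply factorizes the reconstruction likelihood as $\log p(s_t,n_t\mid Z_t)=\log p(s_t\mid Z_t)+\log p(n_t\mid s_t,Z_t)$ and observes informally that maximizing the second term requires $Z_t$ to encode $n_t$; your Gibbs-inequality argument, which quantifies the penalty for a nuisance-free representation as an irreducible $H(N)$ gap via $H(x\mid Z)\ge H(x\mid Z,S)=H(N)$, is sharper and makes explicit the hypothesis the paper leaves implicit, namely that $N$ must be independent of $S$ (not merely of the future) for the gap to be irreducible. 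On the VJEPA side, the paper avoids mutual information entirely: it directly compares the candidate representations $Z_t^{\max}=(s_t,n_t)$ and $Z_t^{\min}=s_t$ inside the expected log-likelihood, using the conditional-independence identity $p(Z_{t+\Delta}\mid s_t,n_t)=p(Z_{t+\Delta}\mid s_t)$ to conclude the two attain identical objective values. Your detour through Theorem~\ref{thm:mi_bound} and the chain rule $I\bigl((S,N);Z_{t+\Delta}\bigr)=I(S;Z_{t+\Delta})+I(N;Z_{t+\Delta}\mid S)$ reaches the same conclusion but, as you correctly flag, needs the Barber--Agakov bound to be tight at the optimum, i.e.\ a realizability assumption on $p_\phi$; the paper's direct substitution sidesteps this (though it too implicitly evaluates the objective at the true conditional, so the realizability assumption is present in both, just less visible in the paper). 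Your version buys a quantitative penalty and an explicit statement of the hidden independence hypothesis; the paper's buys a shorter, more elementary argument that never invokes the mutual-information machinery.
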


This proposition implies that VJEPA \emph{allows} the representation to be minimal (compressing away noise), whereas reconstruction-based objectives \emph{force} the representation to be maximal (retaining noise). This efficiency is critical for control, where acting on noise can lead to instability.

\subsection{Contrast with Generative Objectives}

We can now formally distinguish VJEPA from autoregressive (AR) world models \cite{ha2018RNN, hafner2019dreamer} by analyzing their respective training objectives through an information-theoretic lens.

\paragraph{Derivation of the Generative Penalty.}
Standard generative world models are trained to maximize the log-likelihood of future observations $x_{t+\Delta}$ given the current latent state $Z_t$. This corresponds to minimizing the Negative Log-Likelihood (NLL):
\[
\mathcal{L}_{\text{AR}}
=
\mathbb{E}_{data} \left[ -\log p_\psi(x_{t+\Delta} \mid Z_t) \right].
\]
In the limit of infinite data and a sufficiently expressive model, minimizing the NLL is equivalent to minimizing the \emph{conditional entropy} of the targets given the representation:
\[
\mathcal{L}_{\text{AR}}
\cong
H(x_{t+\Delta} \mid Z_t).
\]
Using the fundamental identity relating entropy and mutual information, $H(X \mid Y) = H(X) - I(X; Y)$, we can rewrite the objective as:
\[
\mathcal{L}_{\text{AR}}
\cong
\underbrace{H(x_{t+\Delta})}_{\text{Constant}} - I(x_{t+\Delta}; Z_t).
\]
Since the marginal entropy of the data $H(x_{t+\Delta})$ is a constant property of the environment (determined by the dataset), minimizing the generative loss is equivalent to maximizing the mutual information between the latent state and the \emph{raw pixels}:
\[
\min \mathcal{L}_{\text{AR}} \iff \max I(x_{t+\Delta}; Z_t).
\]

\paragraph{Decomposition into Signal and Noise.}
Consider the observation $x$ as a composition of task-relevant signal $S$ and nuisance noise $N$ (e.g., ``Noisy TV'' static, camera grain), such that $x_{t+\Delta} = (S_{t+\Delta}, N_{t+\Delta})$. By applying the chain rule for mutual information\footnote{The chain rule for mutual information states that, for any random variables $X, Y, Z$, the information provided by the pair $(X, Y)$ about $Z$ is: $I(X, Y; Z) = I(X; Z) + I(Y; Z \mid X)$.}, we can decompose the information the representation $Z_t$ holds about the full observation into two components:
\[
I(x_{t+\Delta}; Z_t)
=
I(S_{t+\Delta}, N_{t+\Delta}; Z_t)
=
I(S_{t+\Delta}; Z_t) + I(N_{t+\Delta}; Z_t \mid S_{t+\Delta}).
\]
The first term, $I(S_{t+\Delta}; Z_t)$, measures how much the representation captures the predictable, task-relevant signal. The second term, $I(N_{t+\Delta}; Z_t \mid S_{t+\Delta})$, measures the information the representation retains about the noise, given that the signal is already known. If we assume the noise $N_{t+\Delta}$ is independent of the signal $S_{t+\Delta}$ (a common property of sensor noise), this second term identifies the specific capacity the model must allocate to capturing unpredictable nuisance variability.

Substituting this back into the loss function, we obtain the effective objective for AR models:
\begin{equation}
\label{eq:ar_loss_decomposed}
\mathcal{L}_{\text{AR}}
\approx
- I(Z_t; S_{t+\Delta}) \underbrace{- I(Z_t; N_{t+\Delta})}_{\text{Penalty term}}.
\end{equation}
Importantly, the second term acts as a penalty: the loss \emph{increases} if the representation $Z_t$ fails to capture information about the noise $N_{t+\Delta}$. To minimize $\mathcal{L}_{\text{AR}}$, the model is mathematically forced to allocate capacity to predict high-entropy, task-irrelevant details.

\paragraph{The VJEPA Advantage.}
In contrast to generative models that operate on pixels, VJEPA defines the prediction target in an abstract representation space $Z_{t+\Delta}$, generated by an encoder $f_{\theta'}$ designed to capture the predictable signal $S$ while discarding nuisance noise $N$. We can formally derive the VJEPA objective by considering the minimization of the latent negative log-likelihood (NLL):
\begin{equation}
\mathcal{L}_{\text{VJEPA}} = \mathbb{E} \left[ -\log p_\phi(Z_{t+\Delta} \mid Z_t) \right] \cong H(Z_{t+\Delta} \mid Z_t),
\end{equation}
where the expected NLL corresponds to the conditional entropy of the future latent state. Using the fundamental identity $H(X \mid Y) = H(X) - I(X; Y)$, we rewrite the objective as:
\begin{equation}
\mathcal{L}_{\text{VJEPA}} \cong H(Z_{t+\Delta}) - I(Z_t; Z_{t+\Delta}).
\end{equation}
Under the assumption of a well-regularized encoder, maintained via the KL-regularization term shown in Fig.\ref{fig:vjepa_architecture}, the marginal entropy $H(Z_{t+\Delta})$ is encouraged to remain high to prevent representation collapse. Consequently, the optimization reduces to maximizing the mutual information in the latent space: $\min \mathcal{L}_{\text{VJEPA}} \iff \max I(Z_t; Z_{t+\Delta})$. 

According to the Predictive Information Bottleneck (PIB) principle, an ideal target encoder should filter out unpredictable nuisances such that $Z_{t+\Delta} \approx f_{\theta'}(S_{t+\Delta})$. Substituting this signal-centric target yields the decomposed VJEPA advantage:
\begin{equation}
\label{eq:vjepa_loss_decomposed}
\mathcal{L}_{\text{VJEPA}}
\approx
- I(Z_t; Z_{t+\Delta})
\cong
- I(Z_t; S_{t+\Delta}).
\end{equation}
Because the target $Z_{t+\Delta}$ has already discarded the pixel-level noise $N_{t+\Delta}$, the VJEPA objective contains no penalty term for ignoring $N$. This allows the world model to filter out high-frequency observation noise \emph{before} the prediction loss is computed, focusing its limited capacity directly on the causal dynamics relevant for planning and control.

\section{Bayesian JEPA (BJEPA)}
\label{sec:bjepa}

VJEPA models uncertainty directly in \emph{representation space} by learning a conditional distribution over future latent states. For example, the predictive model Eq.~\eqref{eq:vjepa_predictive_model} can be formulated as
\begin{equation}
\label{eq:vjepa_predictive}
p_\phi(Z_T \mid Z_C)
=
\mathcal{N}\!\big(\mu_\phi(Z_C),\,\Sigma_\phi(Z_C)\big),
\end{equation}
which scales naturally to high-dimensional embeddings and aligns with the interpretation of $Z_t$ as a predictive information state.

In this section, we introduce \emph{Bayesian JEPA (BJEPA)}, which extends VJEPA by explicitly factorizing the predictive model into independent factors: a learned dynamics expert and a constraint-based expert. This enables principled incorporation of goals, constraints, and task structure while preserving JEPA’s core design principles.

\subsection{Bayesian Factorization via Product of Experts}

BJEPA approximates the target posterior distribution using a \emph{Product of Experts} (PoE) mechanism\footnote{PoE is a machine learning framework that models a probability distribution by combining the output from several simpler probability models (experts), this is achieved by multiplying their probability distributions and renormalizing.}. Strictly speaking, PoE is the computational operation used to fuse multiple density functions; however, we structure the experts such that their product corresponds to a Bayesian update.

We treat the historical context and the auxiliary constraints as conditionally independent sources of information regarding the future state, combining them via the product of their respective probability densities:
\begin{equation}
\label{eq:bjepa_posterior}
p(Z_T \mid Z_C, \eta)
 \propto 
p_{\text{like}}(Z_T \mid Z_C) 
p_{\text{prior}}(Z_T \mid \eta).
\end{equation}
This formulation allows us to interpret the factors through a Bayesian lens:
\begin{itemize}
    \item $p_{\text{like}}(Z_T \mid Z_C)$ acts as the \emph{predictive likelihood} (or transition model). It is learned from data and encodes the system's typical latent dynamics conditioned on past context.
    \item $p_{\text{prior}}(Z_T \mid \eta)$ acts as a \emph{latent-space prior}. It injects auxiliary information $\eta$, such as goal regions, safety constraints, or physical manifolds, into the inference process.
\end{itemize}

Mathematically, the product formulation implements a logical AND \footnote{Unlike a mixture model, which effectively performs a logical OR.}: the predicted state must satisfy the dynamics \emph{and} the constraints simultaneously. Just as a Kalman Filter update intersects the prediction with a measurement, BJEPA intersects physical possibility with task necessity. This effectively carves the latent space to the intersection of physically probable and task-compliant states, i.e. we get a predictive distribution that only has high probability in regions that are both physically reachable (dynamics) and closer to the goal (prior). As shown in Appendix~\ref{app:poe_derivation}, this formulation \footnote{The full Eq.\eqref{eq:bjepa_posterior} is: $p(Z_T \mid Z_C, \eta) = \frac{1}{\mathcal{Z}} \left( p_{\text{like}}(Z_T \mid Z_C) \times p_{\text{prior}}(Z_T \mid \eta) \right)$. In the context of PoE, multiplying probabilities (element-wise multiplication of two PDFs) corresponds to a logical AND operation on constraints; the resulting distribution represents states that are consistent with both the dynamics (history) AND the auxiliary constraints (goals). After multiplication, the resulting function usually does not integrate to unity; a normalization step divides the product by the partition function $\mathcal{Z}$ to ensure the result is a valid probability distribution.} is exact under the assumption that the context $Z_C$ and auxiliary info $\eta$ are conditionally independent given the target $Z_T$, up to a normalization constant and marginal scaling. Thus, BJEPA utilizes the PoE mechanism to realize a Bayesian inference process entirely within the representation space. Compared to a standard autoregressive (AR) model, we would have to search the entire predicted pixel space to find a path to a goal; in BJEPA, the PoE constraints this search within the intersected space \footnote{If both $p_{like}$ and $p_{prior}$ are Gaussian, the posterior can be derived analytically (also Gaussian), by simply adding the precisions (inverse variances) of the two experts in the latent space.}.

\subsection{The BJEPA Architecture}

BJEPA builds on the standard JEPA/VJEPA architecture by retaining the
\emph{context/target encoder structure} used for representation learning,
and augmenting it with an explicit \emph{latent-space prior encoder}.
Concretely, BJEPA consists of the following components:
\begin{itemize}
    \item a \textbf{context (online) encoder} $f_\theta$ that maps past observations to a latent context representation,
    \[
        Z_C = f_\theta(x_C),
    \]
    this is identical to the context encoder used in JEPA/VJEPA
    (Eq.~\eqref{eq:context_encoder});

    \item a \textbf{target encoder} $f_{\theta'}$ that maps future observations
    to target latent representations,
    \[
        Z_T = f_{\theta'}(x_T),
    \]
    as in standard JEPA training (Eq.~\eqref{eq:target_encoder});
    the parameters $\theta'$ are typically updated via an exponential moving average (EMA) of $\theta$ (Eq.~\eqref{eq:target_encoder_weights_EMA}) and are not directly optimized by gradient descent;

    \item a \textbf{predictive likelihood network} parameterizing
    \begin{equation}  \label{eq:bjepa_likelihood}
        p_{\text{like}}(Z_T \mid Z_C),
    \end{equation}
    which generalizes the VJEPA probabilistic predictor
    (Eq.~\eqref{eq:vjepa_predictive_model}) and models uncertainty over future
    latent representations conditioned on context;

    \item a \textbf{prior encoder} that maps auxiliary information
    $\eta$, e.g. goals, constraints, demonstrations, physical structure,
    to a latent-space prior distribution:
    \begin{equation} \label{eq:bjepa_prior}
        p_{\text{prior}}(Z_T \mid \eta),
    \end{equation}
    which is \emph{new} relative to JEPA/VJEPA and enables Bayesian conditioning
    in representation space.
\end{itemize}
The full Bayesian JEPA (BJEPA) architecture can be summarized by the following latent-space mappings:
{\footnotesize
\begin{align*}
Z_C &= f_\theta(x_C),
&& \text{Context / online encoder: what has happened (history)} \\[0.3em]
Z_T &= f_{\theta'}(x_T),
&& \text{Target encoder (training only): what usually happens (learned dynamics)
} \\[0.3em]
p_{\text{like}}(Z_T \mid Z_C)
&= \text{Likelihood}_\phi(Z_C),
&& \text{Predictive likelihood} \\[0.3em]
p_{\text{prior}}(Z_T \mid \eta)
&= \text{Prior}_{\phi'}(\eta),
&& \text{Latent prior: what should or must or expect to happen (goals/constraints)
} \\[0.3em]
p(Z_T \mid Z_C, \eta)
&\propto
p_{\text{like}}(Z_T \mid Z_C) 
p_{\text{prior}}(Z_T \mid \eta),
&& \text{'Bayesian posterior'} 
\label{eq:bjepa_posterior_summary}
\end{align*}
}

These components are schematically sketched in Fig.\ref{fig:bjepa_architecture}, in which the \textit{combination operator} (``Combine'') fuses the likelihood and prior distributions. Mathematically, this implements PoE via pointwise multiplication of densities followed by normalization. In the common case where both experts are parameterized as Gaussians, i.e. $p_{\text{like}}=\mathcal{N}(\mu_{dyn}, \Sigma_{dyn})$ and $p_{\text{prior}}=\mathcal{N}(\mu_{aux}, \Sigma_{aux})$, this operation admits a closed-form solution equivalent to a Kalman update in latent space. The posterior mean $\mu_{post}$ is computed as the precision-weighted average of the expert means:
$\mu_{post} = (\Sigma_{dyn}^{-1} + \Sigma_{aux}^{-1})^{-1} (\Sigma_{dyn}^{-1}\mu_{dyn} + \Sigma_{aux}^{-1}\mu_{aux})$. This mechanism ensures that the predicted representation $\hat{Z}_T$ satisfies both the physical dynamics and the task constraints, effectively intersecting the two manifolds in latent space. Note that, all components operate entirely in representation space. Observation reconstruction, reward supervision, and autoregressive likelihood factorization are not required.

\begin{figure}[H]
\centering
\begin{tikzpicture}[
    font=\small\sffamily,
    >=Latex,
    node distance=1.2cm and 1.2cm,
    box/.style={
        draw, rounded corners=2pt, 
        minimum width=3.0cm, minimum height=1.0cm, 
        text width=2.8cm, align=center
    },
    sbox/.style={
        draw, rounded corners=2pt, 
        minimum width=3.2cm, minimum height=1.1cm, 
        text width=3.0cm, align=center
    },
    op/.style={
        draw, rounded corners=2pt, 
        minimum width=2.8cm, minimum height=0.9cm, 
        align=center
    },
    circ/.style={
        draw, circle, minimum size=8mm, inner sep=0pt, outer sep=0pt
    },
    redbox/.style={
        draw=red!80!black, very thick, rounded corners=2pt, 
        minimum width=3.0cm, minimum height=1.0cm, 
        text width=2.8cm, align=center
    },
    redarr/.style={->, very thick, draw=red!80!black},
    darr/.style={->, thick, dashed},
    arr/.style={->, thick}
]

\coordinate (c1) at (0,0);      
\coordinate (c2) at (4.0,0);    
\coordinate (c3) at (8.0,0);    

\node[circ] (xC) at (c1) {$x_C$};
\node[circ] (eta) at (c2) {$\eta$};
\node[circ] (xT) at (c3) {$x_T$};

\node[below=0.2cm of xC, align=center, font=\footnotesize] {History Context};
\node[below=0.2cm of eta, align=center, font=\footnotesize] {Goals / Constraints};
\node[below=0.2cm of xT, align=center, font=\footnotesize] {Training Only};

\node[box, above=0.8cm of xC] (ctxenc) {$f_\theta$\\ \footnotesize Context Encoder};
\node[box, above=0.8cm of eta] (priorenc) {$f_{\phi'}$\\ \footnotesize Prior Encoder};
\node[box, above=0.8cm of xT] (tgtenc) {$f_{\theta'}$\\ \footnotesize Target Encoder};

\node[circ, above=0.8cm of ctxenc] (ZC) {$Z_C$};
\node[circ, above=0.8cm of priorenc] (psieta) {$\psi(\eta)$};
\node[circ, above=0.8cm of tgtenc] (ZT) {$Z_T$};

\coordinate (mid_left) at ($(ZC)!0.5!(psieta)$);

\node[op, above=3.5cm of mid_left] (combine) {Product of Experts \\ \footnotesize Multiply \& Normalize};

\node[sbox, above=0.6cm of combine] (post) {Posterior Predictor \\ \footnotesize $p(Z_T \mid Z_C, \eta)$};

\node[box, above=0.6cm of post] (infer) {Inference \\ \footnotesize MAP or Sampling};

\node[circ, above=0.6cm of infer] (Zhat) {$\hat{Z}_T$};

\node[sbox, below left=0.8cm and -0.5cm of combine] (like) {Likelihood \\ \footnotesize $p_{\text{like}}(Z_T\mid Z_C)$};
\node[sbox, below right=0.8cm and -0.5cm of combine] (prior) {Latent Prior \\ \footnotesize $p_{\text{prior}}(Z_T\mid \eta)$};

\node[redbox] (match) at (Zhat -| xT) {Match \\ \footnotesize JEPA/VJEPA Loss};

\draw[arr] (xC) -- (ctxenc);
\draw[arr] (ctxenc) -- (ZC);
\draw[arr] (eta) -- (priorenc);
\draw[arr] (priorenc) -- (psieta);
\draw[arr] (xT) -- (tgtenc);
\draw[arr] (tgtenc) -- (ZT);
\draw[arr] (ZC) -- (like.south); 
\draw[arr] (psieta) -- (prior.south); 
\draw[arr] (like.north) -- ++(0,0.3) -| (combine.220);
\draw[arr] (prior.north) -- ++(0,0.3) -| (combine.320);
\draw[arr] (combine) -- (post);
\draw[arr] (post) -- (infer);
\draw[arr] (infer) -- (Zhat);
\draw[redarr] (Zhat) -- (match);
\draw[darr, draw=red!80!black, line width=1.2pt] (ZT) -- (match);

\end{tikzpicture}
\caption{\textbf{Bayesian JEPA (BJEPA) architecture.} 
A predictive likelihood models dynamics while a prior encoder maps auxiliary input $\eta$ to a latent constraint. These distributions are fused via a PoE operator to yield a posterior predictive distribution. During training, the prediction is matched against the target encoder output using a JEPA loss.}
\label{fig:bjepa_architecture}
\end{figure}
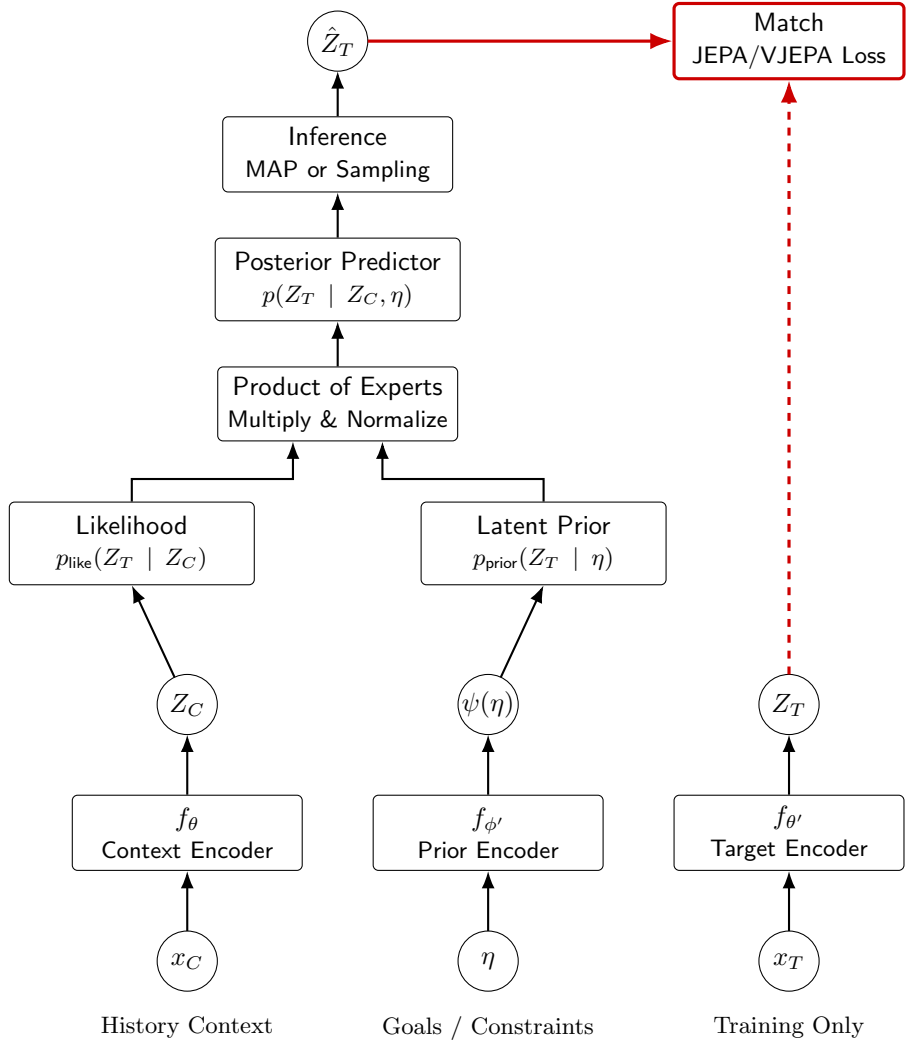

\paragraph{VJEPA as a special case of BJEPA}
VJEPA is recovered as a special case of BJEPA by choosing a uniform (uninformative / vague) prior:
\[
p_{\text{prior}}(Z_T \mid \eta) \propto 1.
\]
In this case, the Bayesian posterior predictor becomes
\[
p(Z_T \mid Z_C, \eta) = p_{\text{like}}(Z_T \mid Z_C),
\]
which reduces exactly to the VJEPA predictive distribution. Thus, BJEPA generalizes VJEPA without altering its training objective.

\subsection{Training and Inference}
\label{sec:bjepa_training_inference}
BJEPA allows for \emph{disentangled learning}: the dynamics (what is possible) can be learned from large-scale unlabeled data, while the prior (what is desired) can be specified or learned separately.

\paragraph{Representation Learning: Training with Structural Priors.}
The primary training phase focuses on the context encoder $f_\theta$ and the predictive likelihood expert $p_{\text{like}}(Z_T \mid Z_C)$. While we can assume an uninformative prior\footnote{We can assume an uninformative (uniform) prior, $p_{\text{prior}}(Z_T \mid \eta) \propto 1$, allowing the model to learn pure latent dynamics driven solely by observational data. Note that, in a modular architecture such as BJEPA, the training of the prior (the "constraint expert") is often distinct from the training of the dynamics (the "likelihood expert"). If they were to be trained jointly end-to-end, the dynamics model might overfit to specific tasks rather than learning general world physics.} to learn pure data-driven dynamics, BJEPA also allows us to inject \emph{structural priors} $p_{\text{struct}}(Z_T)$ during training. These priors can enforce desirable properties, such as stationarity, sparsity, or slowness, acting as an information bottleneck that filters out high-entropy nuisance variables (e.g. the "Noisy TV" distractors in later experiment).

The objective minimizes the variational negative log-likelihood of the target representations provided by the target encoder $f_{\theta'}$ (which serves as the ``teacher'' via EMA). The target encoder $f_{\theta'}$ provides stable regression targets (the ``ground truth'' latent states) derived from the data, exactly as in the standard VJEPA formulation. We optimize the parameters of the context encoder and the likelihood expert to maximize the predictive likelihood of the target representations while optionally regularizing the dynamics against the structural prior. Adapting Eq.~\eqref{eq:vjepa_objective}, the general BJEPA training objective is:
{\small
\begin{equation}
\label{eq:bjepa_training_objective}
\mathcal{L}_{\text{BJEPA}}
=
\mathbb{E}_{x} \left[
\underbrace{
\mathbb{E}_{Z_T \sim q_{\theta'}(\cdot \mid x_T)}
\Big[ -\log p_{\text{like}}(Z_T \mid Z_C) \Big]
}_{\text{Maximize Predictive Likelihood}}
+
\beta
\underbrace{
\mathrm{KL} \left(q_{\theta'}(Z_T \mid x_T)\,\|\,p_{\text{ref}}(Z)\right)
}_{\text{Regularize Information State}}
+
\gamma
\underbrace{
\mathrm{KL} \left(p_{\text{like}}(Z_T \mid Z_C)\,\|\,p_{\text{struct}}(Z_T)\right)
}_{\text{Enforce Structural Constraints}}
\right].
\end{equation}
}
\begin{itemize}
    \item \textbf{Likelihood ($\mathcal{L}_{\text{VJEPA}}$):} The first two terms (controlled by $\beta$) constitute the standard VJEPA objective (Eq.\ref{eq:vjepa_objective}). They ensure the learned representation captures the predictable mutual information between the past and future.
    \item \textbf{Structural Regularization ($\gamma$):} The third term penalizes the model if the "physically feasible" distribution ($p_{like}$) deviates too far from the "task compliant" distribution ($p_{prior}$), therefore forces the predicted dynamics to adhere to the structural prior. In our "Noisy TV" experiments (Section~\ref{sec:toy_experiment}), we use a \emph{static prior} ($p_{\text{struct}} \approx \text{const}$), which penalizes the model for tracking drifting, non-stationary noise, effectively filtering out distractors that violate the prior's structure.
\end{itemize}

The target encoder parameters are updated asymmetrically: the mean parameters $\mu_{\theta'}$ are updated via EMA as per Eq.~\eqref{eq:target_encoder_weights_EMA}, while the variance parameters $\sigma^2_{\theta'}$ are learned directly via gradient descent on the KL regularization term (the second term in Eq.~\eqref{eq:bjepa_training_objective}). This allows the model to learn appropriate uncertainty estimates for the target representations while maintaining stability through the EMA-updated mean. Alternatively, both the mean and variance parameters of the target encoder can be learned, or set to be constant. $p_{\text{like}}$ can capture the multi-modal distribution of valid future states (the ``physics'' of the latent space) without collapsing, thanks to the reference prior $p_{\text{ref}}$ (typically a unit Gaussian) and the asymmetry of the target encoder.

Note that we explicitly maximize the log-likelihood of the \emph{dynamics expert} $p_{\text{like}}(Z_T \mid Z_C)$ rather than the full posterior $p(Z_T \mid Z_C, \eta)$ (Eq.~\eqref{eq:bjepa_posterior}). This factorization is deliberate: the likelihood expert is responsible for learning how the world \emph{actually} evolves (data-driven physics), while the prior expert encodes how we \emph{expect} or \emph{want} the world to behave (constraints). Optimizing the full, fused posterior directly against the ground truth $Z_T$ can be risky, as it would force the model to pretend that the data satisfies our arbitrary constraints, for example, ignoring "Noisy TV" drift because the prior enforces stationarity (see later the experiment section). Instead, we require the model to fit the observational data (via the negative log-likelihood term) while being penalized for deviating from the structural expectations (via the KL regularization term). This ensures accurate physical modeling while filtering out nuisance variables that violate the structural prior.

\paragraph{Inference with Task-Specific Priors.}
Once the dynamics $p_{\text{like}}$ are trained, we can introduce new, task-specific priors $p_{\text{task}}(Z_T \mid \eta)$ at \textit{inference} time (or fine-tuning) to guide the agent, without retraining the underlying physics model. Unlike the likelihood, the prior expert $p_{\text{prior}}(Z_T \mid \eta)$ encodes task-specific constraints or goals. It is specified or trained in one of two ways, depending on the nature of the auxiliary information $\eta$:

\begin{itemize}
    \item \textbf{Learned Goal Priors (e.g. image goals):} if $\eta$ represents a goal observation (e.g. an image of the desired outcome), the prior encoder can be trained to map $\eta$ to a distribution in latent space. A common and efficient strategy is to reuse the \emph{frozen} target encoder $f_{\theta'}$ or train a separate lightweight mapper to predict the representation of the goal, e.g.
    \[
    p_{\text{prior}}(Z_T \mid \eta_{\text{img}}) \approx \mathcal{N}(Z_T \mid f_{\theta'}(\eta_{\text{img}}), \sigma^2_{\text{goal}} I).
    \]
    This allows the agent to pursue visual goals without retraining the dynamics model.

    \item \textbf{Analytic / Energy-Based Priors:} for logical constraints or safety regions (e.g. ``avoid region $Z_{\text{unsafe}}$''), the prior can be specified analytically as an energy function:
    \[
    p_{\text{prior}}(Z_T \mid \eta) \propto e^{-E(Z_T; \eta)}.
    \]
    For example, a ``stay close to reference trajectory'' prior might be defined as a quadratic energy well centered on a reference path.
\end{itemize}

\paragraph{Inference: Planning via Bayesian Fusion.}
During planning, the target encoder is inactive\footnote{The target encoder is not required at inference time unless new observations $x_T$ are incorporated.} (as the future is unknown). Instead, we fuse the learned dynamics with the task-specified prior using the Product of Experts mechanism to obtain the BJEPA posterior:
\begin{enumerate}
    \item \textit{Likelihood Step:} the learned expert $p_{\text{like}}(Z_T \mid Z_C)$ predicts where the system \emph{will} go based on history $Z_C$.
    \item \textit{Prior Step:} the prior expert $p_{\text{prior}}(Z_T \mid \eta)$ specifies where the system \emph{should} go to satisfy auxiliary information $\eta$ (e.g. goals or safety constraints).
    \item \textit{Fusion:} the planner samples or optimizes against the product distribution:
    \[
    p(Z_T \mid Z_C, \eta) \propto p_{\text{like}}(Z_T \mid Z_C) \times p_{\text{prior}}(Z_T \mid \eta).
    \]
\end{enumerate}
This effectively restricts the search space to the intersection of dynamically feasible and task-compliant trajectories, enabling efficient latent-space planning without requiring the model to have seen the specific task $\eta$ during the representation learning phase.

The planner then samples or optimizes latent states $Z_T$ that maximize the joint posterior:
\[
\log p(Z_T \mid Z_C, \eta)  \cong  \log p_{\text{like}}(Z_T \mid Z_C) + \log p_{\text{prior}}(Z_T \mid \eta) + \text{const}.
\]
This effectively intersects the manifold of physically feasible futures (from the Likelihood) with the manifold of task-compliant futures (from the Prior), enabling zero-shot generalization to new tasks defined by $\eta$.

\vspace{0.5cm}
The distinction between the training and inference phases in BJEPA represents a transition from latent alignment to explicit manifold intersection. During training, the system employs \textit{soft fusion}, where the task prior $\eta$ acts as a regularizer via a KL divergence term; this biases the dynamics predictor to learn a world model that naturally favors task-compliant regions without strictly forcing the prediction to originate from the fused distribution $p(Z_T \mid Z_C, \eta)$. In contrast, the inference phase utilizes \textit{hard fusion} through the PoE mechanism, explicitly intersecting the manifold of physically feasible trajectories with the task-specific prior. This ensures that the final latent plan is strictly confined to the intersection of what the world model deems possible and what the task requires, enabling efficient, zero-shot generalization to new constraints.

\paragraph{MAP \textit{vs.} Sampling in the Posterior.}
Given the fused posterior in Eq.~\eqref{eq:bjepa_posterior}, prediction and planning may formally proceed via:
\begin{itemize}
    \item \textbf{MAP prediction:}
    \[
    \hat Z_T^{\text{MAP}}
    =
    \arg\max_{Z_T}
    \big[
    \log p_{\text{like}}(Z_T \mid Z_C)
    +
    \log p_{\text{prior}}(Z_T \mid \eta)
    \big];
    \]
    \item \textbf{Sampling-based prediction:}
    \[
    Z_T^{(m)} \sim p(Z_T \mid Z_C, \eta),
    \qquad m=1,\dots,M.
    \]
    with $M$ being the total number of samples.
\end{itemize}
MAP prediction is computationally efficient and sufficient for unimodal posteriors, while sampling becomes essential in multi-modal or risk-sensitive settings where preserving the full uncertainty profile is necessary for robust control.

\subsection{Sequential BJEPA as Latent Bayesian Filtering}
\label{sec:bjepa_filtering}

When applied sequentially over time, BJEPA naturally induces a recursive filtering mechanism in representation space. By treating the context $Z_t$ as the current state summary ($Z_t=f_\theta(x_{\le t})$ summarizes history) and the auxiliary information $\eta_{t+1}$ as a ``virtual observation'' or constraint for the next step, the Product of Experts formulation implements a Bayesian filter update.

\paragraph{The BJEPA Filter Recursion.}
Standard Bayesian filtering consists of a \emph{prediction step} (time update) via a transition model and a \emph{correction step} (measurement update) via an observation likelihood. BJEPA implements this structure entirely in latent space:
\begin{enumerate}
    \item \textit{Prediction (Time Update):} The predictive likelihood $p_{\text{like}}(Z_{t+1} \mid Z_t)$ propagates the state forward based on system dynamics.
    \item \textit{Correction (Constraint Update):} The latent prior $p_{\text{prior}}(Z_{t+1} \mid \eta_{t+1})$ acts as a measurement likelihood for the auxiliary task information $\eta_{t+1}$, effectively ``correcting'' the predicted state to satisfy the goal or constraint.
\end{enumerate}
Combining these via the PoE mechanism yields the recursive update for the belief over the future trajectory:
\begin{equation}
\label{eq:bjepa_filter}
p(Z_{t+1} \mid Z_t, \eta_{t+1})
 \propto 
\underbrace{p_{\text{like}}(Z_{t+1} \mid Z_t)}_{\text{Dynamics (Prediction)}}
 \times 
\underbrace{p_{\text{prior}}(Z_{t+1} \mid \eta_{t+1})}_{\text{Constraint (Correction)}}.
\end{equation}
This formulation mirrors Bayesian filtering, but avoids explicit observation likelihoods. It also unifies planning and filtering: planning becomes the task of filtering the state distribution conditioned on the evidence of ``optimality'' or goal achievement encoded by $\eta$.

\paragraph{Comparison to Standard Filtering.}
In a standard Kalman Filter or HMM, the correction comes from sensory data $x_{t+1}$ via $p(x_{t+1} \mid Z_{t+1})$. In BJEPA, the correction comes from the \emph{intention} or \emph{constraint} $\eta_{t+1}$ via $p_{\text{prior}}$. This allows the agent to reason about the ``posterior'' future that is consistent with both physics and goals, without needing to generate or score high-dimensional observations.

\begin{remark}[Belief-Space MPC]
This perspective allows BJEPA planning to be interpreted as \emph{Stochastic Model Predictive Control (MPC)} performed via belief propagation. The likelihood expert propagates the physically feasible set (the learned belief), while the prior expert injects goals or constraints and carves out the task-relevant subset. The resulting posterior $p(Z_{t+1} \mid Z_t, \eta_{t+1})$ represents the optimal control distribution, generalizing the notion of ``Control as Inference'' to latent representation spaces and yielding a belief-space MPC formulation without explicit state estimation.
\end{remark}

\subsection{MAP Optimality under BJEPA}

We now show that the sufficiency of MAP planning (Theorem~\ref{thm:map_optimal} of VJEPA) extends to the BJEPA architecture, provided the ``experts'' are well-behaved.

\begin{theorem}[Optimality of MAP Planning under BJEPA]
\label{thm:bjepa_map_optimal}
Consider a one-step planning problem where the belief is formed by a BJEPA posterior:
\[
p(Z_{t+1} \mid Z_t, \eta, u_t) \propto p_{\text{like}}(Z_{t+1} \mid Z_t, u_t) \times p_{\text{prior}}(Z_{t+1} \mid \eta).
\]
Assume:
\begin{enumerate}
    \item The likelihood expert is Gaussian with action-independent covariance:
    $p_{\text{like}} = \mathcal{N}(\mu_{\text{dyn}}(Z_t, u_t), \Sigma_{\text{dyn}})$.
    \item The prior expert is Gaussian and independent of the current action:
    $p_{\text{prior}} = \mathcal{N}(\mu_{\text{aux}}(\eta), \Sigma_{\text{aux}})$.
    \item The stage cost $c(Z_{t+1}, u_t)$ is quadratic in $Z_{t+1}$.
\end{enumerate}
Then the action minimizing the expected cost under the full Bayesian posterior is equivalently obtained by minimizing the cost at the posterior MAP state:
\[
u_t^\star = \arg\min_{u_t} c(\hat{Z}_{t+1}^{\text{MAP}}, u_t).
\]
\end{theorem}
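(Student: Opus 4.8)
The plan is to reduce this statement to Theorem~\ref{thm:map_optimal} by recognizing that the BJEPA posterior, being a Product of Experts of two Gaussians, is itself a single Gaussian whose covariance does not depend on the action. First I would invoke the standard Gaussian-product identity already quoted in the BJEPA architecture section: multiplying $p_{\text{like}}=\mathcal{N}(\mu_{\text{dyn}}(Z_t,u_t),\Sigma_{\text{dyn}})$ and $p_{\text{prior}}=\mathcal{N}(\mu_{\text{aux}}(\eta),\Sigma_{\text{aux}})$ and renormalizing yields a Gaussian posterior $p(Z_{t+1}\mid Z_t,\eta,u_t)=\mathcal{N}(\mu_{\text{post}},\Sigma_{\text{post}})$ with precision-weighted parameters
\[
\Sigma_{\text{post}}^{-1}=\Sigma_{\text{dyn}}^{-1}+\Sigma_{\text{aux}}^{-1},
\qquad
\mu_{\text{post}}=\Sigma_{\text{post}}\big(\Sigma_{\text{dyn}}^{-1}\mu_{\text{dyn}}+\Sigma_{\text{aux}}^{-1}\mu_{\text{aux}}\big).
\]

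The critical observation, and the step I expect to carry the real weight of the argument, is that $\Sigma_{\text{post}}$ is \emph{action-independent}. This follows because Assumption~1 makes $\Sigma_{\text{dyn}}$ independent of $u_t$ and Assumption~2 makes $\Sigma_{\text{aux}}$ independent of $u_t$; since $\Sigma_{\text{post}}$ is a function of the two expert precisions alone, it inherits this independence. The posterior \emph{mean} $\mu_{\text{post}}$ does retain a dependence on $u_t$ through $\mu_{\text{dyn}}(Z_t,u_t)$, but this is exactly the quantity we optimize over, so it is benign. I would then record that for a Gaussian the MAP point coincides with the mean, hence $\hat Z_{t+1}^{\text{MAP}}=\mu_{\text{post}}$.

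With the posterior now a single Gaussian of action-independent covariance, the situation is formally identical to Theorem~\ref{thm:map_optimal}. Applying the same Gaussian moment identity to the quadratic cost $c(Z_{t+1},u_t)=(Z_{t+1}-z^\star)^\top Q_c(Z_{t+1}-z^\star)+r(u_t)$ gives
\[
\mathbb{E}\big[c(Z_{t+1},u_t)\mid Z_t,\eta,u_t\big]
=
(\mu_{\text{post}}-z^\star)^\top Q_c(\mu_{\text{post}}-z^\star)+r(u_t)+\mathrm{tr}(Q_c\Sigma_{\text{post}}),
\]
where by the independence established above the trace term is a constant offset with respect to $u_t$. Dropping this constant yields
\[
\arg\min_{u_t}\mathbb{E}\big[c(Z_{t+1},u_t)\big]
=
\arg\min_{u_t} c(\mu_{\text{post}},u_t)
=
\arg\min_{u_t} c(\hat Z_{t+1}^{\text{MAP}},u_t),
\]
which is the claim.

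The only genuine obstacle is the covariance-independence step; everything after it is a direct transcription of the VJEPA argument in Theorem~\ref{thm:map_optimal}. I would emphasize that the result is specifically a consequence of the PoE fusion preserving Gaussianity \emph{and} the two covariances being action-free, so that the fused precision carries no action dependence. If one later wanted the multi-step extension, one would additionally require the induced state covariances along the latent rollout to be action-independent, exactly as in the Corollary following Theorem~\ref{thm:map_optimal}; but for the stated one-step posterior this extra hypothesis is unnecessary.
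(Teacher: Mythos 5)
Your proposal is correct and follows essentially the same route as the paper's own proof: both form the Gaussian posterior via the Product of Experts, note that the posterior precision $\Sigma_{\text{post}}^{-1}=\Sigma_{\text{dyn}}^{-1}+\Sigma_{\text{aux}}^{-1}$ is action-independent, decompose the expected quadratic cost into a mean-dependent term plus a constant trace term, and invoke the mean--MAP coincidence for Gaussians. Your write-up is somewhat more explicit about the posterior mean formula and the reduction to Theorem~\ref{thm:map_optimal}, but the argument is the same.
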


\begin{proof}
Since the posterior is formed by the product of two Gaussians, it is itself a Gaussian:
\[
p(Z_{t+1} \mid \cdot) = \mathcal{N}(Z_{t+1} \mid \mu_{\text{post}}, \Sigma_{\text{post}}).
\]
The posterior precision (inverse covariance) is the sum of the expert precisions:
\[
\Sigma_{\text{post}}^{-1} = \Sigma_{\text{dyn}}^{-1} + \Sigma_{\text{aux}}^{-1}.
\]
Since both $\Sigma_{\text{dyn}}$ and $\Sigma_{\text{aux}}$ are independent of $u_t$, the resulting posterior covariance $\Sigma_{\text{post}}$ is also constant with respect to $u_t$. The expected quadratic cost decomposes\footnote{Similar to the proof to Theorem~\ref{thm:map_optimal} of VJEPA, see Appendix~\ref{app:map_optimal_proof}.} into a mean-dependent term and a trace term $\mathrm{tr}(Q\Sigma_{\text{post}})$. Since the trace term is constant with respect to the action, the optimization depends only on the posterior mean. For a Gaussian, the mean coincides with the MAP estimate.
\end{proof}

\subsection{Energy-Based Priors and Latent Optimization}
\label{sec:bjepa_ebm}

A key advantage of the Product of Experts formulation is flexibility: the prior $p_{\text{prior}}$ need not be a normalized probability density. It can be specified as an unnormalized \emph{energy-based model} (EBM):
\begin{equation}
p_{\text{prior}}(Z_T \mid \eta) \propto \exp\!\big(-E_{\phi'}(Z_T;\eta)\big),
\end{equation}
where $E_{\phi'}$ is a scalar energy function representing the ``cost'' of a state $Z_T$ given task $\eta$. This allows for complex, non-convex constraints (e.g. obstacle avoidance potentials or logical constraints) that are difficult to normalize.

Combining a Gaussian predictive likelihood $p_{\text{like}} = \mathcal{N}(\mu_\phi, \Sigma)$ with an energy-based prior yields a posterior whose MAP estimate corresponds to a \emph{latent-space energy minimization} problem:
\[
Z_T^{\mathrm{MAP}}
=
\arg\min_{Z_T}
\Big[
\underbrace{
\tfrac{1}{2}\|Z_T-\mu_\phi(Z_C)\|_{\Sigma^{-1}}^2
}_{\text{Dynamics Consistency}}
 + 
\underbrace{
E_{\phi'}(Z_T;\eta)
}_{\text{Task Constraint}}
\Big].
\]
This formulation reveals a deep connection to classical trajectory optimization and control:
\begin{itemize}
    \item The first term acts as a regularizer anchoring the solution to physically plausible futures (the learned dynamics).
    \item The second term acts as a task cost or potential field pulling the solution towards the goal.
\end{itemize}
Since all components in VJEPA/BJEPA are differentiable, this optimization can be solved efficiently via \textit{gradient descent} directly in the representation space, effectively performing \emph{planning as inference} without creating a separate discrete graph or search tree.

\subsection{Algorithm: Gradient-Based Planning with BJEPA}
\label{sec:bjepa_algo}

We synthesize the previous 3 theoretical components, i.e. Bayesian filtering, MAP optimality, and energy-based priors, into a practical control algorithm. Because the BJEPA posterior is fully differentiable, we can implement Model Predictive Control (MPC) via \textit{gradient descent in latent space}, this procedure is detailed in Algorithm~\ref{alg:bjepa_mpc}. At every time step, we optimize a sequence of actions such that the resulting latent trajectory maximizes the joint posterior probability defined by the learned dynamics and the task prior.

\begin{algorithm}[H]
\caption{BJEPA Gradient-Based MPC}
\label{alg:bjepa_mpc}
\begin{algorithmic}[1]
\REQUIRE Current history $x_{\le t}$, Task $\eta$, Horizon $H$
\REQUIRE Differentiable Dynamics $p_{\text{like}}$, Energy Prior $E_{\phi'}$
\STATE \textbf{Initialize:} Estimate current state $Z_t = f_\theta(x_{\le t})$
\STATE \textbf{Initialize:} Action sequence $u_{0:H-1}$ (e.g. random or warm-start)
\WHILE{optimization budget not exceeded}
    \STATE \textit{1. Forward Rollout (Dynamics Expert)}
    \STATE $\hat{Z}_0 \leftarrow Z_t$
    \FOR{$k=0$ \TO $H-1$}
        \STATE $\hat{Z}_{k+1} \leftarrow \mu_{\text{dyn}}(\hat{Z}_k, u_k)$ \COMMENT{Predict future mean}
    \ENDFOR
    
    \STATE \textit{2. Evaluate BJEPA Posterior (Energy)}
    \STATE $J \leftarrow 0$
    \FOR{$k=1$ \TO $H$}
        \STATE $J_{\text{prior}} \leftarrow E_{\phi'}(\hat{Z}_k; \eta)$ \COMMENT{Task Constraint Energy}
        \STATE $J \leftarrow J + J_{\text{prior}} + \lambda \|u_{k-1}\|^2$ \COMMENT{Add control cost}
    \ENDFOR
    
    \STATE \textit{3. Backward Pass (Optimization)}
    \STATE $\nabla_{u} J \leftarrow \text{Backpropagate}(J)$
    \STATE $u_{0:H-1} \leftarrow u_{0:H-1} - \alpha \nabla_{u} J$ \COMMENT{Gradient update}
\ENDWHILE
\STATE \textbf{Execute:} Apply $u_0$ to environment
\STATE \textbf{Recurse:} Receive $x_{t+1}$, update $Z_{t+1}$, repeat.
\end{algorithmic}
\end{algorithm}

This algorithm contrasts with standard trajectory optimization in that the ``cost function'' is not handcrafted; it is the learned energy prior $E_{\phi'}$. Furthermore, the ``dynamics constraints'' are provided by the learned likelihood expert $p_{\text{like}}$.

\subsection{Modularity, Transfer, and Continual Learning}
\label{sec:bjepa_transfer}

A fundamental advantage of the BJEPA factorization (Eq.~\ref{eq:bjepa_posterior}) is the \emph{semantic disentanglement} of environmental dynamics from task objectives. This modularity unlocks capabilities that are difficult to achieve in monolithic world models:

\paragraph{Zero-Shot Transfer via Prior Swapping.}
In standard reinforcement learning or world modeling (e.g. Dreamer, MuZero), the reward function is often entangled with the representation or dynamics. To change tasks, one must typically retrain or fine-tune the model weights. In BJEPA, the predictive likelihood $p_{\text{like}}$ learns ``objective-agnostic physics.'' Once trained, it can be transferred zero-shot to arbitrarily many new tasks simply by swapping the prior expert $p_{\text{prior}}(Z \mid \eta)$. The agent understands how the world evolves (Likelihood) independently of what it is currently trying to achieve (Prior). This 'reuse dynamics, adapt priors' paradigm facilitates zero-shot task transfer.

\paragraph{Continual Learning without Catastrophic Forgetting.}
Because task-specific information is injected at inference time via the prior (or the energy function), the dynamics model does not need to be updated to accommodate new goals. This mitigates catastrophic forgetting: the agent does not overwrite its knowledge of physics to learn a new task. The shared world model accumulates general knowledge, while task specificity is handled by lightweight, modular priors.

\vspace{0.5cm}
\paragraph{Summary.}
BJEPA generalizes JEPA-based world models by introducing an explicit `Bayesian posterior' in representation space. This enables uncertainty-aware planning, structured priors, belief propagation, and principled transfer and continual learning, while retaining JEPA’s core advantages: the avoidance of observation reconstruction, reward supervision, and autoregressive likelihood factorization.

\paragraph{Comparison with Existing WM Frameworks.}
Table.\ref{tab:world_model_comparison} contextualizes BJEPA within the landscape of latent world models. While methods such as \textit{Dreamer} handle uncertainty, they rely on computationally expensive observation reconstruction. Conversely, recent JEPA-based world models avoid reconstruction but typically lack explicit uncertainty quantification and rely on monolithic, learned reward functions. BJEPA uniquely occupies the intersection of these capabilities: it is likelihood-free (like JEPA) yet probabilistic (like Dreamer), and it replaces monolithic rewards with modular, energy-based priors. This specific combination allows for flexible, zero-shot task specification without retraining the underlying dynamics.

\begin{table}[H]
\centering
\caption{Comparison of latent world modeling frameworks. BJEPA is unique in combining likelihood-free learning with explicit uncertainty and modular task specification.}
\label{tab:world_model_comparison}
\begin{minipage}{\linewidth} 
    \centering
    \resizebox{\linewidth}{!}{
        \begin{tabular}{lccccc}
        \toprule
        Method & Obs.\ Reconstruction & Latent Uncertainty & Planning Space & Task Specification & Reward/Cost \\
        \midrule
        Dreamer & \checkmark & \checkmark & Latent & Monolithic & Learned \\
        JEPA-WM & \xmark & \xmark & Latent & Monolithic & Learned \\
        VJEPA (Ours) & \xmark & \checkmark & Latent & Implicit & None \\
        \textbf{BJEPA (Ours)} & \xmark & \checkmark & \textbf{Latent} & \textbf{Modular Prior} & \textbf{Energy / Prior} \\
        \bottomrule
        \end{tabular}}
    \vspace{1ex} 
    \begin{flushleft} 
    \scriptsize \textbf{Note on ``likelihood-free'':} In generative world models (e.g. Dreamer, VAEs), the model maximizes the likelihood of observed data $\max \log p_\psi(x \mid z)$, requiring a decoder to reconstruct every detail (e.g. texture, noise) from the latent state. In contrast, JEPA architectures are ``likelihood-free'' regarding observations because they never reconstruct the input. The loss operates entirely in representation space ($Z$), optimizing prediction of abstract features rather than pixel-level reconstruction.
    \end{flushleft}
\end{minipage}
\end{table}

\section{Toy Experiment: a ``Noisy TV'' Linear System}
\label{sec:toy_experiment}

We present an analytically tractable experiment designed to stress-test the \emph{nuisance invariance} property (Proposition~\ref{prop:nuisance_invariance}) of the JEPA family (JEPA/VJEPA/BJEPA). We construct an environment where observations contain a low-variance signal embedded in high-variance, yet predictable, nuisance noise (``distractors''). This setup mimics the ``Noisy-TV'' problem\footnote{The ``Noisy-TV'' problem originated as a thought experiment in Burda et al. \cite{burda2018noisyTV} regarding curiosity-driven reinforcement learning. It describes a failure mode where an agent, rewarded for seeking novelty or prediction error, becomes fixated on a source of uncontrollable, unpredictable noise (like a TV displaying static). Because the noise is inherently unpredictable, the agent continuously extracts high intrinsic rewards without making meaningful progress, effectively becoming a ``couch potato''.}, where a model is presented with a dominant source of high-entropy, task-irrelevant variability that threatens to overwhelm the signal representation. Implementation details can be found in Appendix.\ref{app:toy_experiment_details}.

\subsection{Environment Setup}

We define a Linear-Gaussian System where the observation dimension $D_x=20$ is significantly larger than the true state dimension $D_s=4$. The observation $x_t$ is composed of a \textit{signal} $s_t$ and a scaled \textit{distractor} $d_t$ projected through fixed matrices $C$ and $D$:
\[
x_t = C s_t + D (\sigma d_t) + \epsilon_t.
\]
The latent processes evolve as follows:
\begin{itemize}
    \item \textit{The Signal ($s_t$):} evolves according to a stable rotation, $s_{t+1} = A_{\text{rot}} s_t + w_t$, where $w_t \sim \mathcal{N}(0, 0.1^2 I)$. The orthogonal transition ensures the signal remains bounded and maintains constant variance, fixing the theoretical recoverability throughout the sequence.
    \item \textit{The Distractor ($d_t$):} evolves as a ``sticky'' random walk, $d_{t+1} = 0.9 d_t + v_t$. We scale the distractor by $\sigma \in [0, 8]$, allowing its variance to grow up to $\approx 64\times$ the signal variance. At $\sigma=8.0$, the SNR drops to $-2.2$ dB, effectively burying the signal in noise.
\end{itemize}

\subsection{Model Architectures and Evaluation Alignment}
\label{subsec:architectures}

We compare five models using a latent dimension $D_z = D_s = 4$. All architectures use linear transformations to match the environment. To ensure a fair comparison, the evaluation respects the temporal structure of each model:

\begin{itemize}
    \item \textit{VAE (Static):} acts as a static model where the latent $z_t$ represents the current state. It is evaluated against the current signal $s_t$.
    \item \textit{AR (Pixel-Predictive):} the bottleneck $z_t$ is optimized to predict future pixels $x_{t+1}$. Consequently, $z_t$ is evaluated on its ability to recover the \textit{future} signal $s_{t+1}$.
    \item \textit{JEPA/VJEPA (Latent-Predictive):} we extract the \textit{predictor output} (the explicit prediction of $Z_{t+1}$) for the linear probe. These are evaluated against $s_{t+1}$.
    \item \textit{BJEPA (Bayesian Fusion):} at inference, BJEPA uses \emph{hard fusion} (Product of Experts) to combine dynamics with a task prior. The resulting fused mean $\mu_{post}$ is evaluated against $s_{t+1}$.
\end{itemize}

\subsection{Results and Analysis}

Performance is measured via the coefficient of determination ($R^2$) using a linear probe fit on training data and evaluated on generalization test data.

\paragraph{Failure of Generative Models.}
As shown in Table~\ref{tab:toyExperimentResults} and Fig.\ref{fig:results_grid}, generative models (VAE, AR) suffer catastrophic degradation as noise increases. At scale $\sigma=8.0$, VAE test signal recovery drops to $R^2 \approx 0.50$ while its noise recovery remains high ($R^2 = 0.62$), confirming it has prioritized high-variance distractors. Similarly, the AR model drops to $R^2 = 0.578$.

\paragraph{Robustness of Joint-Embedding Architectures.}
All joint-embedding architectures (JEPA, VJEPA, BJEPA) demonstrate robustness to the ``Noisy TV'' distractor, maintaining $R^2 > 0.84$ even at scale 8.0. Deterministic JEPA achieved the highest accuracy ($R^2=0.93$), though it exhibited momentary instability at scale 3.0 ($R^2=0.841$). Probabilistic VJEPA and BJEPA showed superior training stability across seeds.

\paragraph{Qualitative Filtering.}
Fig.\ref{fig:reconstructions} illustrates the filtering effect. At Scale 8.0, VAE and AR reconstructions (dashed lines) track high-frequency noise. In contrast, BJEPA and VJEPA (solid lines) successfully act as latent filters, tracking the underlying true signal $s_{t+1}$ with high fidelity.

\begin{table}[H]
\centering
\footnotesize
\setlength{\tabcolsep}{5pt}
\renewcommand{\arraystretch}{1.05}
\begin{tabular}{@{} l l c c c @{}}
\toprule
Scale (SNR) & Model & $\uparrow$Signal $R^2$ (Tr/Te) & $\downarrow$Noise $R^2$ (Tr/Te) & $\downarrow$Time (Tr/Te) \\
\midrule
0.0 (inf dB)  & VAE   & \underline{\textbf{1.000}} / \underline{\textbf{1.000}} & NA / NA & 12.6s / 0.01s \\
              & AR    & 0.999 / 0.999      & NA / NA    & \underline{\textbf{6.4s}} / 0.01s \\
              & JEPA  & 0.947 / 0.930      & NA / NA    & 16.9s / 0.01s \\
              & VJEPA & 0.999 / 0.999      & NA / NA    & 13.9s / 0.01s \\
              & BJEPA & 0.987 / 0.981      & NA / NA    & 23.5s / 0.01s \\
\midrule
4.0 (3.8 dB)  & VAE   & 0.822 / 0.730      & 0.512 / 0.458    & 12.5s / 0.01s \\
              & AR    & 0.839 / 0.756      & 0.394 / 0.338    & \underline{\textbf{6.5s}} / 0.01s \\
              & JEPA  & \underline{\textbf{0.999}} / \underline{\textbf{0.999}}      & \underline{\textbf{0.004}} / \underline{\textbf{-0.010}}   & 16.6s / 0.02s \\
              & VJEPA & 0.994 / 0.993      & 0.025 / -0.007   & 13.8s / 0.01s \\
              & BJEPA & 0.920 / 0.899      & 0.213 / 0.156    & 23.2s / 0.01s \\
\midrule
8.0 (-2.2 dB) & VAE   & 0.613 / 0.499      & 0.656 / 0.620    & 12.3s / 0.01s \\
              & AR    & 0.680 / 0.578      & 0.491 / 0.449    & \underline{\textbf{7.1s}} / 0.01s \\
              & JEPA  & \underline{\textbf{0.947}} / \underline{\textbf{0.930}} & \underline{\textbf{0.226}} / \underline{\textbf{0.183}}    & 16.1s / 0.01s \\
              & VJEPA & 0.905 / 0.870      & 0.299 / 0.251    & 13.4s / 0.01s \\
              & BJEPA & 0.896 / 0.841      & 0.292 / 0.238    & 23.0s / 0.01s \\
\bottomrule
\end{tabular}
\caption{Performance metrics ($R^2$) across models at representative noise scales. $Tr$ denotes training set, $Te$ denotes test set. Full table is listed in Appendix.\ref{app:toy_experiment_details}.}
\label{tab:toyExperimentResults}
\end{table}

\begin{figure}[H]
\centering
\includegraphics[width=0.65\linewidth]{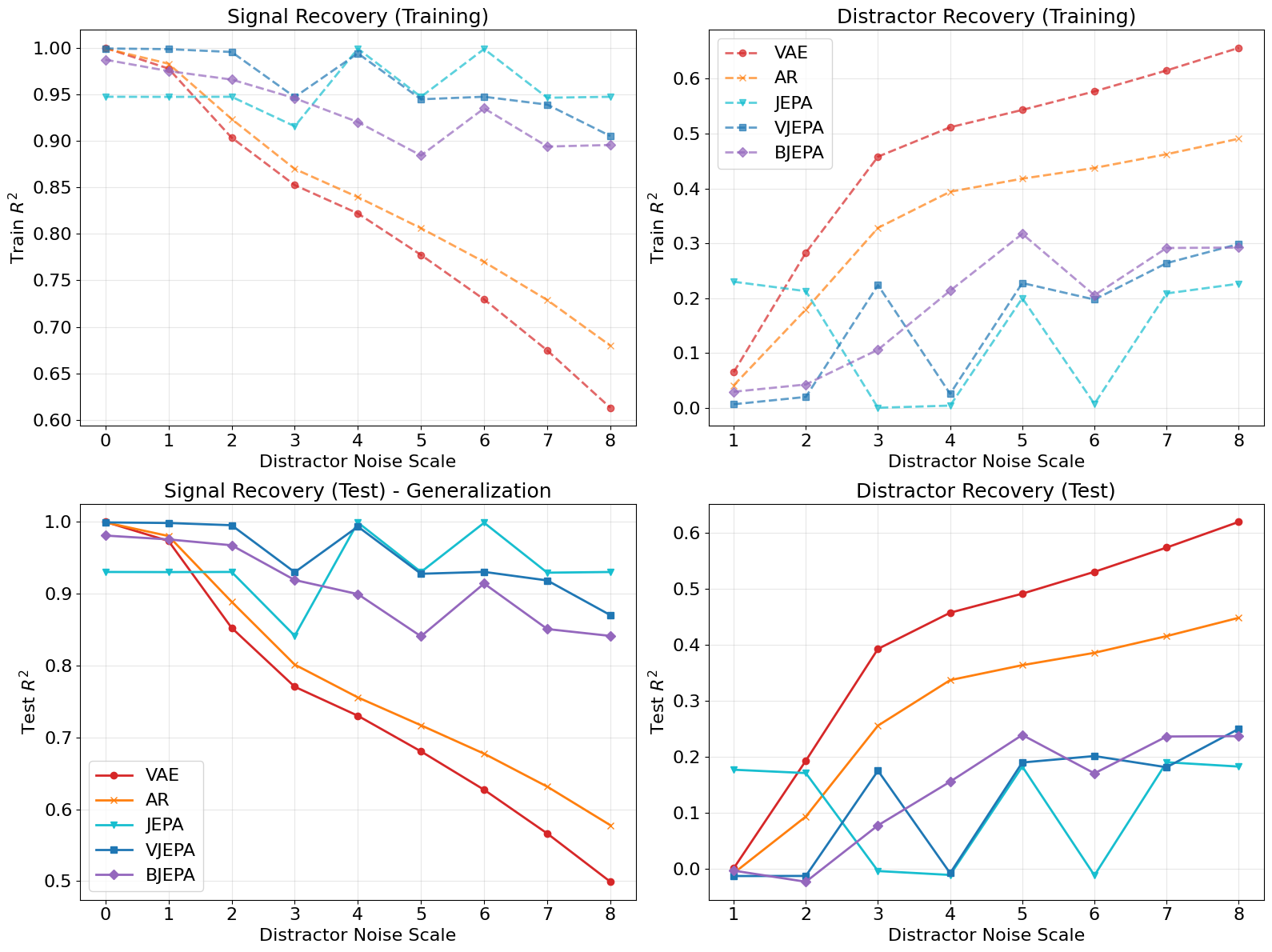} 
\caption{\textbf{Performance metrics across noise scales.} \textbf{Top Row:} Training set $R^2$. \textbf{Bottom Row:} Test set $R^2$ (Generalization). The generative models (VAE, AR) degrade linearly as noise increases, tracking the distractor (Bottom Right). The JEPA-based models (Blue/Cyan/Purple) maintain high signal recovery (Bottom Left) even at high noise scales, demonstrating invariance to nuisance variability.}
\label{fig:results_grid}
\end{figure}

\begin{figure}[H]
\centering
\includegraphics[width=1.0\linewidth]{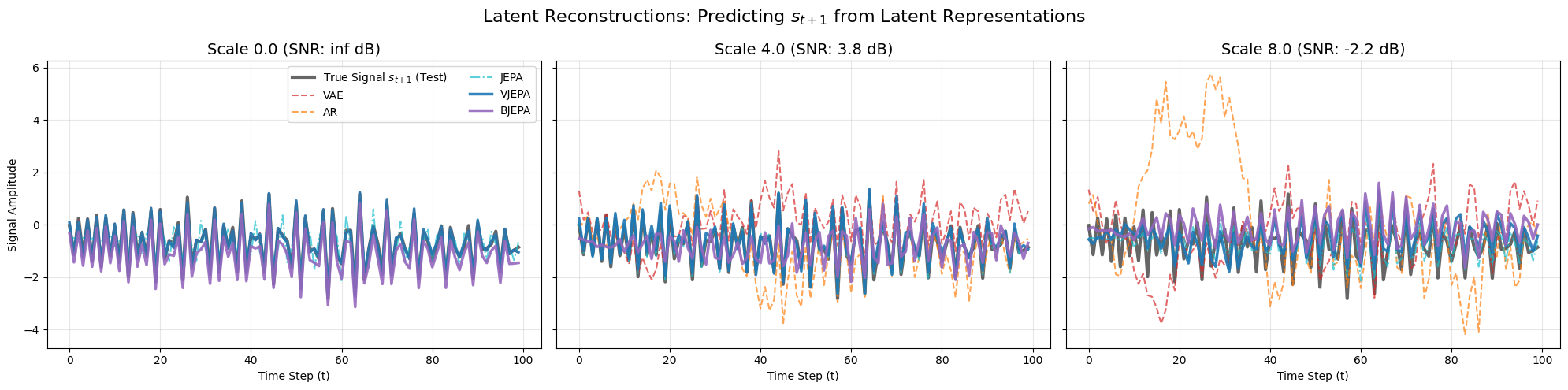} 
\caption{\textbf{Latent Reconstructions at varying noise scales.} At $\sigma=8.0$ (Right), the VAE and AR reconstructions (dashed lines) track the high-frequency noise. In contrast, BJEPA and VJEPA (solid lines) successfully filter the noise and track the underlying true signal (black line).}
\label{fig:reconstructions}
\end{figure}

\section{Discussion}
\label{sec:discussion}

In this work, we have formalized Joint Embedding Predictive Architectures (JEPA) not merely as a self-supervised training trick, but as a principled probabilistic framework for learning latent dynamical systems. By introducing \emph{Variational JEPA (VJEPA)} and its Bayesian extension \emph{BJEPA}, we bridged the gap between deterministic representation learning and probabilistic world models. Our analysis and experiments support several key conclusions regarding the nature of predictive learning.

\subsection{Unifying JEPA with Probabilistic State-Space Models}
JEPA is \textit{probably} best understood as a predictive state-space model trained by representation prediction rather than observation likelihood. A central contribution of this work is the decoupling of \emph{sequential modeling} from \emph{autoregressive observation likelihoods}.
Standard world models (e.g. Dreamer \cite{hafner2019dreamer}) and generative sequence models (e.g. Transformers \cite{vaswani2023attentionneed}) typically conflate the two, forcing the model to predict the next observation $x_{t+1}$ given history. This forces the latent state to account for high-entropy local noise.

VJEPA demonstrates that sequential structure does not harm representation learning; only autoregressive factorization over pixels does. By adopting a \emph{time-indexed} specialization (as in Section.\ref{sec:JEPA_as_dynamics} and Section.\ref{sec:jepa_control}), where contexts and targets are explicitly associated with temporal indices, VJEPA learns a mathematically rigorous \emph{latent} transition model $p_\phi(Z_{t+1}|Z_t)$. Notably, this transition kernel does not need to be autoregressive (as seen in Eq.\ref{eq:VJEPA_multi_step_prediction}) and is capable of belief propagation and filtering without ever estimating a density over pixels. This positions VJEPA as a scalable foundation for model-based control in high-dimensional, noisy environments.

\subsection{Nuisance Invariance: The ``PCA \textit{vs.} CCA'' Distinction}
Our "Noisy TV" experiment (Section~\ref{sec:toy_experiment}) provides empirical validation for the Information-Theoretic analysis in Section~\ref{sec:information_theoretic}. The failure of VAE and AR baselines on high-variance distractors highlights a fundamental distinction in representation learning objectives:
\begin{itemize}
    \item \textit{Generative Reconstruction (e.g. VAE):} These models implicitly perform a non-linear generalization\footnote{VAEs do not perform standard linear PCA by design, but they often implicitly discover the same principal directions (subspaces) as PCA while learning, particularly due to their regularization and training dynamics \cite{Michal2019vaePCA}.} of \emph{Principal Component Analysis (PCA)} \cite{Michal2019vaePCA}. They prioritize latent dimensions that explain the maximum variance in the input $x$. If nuisance noise has higher variance than the signal (as in our experiment), a reconstruction objective is mathematically compelled to model the noise.
    \item \textit{Predictive Association (VJEPA):} JEPA-based models implicitly perform a non-linear generalization of \emph{Canonical Correlation Analysis\footnote{CCA is a multivariate statistical method that finds the strongest linear relationships between two sets of variables, which creates new composite variables (canonical variates) from each set that are maximally correlated with each other \cite{weenink2003cca}.} (CCA \cite{weenink2003cca})} or the \emph{Information Bottleneck}. As established in Theorem~\ref{thm:mi_bound} (Section~\ref{sec:information_theoretic}), minimizing the VJEPA loss maximizes a lower bound on the mutual information between the past and the future ($I(Z_t; Z_{t+\Delta})$). Therefore, VJEPA (implicitly CCA) prioritizes dimensions with the highest predictive mutual information. It will ignore the noise regardless of its magnitude. This "nuisance invariance" property (Proposition~\ref{prop:nuisance_invariance}) makes VJEPA as a \emph{semantic noise filter} for downstream planning.
\end{itemize}

\subsection{Applications and Broader Impact}
\label{sec:applications}

While this work focuses on theoretical foundations, the VJEPA framework enables new capabilities in applied domains where state estimation and planning are required but observation reconstruction is costly or unnecessary.

\paragraph{Robotics and Embodied AI.}
In visual robotic control, $x_t$ represents high-dimensional sensory streams (e.g. camera feeds, lidar). Standard world models (e.g. Dreamer \cite{hafner2019dreamer}) often spend significant capacity modeling local textures and camera noise. VJEPA allows the agent to learn a \emph{task-specific physics engine} in latent space. For a manipulation task, the context $Z_t$ captures the geometry of objects, while the predictive model $p_\phi(Z_{t+1} \mid Z_t, u_t)$ predicts the consequences of motor commands. Planning can be performed via the \textit{VJEPA-MPC} algorithm (Algo.\ref{alg:jmpc}) to minimize a latent distance to a target goal embedding, enabling efficient visual control without pixel-level prediction. Furthermore, the BJEPA extension allows for \emph{zero-shot transfer}: a robot can learn the environmental dynamics (Likelihood Expert) once, and then swap in different goal constraints (Prior Expert) at inference time to solve varied tasks without retraining the underlying model.

\paragraph{Game Playing and Strategic Reasoning.}
In high-dimensional game environments (e.g. \textit{Atari}, \textit{Minecraft}), VJEPA functions as a self-supervised latent simulator. Unlike \textit{MuZero} \cite{Schrittwieser2020muzero}, which couples representation learning with a specific reward signal, VJEPA learns a reward-agnostic model of the game mechanics. The predictive model $p_\phi(Z_{t+1} \mid Z_t, u_t)$ captures the transition rules and physics of the game world. This enables planning via Monte Carlo Tree Search (MCTS \cite{MCTS_wiechowski_2022}) or trajectory optimization in the latent space, allowing the agent to reason about long-term strategies and effectively transfer the learned dynamics to new in-game tasks or modified rule sets without retraining the visual encoder.

\paragraph{Language-as-Action Planning.}
Current LLMs typically operate by minimizing the surface-level entropy of the next token. VJEPA offers an alternative view where language generation can be framed as \emph{planning in semantic space}. Here, the `world' is informational or symbolic, and the latent state $Z_t$ represents the current discourse status or reasoning state.
\begin{itemize}
    \item \textbf{Control Inputs:} The tokens (or chunks of text) act as discrete actions driving the system ($u_t \in \mathcal{V}$). $u_t$ can represent both generated content and user inputs such as prompts, which function as exogenous control signals that steer the initial semantic trajectory.
    \item \textbf{Latent Dynamics:} The model learns how choosing a word $u_t$ updates the semantic state:
    \[ Z_{t+1} \sim p_\phi(Z_{t+1} \mid Z_t, u_t). \]
\end{itemize}
In this framework, a response is not evaluated by its likelihood $\prod p(u_t \mid u_{<t})$, but by its \emph{effect on the latent world}. Generation becomes a trajectory optimization problem: identifying a sequence of tokens $u_{t:T}$ that drives the initial thought $Z_t$ to a target state $Z_{\text{goal}}$ (e.g. a state satisfying a logical constraint or answer condition). This allows text generation to leverage planning algorithms such as Beam Search \cite{meister2020BeamSearch}, Monte Carlo Tree Search (MCTS \cite{MCTS_wiechowski_2022}), or latent-space Cross-Entropy Method (CEM \cite{mannor2003CEM}), which decouples reasoning from surface-form statistics and potentially helps address hallucinations and long-horizon coherence.

\subsection{Limitations and Future Directions}
While VJEPA provides a rigorous probabilistic footing, our experiments revealed specific trade-offs:

\paragraph{Expressivity of the Predictive Distribution.}
In our experiments, the probabilistic VJEPA sometimes slightly underperformed the deterministic JEPA in peak accuracy, or required more careful hyperparameter tuning. We attribute this to the unimodality of the Gaussian assumption used in our implementation of $p_\phi$. In environments with complex, multi-modal bifurcations (e.g. a robot encountering an obstacle and choosing left or right), a unimodal Gaussian may average distinct modes, resulting in a blurry or incorrect belief. Future work should explore more expressive predictive heads, such as \textit{Gaussian Mixture Models (GMMs)} \cite{huang2025GMAsampling} or \textit{Latent Diffusion heads}, to capture complex aleatoric uncertainty while retaining the VJEPA objective.

\paragraph{Optimization Dynamics.}
We observed that the balance between the predictive loss and the KL regularization terms (controlled by $\beta$ in the VJEPA objective Eq.~\ref{eq:vjepa_objective}, and $\gamma$ in the BJEPA objective Eq.~\ref{eq:bjepa_training_objective}) is important. If regularization is too strong, the representation collapses; if too weak, the variance estimates become uncalibrated. Developing adaptive mechanisms for balancing these terms, similar to KL-balancing in latent overshooting, is a promising direction.

\section{Conclusion} \label{sec:conclusion}

This work introduces \emph{Variational JEPA (VJEPA)} and \emph{Bayesian JEPA (BJEPA)}, establishing the first formal probabilistic framework for Joint Embedding Predictive Architectures. By recasting JEPA as a variational inference problem in representation space, we resolve the ambiguity\footnote{Standard JEPA trains a deterministic predictor using Mean Squared Error. In stochastic environments with multimodal futures (e.g. a fork in the road), such a predictor converges to the conditional mean, which may be a physically invalid state (e.g. the average of `go left' and `go right' is `hit the wall'). This creates ambiguity regarding the semantics of the embedding: does it represent a specific future or an average? VJEPA resolves this by explicitly modeling the predictive distribution. Further, the variational derivation provides the theoretical guarantee that the learned state maximizes predictive mutual information, a necessary condition for it to serve as a sufficient information state for control.} regarding the probabilistic semantics of predictive embeddings and their suitability for control.

\paragraph{VJEPA: A Probabilistic Foundation.}
VJEPA generalizes standard deterministic JEPA by learning an explicit predictive distribution over future latent states. We showed that minimizing the VJEPA objective maximizes a variational lower bound on the predictive mutual information between the past and the future. Unlike generative world models, VJEPA achieves this without an autoregressive observation likelihood, effectively filtering out high-entropy nuisance variables (the "Noisy TV" problem) while retaining the information necessary for dynamics modeling. A key practical advantage of this formulation is the ability to perform principled uncertainty estimation (e.g. constructing credible intervals) by drawing multiple samples ($M \gg 1$) from the predictive distribution during inference \footnote{We distinguish between two sampling regimes using $K$ and $M$ to denote total number of samples. \textit{First. Training Sampling ($K=1$):} drawn from the target encoder distribution $q_{\theta'}(\cdot \mid x_T)$ during training to generate a concrete regression target. A single sample is sufficient because the stochasticity of mini-batch updates averages out the noise over time. \textit{Second. Inference Sampling ($M \gg 1$):} drawn from the predictor $p_\phi(\cdot \mid Z_C)$ during testing or planning (inference) to estimate uncertainty. Multiple samples are required here to accurately map the distribution's shape and construct credible intervals. This is also a key advantage of VJEPA/BJEPA over deterministic JEPA.}, a capability absent in deterministic JEPA.

\paragraph{BJEPA: Modular and Structural Control.}
We extended this VJEPA framework to BJEPA, which factorizes prediction into a learned \emph{Likelihood Expert} (dynamics) and a \emph{Prior Expert} (constraints). This modularity allows for the injection of structural priors, such as stationarity or goal-directedness, directly into the latent inference process via Product of Experts which bridges the gap between "what is physically possible" and "what is task-relevant". This separation enables zero-shot task transfer and robust planning in the presence of non-stationary noise.

\paragraph{Theoretical and Practical Unification.}
Our analysis unifies JEPA with classical ideas from Predictive State Representations (PSRs) and Bayesian filtering. We established that observation reconstruction is theoretically unnecessary for optimal control, provided the latent state captures sufficient predictive mutual information. Furthermore, we demonstrated that:
\begin{itemize}
    \item \textit{Sequential modeling does not imply autoregression:} one can learn rigorous temporal dynamics $p(Z_{t+1}|Z_t)$ without the computational burden of factorizing the high-dimensional observation density $p(x_{t+1}|x_{\le t})$.
    \item \textit{Predictive representations suffice for control:} we proved that under standard conditions, an optimal policy can be derived solely from the predictive latent state, rendering pixel reconstruction redundant for planning.
    \item \textit{Likelihood-free world models can be principled:} VJEPA breaks the historical conflation of probabilistic rigor with pixel-level likelihoods. It replaces heuristic regression losses with a formal variational framework operating entirely in latent space, ensuring principled uncertainty estimation and information maximization without modeling nuisance variability.
\end{itemize}

By combining the representational efficiency of JEPA with the rigor of probabilistic state-space models, VJEPA positions itself as a foundational framework for scalable, robust, uncertainty-aware world modeling and planning in high-dimensional, possibly noisy, real-world environments.

\bibliographystyle{plain}
\bibliography{reference}

\appendix
\section{Equivalence Between Squared Loss and Gaussian Likelihood}
\label{app:lsq_equivalence_with_MLE}

In this appendix, we show that minimizing the JEPA regression loss in Eq.~\eqref{eq:JEPA_squared_loss} is equivalent to maximizing the log-likelihood of the target embedding under an isotropic Gaussian predictive model (Eq.~\eqref{eq:JEPA_loss_prob_explain}).

Consider the probabilistic model assumed in Eq.~\eqref{eq:JEPA_loss_prob_explain}:
\[
p(Z_T \mid Z_C)
=
\mathcal{N} \left(
Z_T  \middle| 
\hat{Z}_T,  \sigma^2 I
\right),
\quad
\hat{Z}_T = g_\phi(Z_C, \xi_T).
\]
The corresponding negative log-likelihood is given by
\[
\begin{aligned}
- \log p(Z_T \mid Z_C)
&=
- \log \left[
\frac{1}{(2\pi\sigma^2)^{d/2}}
\exp \left(
- \frac{1}{2\sigma^2}
\| Z_T - \hat{Z}_T \|^2
\right)
\right] \\
&=
\frac{1}{2\sigma^2}
\| Z_T - \hat{Z}_T \|^2
+ \frac{d}{2} \log(2\pi\sigma^2),
\end{aligned}
\]
where \(d\) is the dimensionality of the embedding space.

Since the second term is constant with respect to the model parameters \((\theta, \phi)\), minimizing the negative log-likelihood is equivalent to minimizing
\[
\| Z_T - \hat{Z}_T \|^2.
\]
Thus, minimizing the deterministic JEPA objective 
$
\mathcal{L}_{\text{JEPA}} = \| \hat{Z}_T - Z_T \|^2
$
can be translated as maximum likelihood estimation under an isotropic Gaussian predictive distribution with fixed variance \(\sigma^2\).

This observation motivates the variational extension introduced in the main text, where the predictive distribution \(p(Z_T \mid Z_C)\) is modeled explicitly and the variance (or more general uncertainty structure) is learned rather than fixed.

\section{Relationships Between Information-Theoretic Quantities}
\label{app:info_theory_identities}

In this section, we summarize the fundamental relationships between entropy, cross-entropy, Kullback-Leibler (KL) divergence, and mutual information. These identities underpin the theoretical analysis of VJEPA and BJEPA. For further info, readers are encouraged to refer to standard texts such as \cite{mackay2002information}. Here, let's denote $p(x)$ as the true (or target) data distribution and $q(x)$ the approximating (or predicted) distribution.

\subsection{Cross-Entropy, Entropy, and KL Divergence}
Cross-entropy is the sum of the true entropy and the KL divergence. Intuitively, it represents the average number of bits needed to encode data from $p$ using a code optimized for $q$, which is the optimal bits (entropy of $p$) plus the penalty for using the wrong code (KL divergence):
\begin{equation}
\label{eq:cross_entropy_identity}
H(p, q) = H(p) + D_{\text{KL}}(p \,\|\, q),
\end{equation}
where the terms are defined as:
\begin{itemize}
    \item \textbf{Cross-Entropy:} $H(p, q) = -\sum_{x} p(x) \log q(x)$
    \item \textbf{Entropy:} $H(p) = -\sum_{x} p(x) \log p(x)$
    \item \textbf{KL Divergence:} $D_{\text{KL}}(p \,\|\, q) = \sum_{x} p(x) \log \frac{p(x)}{q(x)}$
\end{itemize}

\subsection{Mutual Information and Entropy}
Mutual information (MI) measures the reduction in uncertainty (entropy) of one variable given knowledge of another. It satisfies the following additive identities:
\begin{align}
I(X; Y) &= H(X) - H(X \mid Y) \\
I(X; Y) &= H(Y) - H(Y \mid X) \\
I(X; Y) &= H(X) + H(Y) - H(X, Y)
\end{align}
where $H(X \mid Y)$ is the conditional entropy. We used the first formula later in our proof of the non-collapsed global optimum associated with the VJEPA objective (Theorem~\ref{thm:collapse_avoidance}, and its detailed proof in Appendix.\ref{app:collapse_avoidance_proof}).

\subsection{Chain Rule for Mutual Information}
The chain rule for mutual information describes how the information provided by a pair of variables $(X, Y)$ about a third variable $Z$ can be decomposed into two distinct stages. The step-by-step expansion is as follows:

\begin{enumerate}
    \item \textit{Definition of Mutual Information:}
    The mutual information between $(X, Y)$ and $Z$ is the difference between the marginal entropy of $Z$ and the conditional entropy of $Z$ given $(X, Y)$:
    \begin{equation}
    I(X, Y; Z) = H(Z) - H(Z \mid X, Y)
    \end{equation}
    
    \item \textit{Decomposition of Conditional Entropy:}
    Using the identity for conditional entropy, we can express $H(Z \mid X, Y)$ by first conditioning on $X$:
    \begin{equation}
    H(Z \mid X, Y) = H(Z \mid X) - I(Y; Z \mid X)
    \end{equation}
    Rearranging this reveals that $H(Z \mid X, Y)$ represents the remaining uncertainty in $Z$ after observing both $X$ and $Y$.

    \item \textit{Substitution and Re-grouping:}
    Substituting the decomposition into the original definition:
    \begin{align}
    I(X, Y; Z) &= H(Z) - [H(Z \mid X) - I(Y; Z \mid X)] \\
    I(X, Y; Z) &= [H(Z) - H(Z \mid X)] + I(Y; Z \mid X)
    \end{align}

    \item \textit{Final Chain Rule Form:}
    Recognizing that $H(Z) - H(Z \mid X)$ is the definition of $I(X; Z)$, we arrive at the chain rule:
    \begin{equation}
    I(X, Y; Z) = I(X; Z) + I(Y; Z \mid X)
    \end{equation}
\end{enumerate}

\subsection{Symmetry Properties}
A crucial distinction between these metrics is their behavior under argument swapping:

\begin{itemize}
    \item \textbf{Mutual Information is Symmetric:}
    \[ I(X; Y) = I(Y; X). \]
    Intuitively, the amount of information $X$ provides about $Y$ is identical to the information $Y$ provides about $X$. This symmetry arises from the identity $I(X;Y) = H(X) + H(Y) - H(X,Y)$, where the joint entropy $H(X,Y)$ is invariant to ordering.

    \item \textbf{KL Divergence is Asymmetric:}
    \[ D_{\text{KL}}(p \,\|\, q) \neq D_{\text{KL}}(q \,\|\, p) \quad (\text{generally}). \]
    Minimizing $D_{\text{KL}}(p \,\|\, q)$ (forward KL) encourages the approximation $q$ to cover the support of $p$ (mode-covering), whereas minimizing $D_{\text{KL}}(q \,\|\, p)$ (reverse KL) encourages $q$ to seek a single mode of $p$ (mode-seeking).

    \item \textbf{Cross-Entropy is Asymmetric:}
    \[ H(p, q) \neq H(q, p). \]
    Since $H(p, q) = H(p) + D_{\text{KL}}(p \,\|\, q)$, the asymmetry of the KL divergence imparts asymmetry to the cross-entropy.
\end{itemize}

\paragraph{Common Confusion: Cross-Entropy vs. Joint Entropy.}
It is easy to confuse Cross-Entropy $H(p,q)$ with Joint Entropy $H(X,Y)$.
\begin{itemize}
    \item \textbf{Cross-Entropy} $H(p,q)$ is defined between two \emph{probability distributions} $p$ and $q$:
    \[ H(p, q) = -\sum_{x} p(x) \log q(x). \]
    It is \textbf{NOT} symmetric ($H(p,q) \neq H(q,p)$) because the roles of the weighting distribution $p(x)$ and the log-term $q(x)$ are distinct.
    \item \textbf{Joint Entropy} $H(X,Y)$ is defined over two \emph{random variables} $X$ and $Y$:
    \[ H(X, Y) = - \sum_{x} \sum_{y} p(x, y) \log p(x, y). \]
    It \textbf{IS} symmetric ($H(X,Y)=H(Y,X)$) because the joint probability $p(x,y)$ is invariant to ordering.
\end{itemize}

\subsection{Evidence Lower Bound (ELBO)}
In variational inference, we often aim to approximate an intractable posterior $p(z \mid x)$ using a simpler distribution $q(z \mid x)$. The Evidence Lower Bound (ELBO) provides a tractable lower bound on the log-likelihood of the data (the evidence) $\log p(x)$. It relates the log-evidence to the KL divergence between the approximate and true posteriors:
\begin{equation}
\log p(x) = \mathcal{L}_{\text{ELBO}}(q) + D_{\text{KL}}(q(z \mid x) \,\|\, p(z \mid x)),
\end{equation}
where the ELBO is defined as:
\begin{equation}
\mathcal{L}_{\text{ELBO}}(q) = \mathbb{E}_{q(z \mid x)}[\log p(x, z) - \log q(z \mid x)].
\end{equation}
Since the KL divergence is non-negative ($D_{\text{KL}} \ge 0$), the ELBO serves as a valid lower bound on the evidence:
\begin{equation}
\log p(x) \ge \mathcal{L}_{\text{ELBO}}(q).
\end{equation}
Maximizing the ELBO is therefore equivalent to minimizing the KL divergence between the approximate posterior $q(z \mid x)$ and the true posterior $p(z \mid x)$, effectively pushing the approximation towards the true distribution.

\subsection{Non-Negativity Properties}
An important property of these quantities is their non-negativity (for discrete variables), which ensures they act as valid objective functions or distance/divergence metrics:

\begin{enumerate}
    \item \textbf{Entropy:} $H(X) \ge 0$. It is zero if and only if $X$ is deterministic.
    \item \textbf{KL Divergence:} $D_{\text{KL}}(p \| q) \ge 0$. This is known as Gibbs' inequality. It is zero if and only if $p = q$ almost everywhere.
    \item \textbf{Mutual Information:} $I(X; Y) \ge 0$. It is zero if and only if $X$ and $Y$ are independent.
    \item \textbf{Cross-Entropy:} $H(p, q) \ge H(p)$. Since $D_{\text{KL}} \ge 0$, the cross-entropy is always lower-bounded by the entropy of the true distribution.
\end{enumerate}

\subsection{Summary of Relationships}
Table.\ref{tab:info_theory_summary} provides a quick reference for these relationships.

\begin{table}[H]
\centering
\caption{Summary of relationships and properties of information-theoretic quantities.}
\label{tab:info_theory_summary}
\begin{tabular}{llcc}
\toprule
\textbf{Quantity} & \textbf{Relation to others} & \textbf{Non-negative?} & \textbf{Symmetric?} \\
\midrule
Cross-Entropy ($H(p,q)$) & $H(p) + D_{\text{KL}}(p \| q)$ & Yes ($\ge H(p)$) & No \\[0.5em]
KL Divergence ($D_{\text{KL}}$) & $H(p,q) - H(p)$ & Yes & No \\[0.5em]
Mutual Information ($I(X;Y)$) & $H(X) - H(X \mid Y)$ & Yes & \textbf{Yes} \\[0.5em]
Mutual Information ($I(X;Y)$) & $D_{\text{KL}}( p(x, y) \| p(x)p(y) )$ & Yes & \textbf{Yes} \\
\bottomrule
\end{tabular}
\end{table}

\section{Compare VJEPA with Hidden Markov Models (HMMs)}
\label{app:hmmlink}

VJEPA shares a structural resemblance to a Hidden Markov Model (HMM) in that both posit a latent state sequence to explain temporal data. However, the resemblance is only partial: classical HMMs are \emph{generative probabilistic models of observations}, whereas VJEPA is a \emph{representation-predictive} model whose training objective operates entirely in the latent space. Here we clarify the similarities and differences.

\subsection{Classical HMMs}

An HMM defines a discrete-time latent Markov process $\{s_t\}_{t\ge 1}$ and a sequence of observations $\{x_t\}_{t\ge 1}$ via the joint factorization:
\begin{equation}
\label{eq:hmm_factorization}
p(s_{1:T}, x_{1:T})
=
p(s_1)\prod_{t=2}^T p(s_t \mid s_{t-1}) \prod_{t=1}^T p(x_t \mid s_t),
\end{equation}
where $p(s_t \mid s_{t-1})$ is the transition model and $p(x_t \mid s_t)$ is the emission (observation) model.
Learning\footnote{
A (discrete) HMM assumes an unobserved state sequence
$s_{1:T}$ where each $s_t \in \{1,\dots,N\}$ (the state space) and an observed sequence
$x_{1:T}$ where each $x_t \in \{v_1,\dots,v_M\}$ (the observation symbols).
It is parameterized by $\lambda=(A,B,\pi)$, where:
$A=[a_{ij}]$ with $a_{ij}=\Pr(s_t=j \mid s_{t-1}=i)$ (transitions);
$B=[b_j(k)]$ with $b_j(k)=\Pr(x_t=v_k \mid s_t=j)$ (emissions); and
$\pi=[\pi_i]$ with $\pi_i=\Pr(s_1=i)$ (initial distribution).
The three canonical tasks are:
(i) \emph{evaluation}: compute $\Pr(x_{1:T}\mid\lambda)$ (e.g. via the Forward algorithm);
(ii) \emph{decoding}: find the most likely hidden state sequence $\arg\max_{s_{1:T}} \Pr(s_{1:T} \mid x_{1:T}, \lambda)$ (e.g. via the Viterbi algorithm);
and (iii) \emph{learning}: estimate $\lambda$ from data (e.g. via the Baum-Welch/EM algorithm).
Details can be found in standard course materials, e.g. \textit{Computational Intelligence}, which the author taught at University of Aberdeen.}
typically proceeds by maximum likelihood (e.g. Expectation-Maximization), and inference computes the filtering posterior $p(s_t \mid x_{\le t})$ or smoothing posterior $p(s_t \mid x_{1:T})$.

In modern state-space model variants (SSMs), $s_t$ may be continuous and the dynamics nonlinear. However, the defining property remains: the model is trained to explain \emph{observations} through an explicit hidden-to-observation likelihood $p(x_t \mid s_t)$.

\subsection{VJEPA as ``Latent-Only'' Predictive Modeling}

VJEPA is not an observation-likelihood model. As detailed in Section~\ref{sec:vjepa}, VJEPA introduces a latent representation for the target region and learns a predictive distribution (Eq.~\ref{eq:vjepa_predictive_model}):
\[
p_\phi(Z_T \mid Z_C, \xi_T),
\]
matched against an amortized target-encoder ``inference'' distribution (Eq.~\ref{eq:vjepa_inference_target}):
\begin{equation}
q_{\theta'}(Z_T \mid x_T).
\end{equation}
Training minimizes a representation-space negative log-likelihood plus regularization (Eq.~\ref{eq:vjepa_objective}), strictly avoiding the computation of a sensory emission likelihood $p(x_T \mid Z_T)$. While one \emph{could} optionally introduce a decoder $p_\psi(x_T \mid Z_T)$, VJEPA does not optimize it; the learning signal flows solely through latent prediction.

\subsection{Similarities: Latent-State Semantics, Dynamics, and Beliefs}

Both HMMs and VJEPA posit latent variables that summarize information needed for prediction.
\begin{itemize}
    \item \textbf{Latent State:} In an HMM, $s_t$ is the latent state; in VJEPA, $Z_t$ plays an analogous role as a predictive latent embedding.
    \item \textbf{Transitions:} The VJEPA predictive distribution $p_\phi(Z_T \mid Z_C,\xi_T)$ acts as a \emph{transition model} in latent space, mapping a ``current'' latent summary (context) to a distribution over a ``future'' latent representation.
    \item \textbf{Filtering:} If $q_{\theta'}(Z_T\mid x_T)$ has nontrivial variance, VJEPA supports belief-state style reasoning (means, covariances, sampling), similar to probabilistic filtering in HMMs.
\end{itemize}

\subsection{Differences}

\paragraph{(1) Observations are not modeled.}
HMMs are trained to maximize $\log p(x_{1:T})$ via the emission likelihood $p(x_t \mid s_t)$.
In contrast, VJEPA focuses exclusively on predictive structure in representation space. It learns a \emph{predictive latent process} without committing to a generative model of sensory observations, thereby avoiding the modeling of high-entropy noise.

\paragraph{(2) The ``inference model'' is defined by the encoder, not the generator.}
In an HMM, the posterior $p(s_t \mid x_{\le t})$ is derived by inverting the generative model Eq.~\eqref{eq:hmm_factorization}. In VJEPA, $q_{\theta'}(Z_T \mid x_T)$ is an amortized distribution defined by the target encoder (the ``teacher''). It serves as the ground truth for the predictor, rather than being an approximation of the predictor's posterior.

\paragraph{(3) The Markov property is optional.}
The HMM assumes a strict first-order Markov property $p(s_t \mid s_{<t}) = p(s_t \mid s_{t-1})$. VJEPA does not require a particular temporal factorization; it can predict multi-step futures directly ($Z_t \to Z_{t+\Delta}$) without iterating a 1-step transition, and without autoregressive observation factorization (see Section~\ref{sec:JEPA_as_dynamics}).

\paragraph{(4) Representation geometry is learned, not prescribed.}
Classical HMMs typically impose strong inductive biases on the latent space (e.g. discrete states or Gaussian dynamics). VJEPA learns the embedding geometry via neural encoders, making it closer in spirit to continuous Predictive State Representations (PSRs) than to classical HMM fitting.

\subsection{Sequential VJEPA as an ``HMM Without Emissions''}

Since an HMM is defined by the tuple (Transition, Emission), one can interpret \textit{sequential} VJEPA as learning a \emph{latent predictive process} (the Transition component) while deliberately omitting the Emission optimization. This design choice retains probabilistic semantics and uncertainty propagation while discarding the burden of reconstructing irrelevant observation details.

\section{Kalman-Filter-Like Interpretation of VJEPA}
\label{sec:kalman_interpretation}

Although VJEPA does not explicitly model observation likelihoods, its operation in latent space closely mirrors the prediction step of a Bayesian filter. This analogy becomes exact in the limit of linear-Gaussian dynamics, providing a rigorous grounding for VJEPA's uncertainty estimates and justifying its use in belief-space planning.

\subsection{The Linear-Gaussian Benchmark}

Consider a classical linear dynamical system with Gaussian noise. The latent state $s_t$ evolves according to:
\begin{equation}
\label{eq:linear_dynamics}
s_{t+1} = A s_t + B u_t + w_t, \quad w_t \sim \mathcal{N}(0, Q),
\end{equation}
where $A$ is the state transition matrix, $B$ is the control matrix, and $Q$ is the process noise covariance.

A Kalman Filter maintains the belief state as a Gaussian distribution $b_t(s_t) = \mathcal{N}(\mu_t, P_t)$. The \textbf{prediction step} (or time update) propagates this belief forward:
\begin{align}
\label{eq:kf_mean_update}
\mu_{t+1|t} &= A \mu_t + B u_t, \\
\label{eq:kf_cov_update}
P_{t+1|t} &= A P_t A^\top + Q.
\end{align}
Here, the mean is shifted deterministically by the dynamics, while the uncertainty (covariance) grows due to the additive process noise $Q$.

\subsection{VJEPA as Generalized Latent Prediction}

VJEPA generalizes this structure to non-linear, amortized inference in representation space. If we interpret the context embedding $Z_t$ as a sufficient statistic for the current belief history (i.e. $Z_t \cong \mu_t$), then the probabilistic predictor $p_\phi$ implements a generalized transition update.

Recall the VJEPA predictive distribution\footnote{Unlike Theorem.\ref{sec:map_optimal} (MAP/mean predictive control is optimal under quadratic costs), in which we assumed the Gaussian covariance $\Sigma$ of the predictive distribution does \emph{not} depend on $u_t$, i.e. $p_\phi(Z_{t+1}\mid Z_t,u_t)
=\mathcal{N} \big(Z_{t+1}\mid \mu_\phi(Z_t,u_t),\,\Sigma\big)$, here we make it a more general case.}:
\[
p_\phi(Z_{t+1} \mid Z_t, u_t) = \mathcal{N}\big(\mu_\phi(Z_t, u_t), \Sigma_\phi(Z_t, u_t)\big).
\]
Comparing this to the Kalman equations \eqref{eq:kf_mean_update} and \eqref{eq:kf_cov_update}:

\begin{enumerate}
    \item \textbf{Learned Transition ($\mu_\phi \approx A, B$):} The network $\mu_\phi(Z_t, u_t)$ approximates the non-linear flow of the system. Unlike the fixed matrix $A$, it can model state-dependent transitions and complex control interactions.
    
    \item \textbf{State-Dependent Uncertainty ($\Sigma_\phi \approx Q$):} The predicted covariance $\Sigma_\phi(Z_t, u_t)$ generalizes the process noise $Q$. Importantly, VJEPA allows this noise to be \emph{heteroscedastic} (state-dependent). For example, the model can predict high uncertainty (large $\Sigma_\phi$) in chaotic regions of the state space and low uncertainty (small $\Sigma_\phi$) in stable regions.
\end{enumerate}

\paragraph{Uncertainty Propagation.}
In the standard Kalman Filter, the predictive uncertainty $P_{t+1|t}$ depends on the previous uncertainty $P_t$ via $A P_t A^\top$. In standard JEPA, $Z_t$ is typically a point estimate ($P_t \to 0$). However, if we treat the input $Z_t$ as a sample from a belief distribution, the VJEPA predictor naturally propagates this uncertainty via the sampling mechanism discussed in Section~\ref{sec:vjepa} (Eq.~\ref{eq:VJEPA_multi_step_prediction}):
\[
P_{t+1} \approx \underbrace{\mathbb{E}[\Sigma_\phi(Z_t, u_t)]}_{\text{Process Noise } Q} + \underbrace{\mathrm{Var}(\mu_\phi(Z_t, u_t))}_{\text{Propagated Uncertainty } A P_t A^\top}.
\]
Thus, VJEPA implicitly captures the full Kalman prediction logic: aleatoric uncertainty is output by $\Sigma_\phi$, while epistemic/belief uncertainty is propagated by passing a distribution of $Z_t$ through the non-linear mean $\mu_\phi$.

\subsection{Amortized Correction}

A full Bayesian filter also requires a \textbf{correction step} (measurement update) to incorporate new observations $x_{t+1}$:
\[
\mu_{t+1} = \mu_{t+1|t} + K_t (x_{t+1} - H \mu_{t+1|t}).
\]
VJEPA handles this differently. Instead of an explicit update equation involving observation likelihoods and Kalman gains $K_t$, VJEPA \textbf{amortizes} the correction into the context encoder $f_\theta$. 
\[
Z_{t+1} = f_\theta(x_{\le t+1}).
\]
The encoder learns to map the updated history $x_{\le t+1}$ directly to the posterior latent state $Z_{t+1}$. The VJEPA training objective (minimizing the divergence between the \emph{predicted} belief $p_\phi(\cdot|Z_t)$ and the \emph{encoded} belief $q_{\theta'}(\cdot|x_{t+1})$) effectively forces the predictor to be consistent with the encoder's implicit Bayesian updates.

\section{A Particle-Filter Interpretation of VJEPA}
\label{sec:pf_vjepa}

VJEPA admits a natural \emph{particle filter} interpretation when used sequentially. This allows us to track multimodal beliefs about the latent state without restricting ourselves to Gaussian assumptions, using a method known as Sequential Importance Resampling (SIR). The complete algorithmic flow, illustrating how latent predictions are fused with surrogate measurements, is visualized in Fig.\ref{fig:pf_diagram}.

Let the latent belief at time $t$ be represented by a set of $K$ weighted particles $\{Z_t^{(k)}, w_t^{(k)}\}_{k=1}^K$ approximating the posterior distribution $b_t(Z_t) \approx p(Z_t \mid x_{1:t})$:
\[
b_t(Z_t) \approx \sum_{k=1}^K w_t^{(k)}\,\delta(Z_t - Z_t^{(k)}),
\qquad
\sum_{k=1}^K w_t^{(k)} = 1.
\]
Given an action $u_t$ (or target specification $\xi_{t+1}$), the filter proceeds in three main stages:

\paragraph{1. Prediction (Proposal).} We propagate each particle forward through the learned VJEPA dynamics. We employ a \emph{bootstrap filter} design, using the dynamics model itself as the importance sampling proposal distribution:
\[
Z_{t+1}^{(k)} \sim p_\phi(Z_{t+1}\mid Z_t^{(k)}, \xi_{t+1}),
\qquad k=1,\dots,K.
\]
This step draws samples from the predictive belief $p(Z_{t+1} \mid x_{1:t})$.

\paragraph{2. Update (Weighting).} To incorporate the new information from observation $x_{t+1}$, we update the importance weights. Since the proposal distribution is the transition prior, the unnormalized weight $\tilde w_{t+1}^{(k)}$ is simply proportional to the likelihood of the observation given the particle's state. VJEPA provides a representation-space inference distribution $q_{\theta'}(Z_{t+1}\mid x_{t+1})$ (Eq.~\ref{eq:vjepa_inference_target}), which serves as a \emph{surrogate measurement model}. We consider two cases for defining this likelihood:

\paragraph{Option A: Explicit Observation Model.}
If one instantiates the optional decoder $p_\psi(x_{t+1}\mid Z_{t+1})$ (Eq.~\ref{eq:vjepa_obs_model_optional}), the weight update is the standard observation likelihood:
\[
\tilde w_{t+1}^{(k)} \propto w_t^{(k)}\, p_\psi(x_{t+1}\mid Z_{t+1}^{(k)}).
\]

\paragraph{Option B: Latent-Space Update (Likelihood-Free).}
Without a decoder, we can derive a \emph{pseudo-likelihood} in latent space. By inverting the inference encoder using Bayes' rule, we have $p(x \mid Z) \propto p(Z \mid x) / p(Z)$. Treating the target encoder $q_{\theta'}$ as an approximation to the true posterior $p(Z \mid x)$, the weight update becomes:
\[
\tilde w_{t+1}^{(k)} \propto w_t^{(k)}\,
\frac{q_{\theta'}(Z_{t+1}^{(k)}\mid x_{t+1})}{p_{\text{ref}}(Z_{t+1}^{(k)})},
\]
where $p_{\text{ref}}(Z)$ is the fixed reference prior (e.g. $\mathcal{N}(0,I)$) used in the VJEPA KL regularization term (Eq.~\ref{eq:vjepa_objective}). Intuitively, this ratio measures how much more probable the particle $Z_{t+1}^{(k)}$ is according to the data-informed encoder compared to the uninformed prior.

\paragraph{3. Normalize and Resample.}
The weights are normalized so they sum to one: $w_{t+1}^{(k)} = \tilde w_{t+1}^{(k)} / \sum_{j=1}^K \tilde w_{t+1}^{(j)}$. To prevent particle degeneracy (where one particle accumulates all the weight), a resampling step is performed, drawing $K$ new particles with replacement from the current set with probabilities proportional to their weights.

This interpretation establishes sequential VJEPA as a full \emph{belief-propagation mechanism} over latent predictive states, where $p_\phi$ provides the dynamics and $q_{\theta'}$ injects measurement information directly in the latent space.

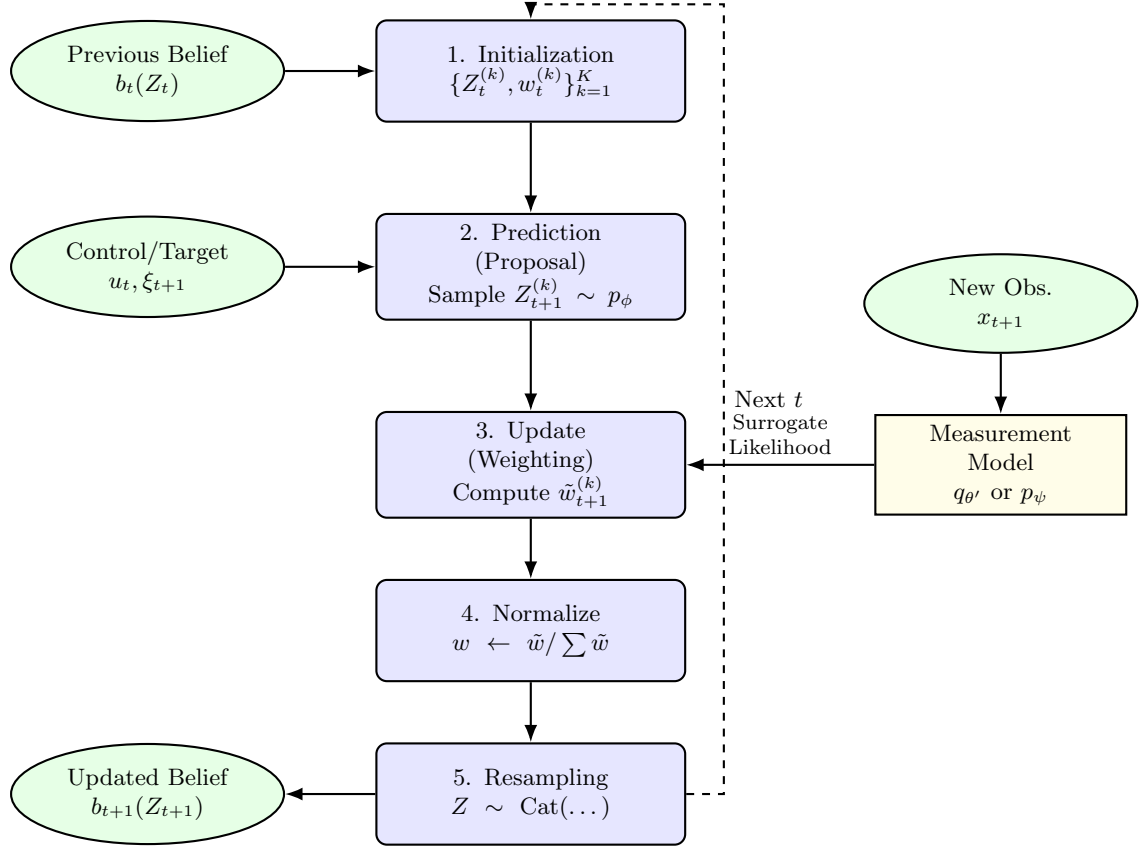
\begin{figure}[H]
\centering
\begin{tikzpicture}[
    auto, 
    >=Latex, 
    thick, 
    font=\footnotesize,
    node distance=1.0cm and 1.5cm, 
    block/.style ={draw, rectangle, fill=blue!10, text width=10em, text centered, rounded corners, minimum height=3.5em},
    line/.style ={draw, ->, thick},
    cloud/.style ={draw, ellipse, fill=green!10, minimum height=2.5em, text width=6em, text centered},
    model/.style ={draw, rectangle, fill=yellow!10, text width=8em, text centered, minimum height=3em}
]

    \node [block] (init) {1. Initialization \\ $\{Z_t^{(k)}, w_t^{(k)}\}_{k=1}^K$};
    \node [block, below=1.2cm of init] (predict) {2. Prediction \\ (Proposal) \\ Sample $Z_{t+1}^{(k)} \sim p_\phi$};
    \node [block, below=1.2cm of predict] (update) {3. Update \\ (Weighting) \\ Compute $\tilde{w}_{t+1}^{(k)}$};
    \node [block, below=0.8cm of update] (normalize) {4. Normalize \\ $w \leftarrow \tilde{w} / \sum \tilde{w}$};
    \node [block, below=0.8cm of normalize] (resample) {5. Resampling \\ $Z \sim \text{Cat}(\dots)$};

    \node [cloud, left=1.2cm of init] (prev_belief) {Previous Belief \\ $b_t(Z_t)$};
    \node [cloud, left=1.2cm of predict] (control) {Control/Target \\ $u_t, \xi_{t+1}$};
    \node [cloud, left=1.2cm of resample] (next_belief) {Updated Belief \\ $b_{t+1}(Z_{t+1})$};

    \node [model, right=2.5cm of update] (measurement) {Measurement \\ Model \\ $q_{\theta'}$ or $p_\psi$};
    \node [cloud, above=0.8cm of measurement] (obs) {New Obs. \\ $x_{t+1}$};

    \path [line] (prev_belief) -- (init);
    \path [line] (control) -- (predict);
    \path [line] (resample) -- (next_belief);

    \path [line] (init) -- (predict);
    \path [line] (predict) -- (update);
    \path [line] (update) -- (normalize);
    \path [line] (normalize) -- (resample);

    \path [line] (obs) -- (measurement);
    \path [line] (measurement) -- node[midway, above, align=center, font=\scriptsize] {Surrogate \\ Likelihood} (update);

    \draw [line, dashed] (resample.east) -- ++(0.5,0) |- node[near start, right] {Next $t$} ($(init.north east) + (0.5, 0.2)$) -- ($(init.north) + (0, 0.2)$) -- (init.north);

\end{tikzpicture}
\caption{Schematic overview of a Particle Filter (Sequential Monte Carlo) algorithm applied within the VJEPA framework. The process begins with a set of weighted particles representing the current belief. The \textbf{Prediction} step uses the learned probabilistic dynamics model $p_\phi$ as a proposal distribution to propagate particles forward. The \textbf{Update} step re-weights these particles based on new observation information, which is incorporated via a surrogate likelihood derived from the target encoder $q_{\theta'}$ (or an optional decoder $p_\psi$). Finally, \textbf{Resampling} is performed to avoid particle degeneracy, resulting in an updated particle set representing the posterior belief for the next time step.}
\label{fig:pf_diagram}
\end{figure}

\section{Detailed Proof of Theorem~\ref{thm:collapse_avoidance}}
\label{app:collapse_avoidance_proof}

We provide a detailed proof that the VJEPA objective admits no collapsed global optimum under the stated assumptions in Theorem~\ref{thm:collapse_avoidance}.
\vspace{0.5cm}

\noindent\fbox{%
    \begin{minipage}{\dimexpr\linewidth-2\fboxsep-2\fboxrule}
        \textbf{Theorem~\ref{thm:collapse_avoidance}: No Collapsed Global Optimum under Target Diversity}
        \vspace{0.5em}

        Consider the VJEPA objective
        \[
        \mathcal{L}_{\text{VJEPA}}
        =
        \mathbb{E}_{x}
        \mathbb{E}_{Z_T \sim q_{\theta'}(\cdot\mid x_T)}
        \big[-\log p_\phi(Z_T \mid Z_C, \xi_T)\big]
        +
        \beta\,\mathbb{E}_x
        \mathrm{KL}\!\left(q_{\theta'}(Z_T\mid x_T)\,\|\,p(Z_T)\right).
        \]
        Assume:
        \begin{enumerate}[label=(\roman*)]
            \item (\textbf{Target diversity}) There exist $x_T,x_T'$ such that
            $q_{\theta'}(\cdot\mid x_T)\neq q_{\theta'}(\cdot\mid x_T')$.
            \item (\textbf{Nontrivial conditioning}) The predictive family
            $\{p_\phi(\cdot\mid Z_C,\xi_T)\}$ can represent different distributions for different
            $Z_C$ (for fixed $\xi_T$).
        \end{enumerate}
        Then no global minimizer of $\mathcal{L}_{\text{VJEPA}}$ satisfies
        $f_\theta(x_C)\equiv c$ for all $x_C$.
    \end{minipage}%
}

\begin{proof} The proof follows three steps: we first analyze the structural consequences of a collapsed context encoder, then derive the minimal achievable prediction loss in this regime, and finally demonstrate that non-collapsed representations achieve strictly lower VJEPA objective values.

\paragraph{Step 1: Consequence of a collapsed context encoder.}
Assume, for contradiction, that the context encoder collapses:
\[
f_\theta(x_C) = c \qquad \forall x_C.
\]
Then the predictive model cannot depend on the context content and reduces to
\[
p_\phi(Z_T \mid Z_C,\xi_T) = p_\phi(Z_T \mid c,\xi_T),
\]
i.e.\ an \emph{unconditional} distribution for each $\xi_T$. Under this restriction, the first term of the VJEPA objective becomes
\begin{equation}
\label{eq:collapsed_loss}
\mathcal{L}_{\text{pred}}^{\text{coll}}
=
\mathbb{E}_{\xi_T}
\mathbb{E}_{x_T\mid \xi_T}
\mathbb{E}_{Z_T\sim q_{\theta'}(\cdot\mid x_T)}
\big[-\log p_\phi(Z_T\mid c,\xi_T)\big].
\end{equation}

\paragraph{Step 2: Optimal unconditional predictor.}
For a fixed $\xi_T$, define the \emph{aggregated target distribution}
\[
\bar q(Z_T\mid \xi_T)
:=
\mathbb{E}_{x_T\mid \xi_T}\big[q_{\theta'}(Z_T\mid x_T)\big].
\]
It is a standard result that the distribution minimizing cross-entropy with respect to $\bar q$ is $\bar q$ itself. Hence,
\[
\inf_{p(\cdot\mid c,\xi_T)}
\mathbb{E}_{x_T\mid\xi_T}
\mathbb{E}_{Z_T\sim q_{\theta'}(\cdot\mid x_T)}
[-\log p(Z_T)]
=
H(\bar q(\cdot\mid \xi_T)).
\]
Therefore, the best achievable collapsed prediction loss equals
\begin{equation}
\label{eq:collapsed_entropy}
\mathcal{L}_{\text{pred}}^{\text{coll}}
=
\mathbb{E}_{\xi_T}\, H(Z_T\mid \xi_T).
\end{equation}

\paragraph{Step 3: Decomposition via conditional mutual information.}
By standard entropy identities (see Appendix.\ref{app:info_theory_identities}),
\[
H(Z_T\mid \xi_T)
=
H(Z_T\mid X_T,\xi_T)
+
I(Z_T;X_T\mid \xi_T).
\]
Assumption (i) (\emph{target diversity}) implies that
\[
I(Z_T;X_T\mid \xi_T) > 0
\]
for at least one $\xi_T$ with nonzero probability mass. Hence the collapsed predictor incurs a strictly positive irreducible excess loss.

\paragraph{Step 4: Advantage of non-collapsed representations.}
Consider any non-collapsed encoder producing distinct $Z_C$ values correlated with $x_T$. By Assumption (ii), the predictive family can represent different $p_\phi(Z_T\mid Z_C,\xi_T)$ and therefore approximate $q_{\theta'}(Z_T\mid x_T)$ conditionally. In the realizable limit,
\[
\mathcal{L}_{\text{pred}}^{\text{non-coll}}
=
\mathbb{E}_{\xi_T}
\mathbb{E}_{x_T\mid \xi_T}
H(Z_T\mid X_T=x_T,\xi_T)
<
\mathcal{L}_{\text{pred}}^{\text{coll}}.
\]

\paragraph{Step 5: Role of the KL regularizer.}
The KL term
\[
\mathrm{KL}(q_{\theta'}(Z_T\mid x_T)\,\|\,p(Z_T))
\]
is independent of $Z_C$ and therefore does not eliminate the gap identified above.
Its role is to prevent pathological collapse of the target encoder, not to create
the strict separation between collapsed and non-collapsed optima.

\paragraph{Conclusion}
A collapsed context encoder forces the predictive model to fit a single unconditional
distribution to multiple distinct target distributions, incurring an irreducible
mutual-information gap. Since a non-collapsed solution can achieve strictly lower loss,
no collapsed representation can be globally optimal.

\paragraph{Interpretation.}
Collapse is avoided because predictive uncertainty depends on the target input.
VJEPA therefore prevents collapse through \emph{information mismatch}, not architectural heuristics.
Theorem~\ref{thm:collapse_avoidance} formalizes the intuitive point that if target embeddings vary with the target input, then a collapsed context representation forces the predictor to fit a single unconditional distribution to multiple distinct targets, incurring an irreducible prediction loss. Consequently, any solution satisfying the collapsed mode hypothesis cannot be globally optimal for the VJEPA objective. Under the stated assumptions, preventing collapse is therefore an intrinsic property of the objective itself, rather than a consequence of architectural heuristics or training asymmetries.

\end{proof}

\section{Detailed Proof of Theorem~\ref{thm:map_optimal}}
\label{app:map_optimal_proof}

We restate Theorem~\ref{thm:map_optimal} in Section.\ref{sec:map_optimal} for convenience. 
\vspace{0.5cm}

\noindent\fbox{%
    \begin{minipage}{\dimexpr\linewidth-2\fboxsep-2\fboxrule}
        \textbf{Theorem \ref{thm:map_optimal}: Optimality of MAP (Mean) Control under Quadratic Costs}
        \vspace{0.5em}

        Consider a one-step control problem with latent predictive model
        $p_\phi(Z_{t+1}\mid Z_t,u_t)$ and stage cost $c(Z_{t+1},u_t)$.
        Assume that for each $(Z_t,u_t)$,
        \[
        p_\phi(Z_{t+1}\mid Z_t,u_t) = \mathcal{N} \big(Z_{t+1}\mid \mu_\phi(Z_t,u_t),\,\Sigma\big),
        \]
        where the covariance $\Sigma$ does \emph{not} depend on $u_t$.
        Let the stage cost be quadratic in $Z_{t+1}$:
        \[
        c(Z_{t+1},u_t) = (Z_{t+1}-z^\star)^\top Q_c (Z_{t+1}-z^\star) + r(u_t),
        \]
        with $Q_c \succeq 0$ and arbitrary $r(\cdot)$.
        Then the action minimizing expected cost,
        \[
        u_t^\star \in \arg\min_{u_t}  \mathbb{E} \left[c(Z_{t+1},u_t)\mid Z_t,u_t\right],
        \]
        is equivalently obtained by minimizing the cost at the predictive mean:
        \[
        u_t^\star \in \arg\min_{u_t}  c(\mu_\phi(Z_t,u_t),u_t).
        \]
        Moreover, since a Gaussian has $\mu_\phi(Z_t,u_t)$ as its unique MAP point, this is
        equivalently \emph{MAP control}.
    \end{minipage}%
}

\begin{proof}
Fix a time index $t$ and a given latent state $Z_t$. For each candidate action $u_t$,
define the random variable
\[
Z \equiv Z_{t+1} \sim \mathcal{N}(\mu,\Sigma),
\qquad
\mu \equiv \mu_\phi(Z_t,u_t),
\]
where $\Sigma$ is constant with respect to $u_t$ by assumption.
We want to analyze this objective
\begin{equation} \label{eq:quadratic_cost_objective}
J(u_t)
:=
\mathbb{E} \left[c(Z,u_t)\mid Z_t,u_t\right]
=
\mathbb{E} \left[(Z-z^\star)^\top Q_c (Z-z^\star)\right] + r(u_t).
\end{equation}
in which the only nontrivial term is the expectation of the quadratic form.

\paragraph{Step 1: Expand the quadratic form.}
Let $d := Z - z^\star$. Then
\[
d^\top Q_c d
=
(Z-z^\star)^\top Q_c (Z-z^\star).
\]
Insert and subtract the mean $\mu$:
\[
Z - z^\star
=
(Z-\mu) + (\mu - z^\star).
\]
Therefore,
\[
(Z-z^\star)^\top Q_c (Z-z^\star)
=
\big((Z-\mu)+(\mu-z^\star)\big)^\top Q_c \big((Z-\mu)+(\mu-z^\star)\big).
\]
Expand the product into four terms:
\begin{align*}
&\big((Z-\mu)+(\mu-z^\star)\big)^\top Q_c \big((Z-\mu)+(\mu-z^\star)\big) \\
&\quad=
\boldsymbol{(Z-\mu)^\top Q_c (Z-\mu)}
+ (Z-\mu)^\top Q_c (\mu-z^\star)
+ (\mu-z^\star)^\top Q_c (Z-\mu)
+ \boldsymbol{(\mu-z^\star)^\top Q_c (\mu-z^\star)}.
\end{align*}

\paragraph{Step 2: Take expectations term-by-term.}
Take $\mathbb{E}[\cdot]$ under $Z\sim\mathcal{N}(\mu,\Sigma)$.

\emph{(i) Cross terms vanish.}
Because $\mathbb{E}[Z-\mu]=0$, we have
\[
\mathbb{E}\big[(Z-\mu)^\top Q_c (\mu-z^\star)\big]
=
\mathbb{E}[Z-\mu]^\top Q_c (\mu-z^\star)
=
0.
\]
Similarly,
\[
\mathbb{E}\big[(\mu-z^\star)^\top Q_c (Z-\mu)\big]
=
(\mu-z^\star)^\top Q_c \mathbb{E}[Z-\mu]
=
0.
\]

\emph{(ii) The constant term remains.}
Since $(\mu-z^\star)^\top Q_c (\mu-z^\star)$ is deterministic given $u_t$,
\[
\mathbb{E}\big[(\mu-z^\star)^\top Q_c (\mu-z^\star)\big]
=
(\mu-z^\star)^\top Q_c (\mu-z^\star).
\]

\emph{(iii) The centered quadratic term becomes a trace.}
Let $\varepsilon := Z-\mu$. Then $\mathbb{E}[\varepsilon]=0$ and
$\mathbb{E}[\varepsilon\varepsilon^\top]=\Sigma$.
We claim
\[
\mathbb{E}\big[\varepsilon^\top Q_c \varepsilon\big] = \mathrm{tr}(Q_c\Sigma).
\]
To see this, use the identity for any vector $\varepsilon$ and matrix $Q_c$:
\[
\varepsilon^\top Q_c \varepsilon
=
\mathrm{tr} \left(\varepsilon^\top Q_c \varepsilon\right)
=
\mathrm{tr} \left(Q_c \varepsilon\varepsilon^\top\right),
\]
where we used $\mathrm{tr}(a)=a$ for a scalar $a$, and cyclicity of trace \footnote{The cyclic property of the trace says that $\text{tr}(XY) = \text{tr}(YX)$; in our case $X=\varepsilon^T$ and $Y=Q_c\varepsilon$.}. Taking expectations,
\[
\mathbb{E}\big[\varepsilon^\top Q_c \varepsilon\big]
=
\mathbb{E} \left[\mathrm{tr} \left(Q_c \varepsilon\varepsilon^\top\right)\right]
=
\mathrm{tr} \left(Q_c \,\mathbb{E}[\varepsilon\varepsilon^\top]\right)
=
\mathrm{tr}(Q_c\Sigma),
\]
where linearity of expectation and linearity of trace justify exchanging $\mathbb{E}$ and $\mathrm{tr}$.

\paragraph{Step 3: Assemble the expected quadratic cost.}
Combining the above,
\[
\mathbb{E} \left[(Z-z^\star)^\top Q_c (Z-z^\star)\right]
=
(\mu-z^\star)^\top Q_c (\mu-z^\star) + \mathrm{tr}(Q_c\Sigma).
\]
Hence the full objective in Eq.~\eqref{eq:quadratic_cost_objective} becomes
\[
J(u_t)
=
(\mu_\phi(Z_t,u_t)-z^\star)^\top Q_c (\mu_\phi(Z_t,u_t)-z^\star)
+
\mathrm{tr}(Q_c\Sigma)
+
r(u_t).
\]

\paragraph{Step 4: Use the covariance-independence assumption to reduce the argmin.}
By assumption, $\Sigma$ does not depend on $u_t$, and therefore $\mathrm{tr}(Q_c\Sigma)$ is a constant with respect to $u_t$. Adding or subtracting a constant does not change the minimizer, so
\begin{align*}
\arg\min_{u_t} J(u_t)
&=
\arg\min_{u_t}
\left[
(\mu_\phi(Z_t,u_t)-z^\star)^\top Q_c (\mu_\phi(Z_t,u_t)-z^\star)
+ r(u_t)
\right] \\
&=
\arg\min_{u_t}  c(\mu_\phi(Z_t,u_t),u_t),
\end{align*}
which proves the mean-planning claim.

\paragraph{Step 5: Mean equals MAP for a Gaussian.}
For $Z\sim\mathcal{N}(\mu,\Sigma)$ with $\Sigma\succ 0$, the log-density is
\[
\log p(Z) = -\tfrac{1}{2}(Z-\mu)^\top\Sigma^{-1}(Z-\mu) + \text{const},
\]
which is strictly concave in $Z$ and uniquely maximized at $Z=\mu$.
Thus the MAP point equals the mean:
\[
\arg\max_Z p(Z) = \mu.
\]
Therefore minimizing cost at the predictive mean is equivalently minimizing cost at the predictive MAP point, i.e.\ MAP control.
\end{proof}

\begin{remark}[Assumptions matter]
If $\Sigma=\Sigma(u_t)$ depends on the action, then the term $\mathrm{tr}(Q_c\Sigma(u_t))$ is no longer constant and the optimal action generally depends on predictive uncertainty. Likewise, for non-quadratic costs or non-Gaussian/multimodal $p_\phi$, the expected cost typically cannot be reduced to evaluating a single point estimate.
\end{remark}

\begin{remark}[Post-action cost indexing]
We index the stage cost as $c(Z_{t+1},u_t)$ to emphasize that cost is incurred \emph{after} applying the action $u_t$ and observing its effect on the system. This convention is standard in stochastic MPC and belief-space control, where actions are optimized based on predicted future states rather than current ones
\cite{mesbah2016stochasticmpc,rawlings2017MPC}. 

Some reinforcement learning and dynamic programming formulations instead write the stage cost as $c(Z_t,u_t)$ \cite{sutton2018rl,puterman2014mdp}. The two conventions are equivalent under expectation for Markov dynamics, since
\[
\mathbb{E}\!\left[c(Z_{t+1},u_t)\mid Z_t,u_t\right]
\]
can be absorbed into a redefined cost function of $(Z_t,u_t)$. We adopt the post-action form to align with prediction-based control and MPC.
\end{remark}

\section{Proof of Proposition~\ref{prop:nuisance_invariance} (Invariance to Nuisance)}
\label{app:proof_nuisance}

We provide a formal derivation showing that generative objectives force the representation to encode nuisance information, whereas VJEPA objectives do not.

\paragraph{Setup.}
Let the observation at time $t$ be a tuple $x_t = (s_t, n_t)$, where:
\begin{itemize}
    \item $s_t$ is the \textbf{signal}: it contains information relevant for predicting the future target $Z_{t+\Delta}$.
    \item $n_t$ is the \textbf{nuisance}: it is independent of the future target, i.e. $I(n_t; Z_{t+\Delta} \mid s_t) = 0$.
\end{itemize}
Let $Z_t$ be the representation of $x_t$.

\subsection{Case 1: Generative / Reconstruction Objective}
Consider a generative world model (e.g. VAE or autoregressive model) that maximizes the log-likelihood of the observation $x_t$ given the representation $Z_t$:
\[
\mathcal{L}_{\text{gen}} = \mathbb{E}_{x_t} \left[ \log p(x_t \mid Z_t) \right].
\]
Using the decomposition $x_t = (s_t, n_t)$, the likelihood factorizes as (using chain rule of probability):
\[
\log p(s_t, n_t \mid Z_t) = \log p(s_t \mid Z_t) + \log p(n_t \mid s_t, Z_t).
\]
To maximize this objective (globally), the representation $Z_t$ must maximize both terms.
\begin{itemize}
    \item Maximizing $\log p(s_t \mid Z_t)$ requires $Z_t$ to encode the signal $s_t$.
    \item Maximizing $\log p(n_t \mid s_t, Z_t)$ requires $Z_t$ to encode the nuisance $n_t$ (so that the posterior entropy $H(n_t \mid Z_t)$ is minimized).
\end{itemize}
\textbf{Conclusion:} A solution $Z_t$ that discards $n_t$ (i.e. $Z_t$ depends only on $s_t$) is \emph{suboptimal} for the generative objective because it fails to reconstruct the nuisance component $n_t$. Thus, generative models \emph{force} the representation to be maximal (retaining $N$).

\subsection{Case 2: VJEPA Prediction Objective}
Consider the VJEPA objective, which maximizes the likelihood of the \emph{future target} $Z_{t+\Delta}$ given the current representation $Z_t$:
\[
\mathcal{L}_{\text{VJEPA}} = \mathbb{E} \left[ \log p(Z_{t+\Delta} \mid Z_t) \right].
\]
By the definition of nuisance variables, $Z_{t+\Delta}$ is conditionally independent of $n_t$ given $s_t$. Therefore, the true predictive distribution satisfies:
\[
p(Z_{t+\Delta} \mid s_t, n_t) = p(Z_{t+\Delta} \mid s_t).
\]
Now compare two candidate representations:
\begin{enumerate}
    \item \textbf{Maximal representation:} $Z_t^{\text{max}} = (s_t, n_t)$.
    \item \textbf{Minimal representation:} $Z_t^{\text{min}} = s_t$ (discards $n_t$).
\end{enumerate}
Substituting these into the objective:
\[
\mathcal{L}_{\text{VJEPA}}(Z_t^{\text{max}})
= \mathbb{E} [\log p(Z_{t+\Delta} \mid s_t, n_t)]
= \mathbb{E} [\log p(Z_{t+\Delta} \mid s_t)]
= \mathcal{L}_{\text{VJEPA}}(Z_t^{\text{min}}).
\]
\textbf{Conclusion:} The minimal representation $Z_t^{\text{min}}$ achieves the exact same optimal loss value as the maximal representation. Since the objective does not penalize the absence of $n_t$, VJEPA \emph{admits} (is invariant to) minimal solutions that discard nuisance variability.

\section{Derivation of the BJEPA Predictive Factorization}
\label{app:poe_derivation}

In this section, we derive the relationship between the joint posterior $p(Z_T \mid Z_C, \eta)$ and the individual predictive factors used in the BJEPA architecture (Eq.~\ref{eq:bjepa_posterior}). We aim to express the posterior probability of the target latent state $Z_T$, given the historical context $Z_C$ and auxiliary information $\eta$, in terms of the individual conditional probabilities $p(Z_T \mid Z_C)$ and $p(Z_T \mid \eta)$.

\paragraph{Assumption: Conditional Independence.}
We assume that the history encoder and the auxiliary encoder provide independent information about the target state. Formally, we assume that $Z_C$ and $\eta$ are conditionally independent given the true target $Z_T$:
\begin{equation}
\label{eq:cond_indep}
p(Z_C, \eta \mid Z_T) = p(Z_C \mid Z_T) \, p(\eta \mid Z_T).
\end{equation}
This implies that if the true future state is known, the past history and the auxiliary constraints (e.g. goals) do not provide additional information about each other.

\paragraph{Derivation.}
Starting with Bayes' theorem for the posterior:
\[
p(Z_T \mid Z_C, \eta) = \frac{p(Z_C, \eta \mid Z_T)\, p(Z_T)}{p(Z_C, \eta)}.
\]
Substituting the conditional independence assumption from Eq.~\eqref{eq:cond_indep}:
\[
p(Z_T \mid Z_C, \eta) = \frac{p(Z_C \mid Z_T)\, p(\eta \mid Z_T)\, p(Z_T)}{p(Z_C, \eta)}.
\]
Next, we invert the individual likelihood terms using Bayes' theorem to express them in terms of the predictive distributions (the "experts"):
\[
p(Z_C \mid Z_T) = \frac{p(Z_T \mid Z_C)\, p(Z_C)}{p(Z_T)} 
\quad \text{and} \quad
p(\eta \mid Z_T) = \frac{p(Z_T \mid \eta)\, p(\eta)}{p(Z_T)}.
\]
Substituting these back into the posterior equation:
\[
p(Z_T \mid Z_C, \eta) = \frac{\left( \frac{p(Z_T \mid Z_C)\, p(Z_C)}{p(Z_T)} \right) \left( \frac{p(Z_T \mid \eta)\, p(\eta)}{p(Z_T)} \right) p(Z_T)}{p(Z_C, \eta)}.
\]
Simplifying the terms:
\[
p(Z_T \mid Z_C, \eta) = \frac{p(Z_T \mid Z_C)\, p(Z_T \mid \eta) \, p(Z_C) \, p(\eta)}{p(Z_T) \, p(Z_C, \eta)}.
\]
Since $Z_C$ and $\eta$ are observed inputs (constants with respect to the optimization of $Z_T$), the terms $p(Z_C)$, $p(\eta)$, and $p(Z_C, \eta)$ can be absorbed into a normalization constant $\mathcal{Z}$. The relation reduces to:
\[
p(Z_T \mid Z_C, \eta) \propto \frac{p(Z_T \mid Z_C)\, p(Z_T \mid \eta)}{p(Z_T)}.
\]
In the BJEPA formulation, we approximate the marginal prior over targets $p(Z_T)$ as uniform (or absorbed into the learned bias of the experts), yielding the standard Product of Experts form:
\[
p(Z_T \mid Z_C, \eta) \propto p(Z_T \mid Z_C)\, p(Z_T \mid \eta).
\]

\section{Toy Experiment Details}
\label{app:toy_experiment_details}

Here we provide the specific implementation details, hyperparameters, and architectural and training specifications used to generate the results in Section~\ref{sec:toy_experiment}.

\subsection{Environment and Data Generation}
The synthetic environment is a Linear-Gaussian system designed to simulate the ``Noisy TV'' scenario.
\begin{itemize}
    \item \text{Dimensions:} Observation dimension $D_x = 20$, Signal dimension $D_s = 4$, Distractor dimension $D_d = 4$.
    \item \textbf{Dynamics:}
    \begin{itemize}
        \item \emph{Signal:} $s_{t+1} = A_{\text{rot}} s_t + w_t$, where $A_{\text{rot}}$ is an orthogonal matrix generated via QR decomposition of a random Gaussian matrix. Process noise $w_t \sim \mathcal{N}(0, 0.1^2 I)$.
        \item \emph{Distractor:} $d_{t+1} = 0.9 d_t + v_t$. Process noise $v_t \sim \mathcal{N}(0, 0.3^2 I)$.
    \end{itemize}
    \item \textbf{Observations:} $x_t = C s_t + D (\sigma d_t) + \epsilon_t$. Matrices $C$ and $D$ are random matrices with column-wise unit norm normalization. Sensor noise $\epsilon_t \sim \mathcal{N}(0, 0.01^2 I)$.
    \item \textbf{Dataset:} We generate a continuous training trajectory of $T_{\text{train}}=6000$ steps and a testing trajectory of $T_{\text{test}}=2000$ steps.
    \item \textbf{Noise Scales:} The scaling factor $\sigma$ is swept from $0.0$ to $8.0$ in 9 equidistant steps.
\end{itemize}

\subsection{Model Architectures and Training Strategy}
\label{subsec:architectures_app}

We implement five models to map the current observation $x_t \in \mathbb{R}^{D_x}$ to a latent representation $z_t \in \mathbb{R}^{D_z}$. To impose a hard trade-off between signal and distractor encoding, we set the latent dimension equal to the true signal dimension ($D_z = D_s = 4$). All models utilize strictly linear transformations ($f(x) = Wx$) without bias terms (`bias=False`), matching the linear nature of the data generation process.

We compare two modeling paradigms: \textbf{Generative Reconstruction} (VAE, AR) operating in pixel space, and \textbf{Joint-Embedding Prediction} (JEPA, VJEPA, BJEPA) operating via latent-space autoregression.

\subsubsection{Generative Baselines (Pixel Reconstruction)}
These models must reconstruct the full observation $x$, including high-variance distractors.

\paragraph{Linear Variational Autoencoder (VAE).}
The VAE treats the problem as a static modeling task. It maps $x_t$ to a latent distribution $q(z_t|x_t)$ and reconstructs $x_t$.
\begin{itemize}
    \item \textit{Architecture:} A probabilistic encoder with two linear heads $\mu_\phi, \log\sigma^2_\phi$ mapping $D_x \to D_z$, and a deterministic decoder $\hat{x}_t = W_{dec} z_t$.
    \item \textit{Objective:} The Evidence Lower Bound (ELBO)\footnote{Under the assumption of a Gaussian likelihood with identity covariance, $p(x_t|z_t) = \mathcal{N}(\hat{x}_t, I)$, minimizing the negative log-likelihood is equivalent to minimizing the squared Euclidean distance $\| x_t - \hat{x}_t \|^2$ plus a constant.}:
    \[ \mathcal{L}_{\text{VAE}} = \| x_t - \hat{x}_t \|^2 + \beta D_{KL}(q(z_t|x_t) \| \mathcal{N}(0, I)). \]
    \item \textit{Hypothesis:} To minimize reconstruction error, the VAE must prioritize features with the highest variance. We predict it will encode distractors and ignore the signal.
\end{itemize}

\paragraph{Linear Pixel-Autoregressive (AR).}
The AR model is a predictive autoencoder operating in pixel space. It compresses $x_t$ into a bottleneck to predict the \emph{next} observation $x_{t+1}$.
\begin{itemize}
    \item \textit{Architecture:} A deterministic encoder $z_t = W_{enc} x_t$ mapping $D_x \to D_z$ and a predictive decoder $\hat{x}_{t+1} = W_{pred} z_t$ mapping $D_z \to D_x$.
    \item \textit{Objective:} Mean Squared Error in pixel space: $\mathcal{L}_{\text{AR}} = \| x_{t+1} - \hat{x}_{t+1} \|^2$.
    \item \textit{Hypothesis:} Since distractors are predictable ($d_{t+1} \approx 0.9 d_t$), the AR model must track them to minimize prediction error, wasting capacity on distractor texture.
\end{itemize}

\subsubsection{Joint-Embedding Architectures (Latent Autoregression)}
These models are time-indexed, order-1 Markovian in latent space, trained to predict $z_{t+1}$ without reconstructing pixels.

\paragraph{Linear JEPA (Deterministic).}
JEPA aligns the predicted next state with the encoded future state.
\begin{itemize}
    \item \textit{Architecture:} 
    \begin{enumerate}
        \item Context Encoder: $z_t = W_{enc} x_t$.
        \item Predictor: A linear layer $\hat{z}_{t+1} = W_{pred} z_t$ mapping $D_z \to D_z$.
        \item Target Encoder (EMA): Same architecture as Context Encoder.
    \end{enumerate}
    \item \textit{Objective:} VICReg loss, minimizing prediction error $\| \hat{z}_{t+1} - z'_{t+1} \|^2$ with variance/covariance regularization to prevent collapse.
\end{itemize}

\paragraph{Linear Probabilistic VJEPA.}
VJEPA extends JEPA by treating the prediction as a probabilistic distribution.
\begin{itemize}
    \item \textit{Architecture:} 
    \begin{enumerate}
        \item Deterministic Encoder: $z_t = W_{enc} x_t$.
        \item Probabilistic Predictor: Two linear heads outputting $\mu_{pred}, \log\sigma^2_{pred} = f_\phi(z_t)$ mapping $D_z \to D_z$.
        \item Target Encoder (EMA): Two heads outputting $\mu_{\theta'}, \log\sigma^2_{\theta'}$ mapping $D_x \to D_z$.
    \end{enumerate}
    \item \textit{Objective:} The regularized negative log-likelihood (Eq.~\ref{eq:vjepa_objective}):
    \[ \mathcal{L}_{\text{VJEPA}} = \mathbb{E}_{\tilde{z}_{t+1} \sim q_{\theta'}} [-\log p_\phi(\tilde{z}_{t+1} | z_t)] + \beta D_{KL}(q_{\theta'}(z'_{t+1}|x_{t+1}) \| \mathcal{N}(0, I)). \]
\end{itemize}

\paragraph{Linear Bayesian JEPA (BJEPA).}
BJEPA incorporates a static Bayesian prior to filter out unpredictable dynamics by fusing a \emph{Predictive Likelihood} (Dynamics Expert) with a \emph{Latent-Space Prior} (Static Expert).
\begin{itemize}
    \item \textit{Architecture:} Identical to VJEPA, but adds learnable static parameters $\mu_{prior}, \log\sigma^2_{prior}$ (initialized to zero) to define a preferred latent manifold.
    \item \textit{Inference:} BJEPA performs \emph{hard fusion} via a Product of Experts (PoE):
    \[ \mu_{post}, \log\sigma^2_{post} = \text{PoE}((\mu_{dyn}, \sigma^2_{dyn}), (\mu_{prior}, \sigma^2_{prior})). \]
    \item \textit{Objective:} $\mathcal{L}_{\text{BJEPA}} = \mathcal{L}_{\text{VJEPA}} + \gamma D_{KL}(p_{\text{like}}(z_{t+1}|z_t) \| p_{\text{prior}}(z_{t+1}))$.
\end{itemize}

\paragraph{Stochastic Training Strategy.}
In our implementation, we parameterize the context encoder $f_\theta(x_C)$ as a deterministic map, delegating uncertainty modeling entirely to the predictive transition $p_\phi(Z_T \mid Z_C)$ and the target inference $q_{\theta'}(Z_T \mid x_T)$. For both VJEPA and BJEPA, optimization of the variational objective (Eq.~\ref{eq:vjepa_objective} and Eq.~\ref{eq:bjepa_training_objective}) requires estimating an expectation over the target distribution $\mathbb{E}_{Z_T \sim q_{\theta'}}$. Following standard practice in stochastic gradient variational inference (SGVI), we approximate this expectation using a single Monte Carlo sample ($K=1$) drawn from the target encoder's distribution $q_{\theta'}(Z_T \mid x_T)$ via the reparameterization trick at each training step. While this single-sample estimate is noisy for any individual data point, the stochasticity of mini-batch gradient descent (SGD) smooths the objective over the course of training, providing an unbiased gradient estimator.

Note that the sampling operation is applied \emph{only} to the target encoder $q_{\theta'}$ to obtain a concrete regression target $Z_T$. We do \emph{not} sample from the predictor $p_\phi$. Instead, the predictor outputs the parameters of its belief distribution (e.g.\ $\mu_\phi, \sigma^2_\phi$), and we analytically evaluate the negative log-likelihood density (Gaussian in our implementation) of the sampled target $Z_T$ under these parameters. Mathematically, this evaluates how well the predictor's belief explains the specific realization drawn from the target encoder's belief.

\paragraph{Necessity of Target Sampling.}
Sampling $Z_T$ from the target distribution $q_{\theta'}$ (rather than simply using its mean $\mu_{\theta'}$) is strictly necessary to prevent \emph{variance collapse} in the predictor. Consider the Gaussian negative log-likelihood term minimized by the predictor:
\[
\mathcal{L}_{\text{pred}} \approx \frac{(Z_T - \mu_\phi)^2}{2\sigma^2_\phi} + \frac{1}{2}\log(\sigma^2_\phi).
\]
If we were to use the target mean as the training target (i.e.\ set $Z_T = \mu_{\theta'}$), the predictor could learn to output $\mu_\phi \approx \mu_{\theta'}$ perfectly, driving the squared error term to zero. This would encourage the variance $\sigma^2_\phi$ to collapse to zero (or $\log\sigma^2_\phi \to -\infty$) to minimize the second term, resulting in a deterministic, overconfident model. By sampling $Z_T \sim \mathcal{N}(\mu_{\theta'}, \sigma^2_{\theta'})$, the target becomes a ``moving target'' that fluctuates around the mean according to the target's uncertainty. The squared error term $(Z_T - \mu_\phi)^2$ remains non-zero, forcing the predictor to maintain a non-zero variance $\sigma^2_\phi$ to cover the distribution of potential futures.

\paragraph{Implementation and Limitations.}
In our experimental implementation, we parameterize both the target encoder $q_{\theta'}$ and the predictor $p_\phi$ as independent diagonal Gaussian distributions:
\[
q_{\theta'}(Z \mid x_T) = \mathcal{N}(Z; \mu_{\theta'}(x_T), \text{diag}(\sigma^2_{\theta'}(x_T))),
\quad
p_\phi(Z \mid Z_C) = \mathcal{N}(Z; \mu_\phi(Z_C), \text{diag}(\sigma^2_\phi(Z_C))).
\]
While this choice simplifies optimization and admits closed-form KL terms, the unimodal Gaussian assumption represents a limitation. In environments with complex, multi-modal bifurcations (e.g.\ distinct ``go left'' vs.\ ``go right'' futures), a single Gaussian may average the modes, resulting in a high-variance belief centered on an invalid state. Future work could address this by implementing more expressive heads, such as Gaussian Mixture Models or normalizing flows.

\subsubsection{Temporal Alignment in Training and Evaluation}
\label{subsubsec:temporal_alignment}

All models receive observation pairs $(x_t, x_{t+1})$ during training, where $x_t$ denotes the current observation and $x_{t+1}$ denotes the next-step observation. The temporal alignment differs between model families based on their predictive structure:

\paragraph{Training Alignment.}
\begin{itemize}
    \item \textit{VAE:} trained on single observations $x_t$ with reconstruction target $x_t$ (no temporal prediction).
    \item \textit{AR:} trained on pairs $(x_t, x_{t+1})$ with pixel-space prediction target $x_{t+1}$.
    \item \textit{JEPA/VJEPA/BJEPA:} trained on pairs $(x_t, x_{t+1})$ with latent-space prediction target $z_{t+1} = f_{\theta'}(x_{t+1})$.
\end{itemize}

\paragraph{Evaluation Alignment.}
As predictive models (AR, JEPA, VJEPA, BJEPA) are trained to predict future states, their latent representations should be evaluated against the \emph{future} ground-truth signal $s_{t+1}$, not the current signal $s_t$. Concretely:
\begin{itemize}
    \item \textit{VAE:} The encoder output $z_t = \mu_\phi(x_t)$ represents the current state and is evaluated against $s_t$.
    \item \textit{AR:} The encoder output $z_t = W_{enc} x_t$ is designed to predict $x_{t+1}$, hence is evaluated against $s_{t+1}$.
    \item \textit{JEPA:} The predictor output $\hat{z}_{t+1} = W_{pred} z_t$ explicitly predicts the next latent state and is evaluated against $s_{t+1}$.
    \item \textit{VJEPA:} The predictor mean $\mu_{pred}(z_t)$ represents the expected next latent state and is evaluated against $s_{t+1}$.
    \item \textit{BJEPA:} At inference, the hard-fused posterior mean $\mu_{post} = \text{PoE}(\mu_{dyn}, \mu_{prior})$ represents the constrained prediction of the next latent state and is evaluated against $s_{t+1}$.
\end{itemize}

This temporal alignment ensures that each model's representation is evaluated against the appropriate ground-truth signal consistent with its predictive objective.

\subsubsection{Linear Probe Evaluation Protocol}
\label{subsubsec:linear_probe}

To assess how well each model's latent representation captures the true underlying signal $s_t$ (while ignoring the distractor $d_t$), we employ a linear probe methodology. For each model, we extract latent representations and fit a linear regression to predict the ground-truth signal, then measure the coefficient of determination ($R^2$) on held-out test data.

\paragraph{Latent Extraction.}
The latent representation used for probing differs by model, reflecting their distinct architectures:
\begin{itemize}
    \item \textit{VAE:} $z_t^{\text{probe}} = \mu_\phi(x_t)$ --- the encoder mean, representing the current state.
    \item \textit{AR:} $z_t^{\text{probe}} = W_{enc} x_t$ --- the encoder output, which implicitly predicts the next state.
    \item \textit{JEPA:} $z_t^{\text{probe}} = W_{pred} (W_{enc} x_t) = \hat{z}_{t+1}$ --- the predictor output, explicitly predicting the next latent state.
    \item \textit{VJEPA:} $z_t^{\text{probe}} = \mu_{pred}(W_{enc} x_t)$ --- the predictor mean, representing the expected next latent state.
    \item \textit{BJEPA:} $z_t^{\text{probe}} = \mu_{post}$ --- the hard-fused posterior mean computed via Product of Experts:
    \[
    \mu_{post} = \frac{\sigma^2_{prior} \cdot \mu_{dyn} + \sigma^2_{dyn} \cdot \mu_{prior}}{\sigma^2_{dyn} + \sigma^2_{prior}},
    \]
    where $(\mu_{dyn}, \sigma^2_{dyn})$ are the dynamics predictor parameters and $(\mu_{prior}, \sigma^2_{prior})$ are the learned static prior parameters.
\end{itemize}

\paragraph{Probe Fitting and Evaluation.}
Given extracted latents $\{z_i^{\text{probe}}\}_{i=1}^{N_{\text{train}}}$ from the training set and corresponding ground-truth signals $\{s_i^{\text{target}}\}_{i=1}^{N_{\text{train}}}$, we fit a linear regression:
\[
\hat{s} = W_{\text{probe}} z^{\text{probe}} + b_{\text{probe}},
\]
where $(W_{\text{probe}}, b_{\text{probe}})$ are learned via ordinary least squares. The target signal $s^{\text{target}}$ is chosen according to the temporal alignment described above: $s_t$ for VAE, and $s_{t+1}$ for all predictive models (AR, JEPA, VJEPA, BJEPA).

We then evaluate on the test set by computing the $R^2$ score:
\[
R^2 = 1 - \frac{\sum_i \| s_i^{\text{target}} - \hat{s}_i \|^2}{\sum_i \| s_i^{\text{target}} - \bar{s} \|^2},
\]
where $\bar{s}$ is the mean of the test targets. An $R^2$ close to 1 indicates that the latent representation successfully captures the true signal, while an $R^2$ close to 0 indicates that the representation has failed to disentangle signal from distractor.

\paragraph{Interpretation.}
This evaluation protocol directly tests the core hypothesis: predictive models operating in latent space (JEPA, VJEPA, BJEPA) should learn representations that capture the predictable signal while discarding unpredictable distractors, whereas generative models (VAE, AR) operating in pixel space are forced to allocate capacity to high-variance distractors regardless of their task relevance. The use of predictor outputs (rather than encoder outputs) for JEPA-family models ensures we evaluate the representation that the model is actually trained to produce, i.e. the predicted future state.

\subsection{Optimization and Hyperparameters}
All models are trained using the \textit{Adam} optimizer \cite{kingma2017haha_dont_ask_me_who_is_Adam_my_name_is_Yong}. We use full-batch training (processing the continuous trajectory as a single batch) for stability in this linear setting.

\begin{table}[H]
\centering
\caption{Optimization settings and loss function hyperparameters.}
\label{tab:hyperparams}
\begin{tabular}{l c l}
\toprule
\textbf{Parameter} & \textbf{Value} & \textbf{Description} \\
\midrule
\multicolumn{3}{l}{\textit{Global Optimization}} \\
Random Seed & 111 & For reproducibility \\
Optimizer & Adam \cite{kingma2017haha_dont_ask_me_who_is_Adam_my_name_is_Yong} & Standard implementation \\
Learning Rate & $1 \times 10^{-3}$ & Constant throughout training \\
Training Steps & 6000 & Sufficient for convergence \\
\midrule
\multicolumn{3}{l}{\textit{JEPA Family Common Settings}} \\
EMA Decay ($\tau$) & 0.99 & Target encoder update rate \\
\midrule
\multicolumn{3}{l}{\textit{Model-Specific Coefficients}} \\
\textbf{JEPA} (VICReg Loss) & & \\
\quad Invariance Coeff & 25.0 & MSE term weight \\
\quad Variance Coeff & 25.0 & Hinge loss on standard deviation \\
\quad Covariance Coeff & 1.0 & Off-diagonal decorrelation weight \\
\textbf{VJEPA} & & \\
\quad $\beta$ (Eq.~\ref{eq:vjepa_objective}) & 0.01 & Target encoder KL regularization weight \\
\textbf{BJEPA} & & \\
\quad $\beta$ (Eq.~\ref{eq:vjepa_objective}) & 0.01 & Target encoder KL regularization weight \\
\quad $\gamma$ (Eq.~\ref{eq:bjepa_training_objective}) & 0.1 & \textbf{Structural Prior} KL weight \\
\bottomrule
\end{tabular}
\end{table}

\paragraph{Note on BJEPA Priors.} In the Python code implementation for the ``Noisy TV'' experiment, the Structural Prior used for the $\gamma$ term is \emph{static} (time-invariant) but \emph{learnable}. It is parameterized as a Gaussian with parameters $\mu_{\text{prior}}, \sigma^2_{\text{prior}}$ that are optimized alongside the model weights to find the stable manifold of the signal. The reference prior used for the $\beta$ term (target regularization) is a fixed unit Gaussian $\mathcal{N}(0, I)$.

\subsection{Results}

We measure performance using the \emph{coefficient of determination} ($R^2$) between the learned latent $Z_t$ and the true signal $s_t$ via a linear probe. The quantitative results \footnote{We do not compare training loss values across methods because their objectives define optimality in fundamentally different metric spaces. Generative models (VAE, AR) minimize pixel-level reconstruction error, whereas JEPA-based models minimize feature distance or maximize likelihood in an abstract latent space with arbitrary scaling. Consequently, the raw loss magnitudes are not comparable.} are summarized in Table.\ref{tab:toyExperimentResults} and visualized across all 9 noise scales in Fig.\ref{fig:results_grid}. Additionally, Fig.\ref{fig:reconstructions} provides a qualitative view of the signal reconstruction at low, medium, and high noise levels.

\paragraph{Failure of Generative Models.}
As shown in Table.\ref{tab:toyExperimentResults_full} and Fig.\ref{fig:results_grid}, the reconstructive baselines (VAE, AR) suffer catastrophic performance degradation as the distractor noise scale increases. At the highest noise scale ($\sigma=8.0$), VAE and AR signal recovery drops to $R^2 \approx 0.50$ and $0.58$ respectively, while their noise recovery remains high ($R^2 = 0.62$ and $R^2 = 0.45$). This confirms that likelihood-based objectives force the models to encode high-variance distractors, leading to representation collapse on the target task.

\paragraph{Robustness of Joint-Embedding Architectures.}
In contrast, all joint-embedding architectures (JEPA, VJEPA, BJEPA) demonstrate robustness to the ``Noisy TV'' distractor. They maintain high signal recovery ($R^2 > 0.85$) even at $\sigma=8.0$, effectively filtering out the distractor dynamics. In this specific trial, the deterministic JEPA achieved the highest final accuracy ($R^2=0.93$), though it exhibited higher variance in training stability compared to the probabilistic variants\footnote{While the deterministic JEPA achieved high performance in the reported run (Seed 111), we observed in auxiliary experiments with different random seeds that it is prone to occasional training collapse (sudden performance drops to $R^2 < 0.5$) at intermediate noise scales. We attribute this to the difficulty of optimizing a deterministic predictor on highly stochastic targets. The probabilistic methods (VJEPA and BJEPA) consistently exhibited greater training stability across seeds by accounting for aleatoric uncertainty via their variance outputs.}.

\paragraph{Stability and Optimization.}
The high-variance noise makes the target embeddings ($z_{t+1}$) jittery and unpredictable. As standard JEPA is deterministic, it struggles to average out this jitter, leading to potential training instability. Even in this successful run, JEPA showed a momentary instability at Noise Scale 3.0, dropping to $0.841$ (Test) while VJEPA/BJEPA remained at $> 0.90$. This collapse at Scale 3.0 likely indicates optimization difficulties where the variance regularization term fought against the predictive loss. This hints at the optimization difficulty deterministic models face when the noise magnitude creates complex gradients, even if the model recovered at Scale 4.0.

\begin{table}[H]
\centering
\footnotesize
\setlength{\tabcolsep}{5pt}
\renewcommand{\arraystretch}{1.05}
\begin{tabular}{@{} l l c c c @{}}
\toprule
Scale (SNR) & Model & $\uparrow$Signal $R^2$ (Tr/Te) & $\downarrow$Noise $R^2$ (Tr/Te) & $\downarrow$Time (Tr/Te) \\
\midrule
0.0 (inf dB)  & VAE   & \underline{\textbf{1.000}} / \underline{\textbf{1.000}} & NA / NA & 12.6s / 0.01s \\
0.0 (inf dB)  & AR    & 0.999 / 0.999      & NA / NA    & \underline{\textbf{6.4s}} / 0.01s \\
0.0 (inf dB)  & JEPA  & 0.947 / 0.930      & NA / NA    & 16.9s / 0.01s \\
0.0 (inf dB)  & VJEPA & 0.999 / 0.999      & NA / NA    & 13.9s / 0.01s \\
0.0 (inf dB)  & BJEPA & 0.987 / 0.981      & NA / NA    & 23.5s / 0.01s \\
\midrule
1.0 (15.8 dB) & VAE   & 0.978 / 0.973      & 0.065 / 0.002    & 12.6s / 0.01s \\
1.0 (15.8 dB) & AR    & 0.983 / 0.980      & 0.041 / -0.007   & \underline{\textbf{6.6s}} / 0.01s \\
1.0 (15.8 dB) & JEPA  & 0.947 / 0.930      & 0.230 / 0.178    & 16.9s / 0.01s \\
1.0 (15.8 dB) & VJEPA & \underline{\textbf{0.999}} / \underline{\textbf{0.998}} & \underline{\textbf{0.006}} / \underline{\textbf{-0.012}} & 13.8s / 0.01s \\
1.0 (15.8 dB) & BJEPA & 0.975 / 0.976      & 0.029 / -0.003   & 23.5s / 0.01s \\
\midrule
2.0 (9.8 dB)  & VAE   & 0.903 / 0.853      & 0.283 / 0.194    & 12.8s / 0.01s \\
2.0 (9.8 dB)  & AR    & 0.923 / 0.889      & 0.179 / 0.094    & \underline{\textbf{6.4s}} / 0.01s \\
2.0 (9.8 dB)  & JEPA  & 0.947 / 0.930      & 0.213 / 0.172    & 17.0s / 0.01s \\
2.0 (9.8 dB)  & VJEPA & \underline{\textbf{0.996}} / \underline{\textbf{0.995}} & \underline{\textbf{0.020}} / -0.012 & 14.2s / 0.01s \\
2.0 (9.8 dB)  & BJEPA & 0.966 / 0.967      & 0.042 / \underline{\textbf{-0.023}} & 23.3s / 0.01s \\
\midrule
3.0 (6.3 dB)  & VAE   & 0.852 / 0.770      & 0.458 / 0.393    & 12.5s / 0.01s \\
3.0 (6.3 dB)  & AR    & 0.870 / 0.801      & 0.328 / 0.256    & \underline{\textbf{6.5s}} / 0.01s \\
3.0 (6.3 dB)  & JEPA  & 0.916 / 0.841      & \underline{\textbf{0.000}} / \underline{\textbf{-0.003}}   & 16.8s / 0.01s \\
3.0 (6.3 dB)  & VJEPA & \underline{\textbf{0.947}} / \underline{\textbf{0.930}}      & 0.224 / 0.176    & 13.5s / 0.01s \\
3.0 (6.3 dB)  & BJEPA & 0.946 / 0.919      & 0.106 / 0.078    & 23.0s / 0.01s \\
\midrule
4.0 (3.8 dB)  & VAE   & 0.822 / 0.730      & 0.512 / 0.458    & 12.5s / 0.01s \\
4.0 (3.8 dB)  & AR    & 0.839 / 0.756      & 0.394 / 0.338    & \underline{\textbf{6.5s}} / 0.01s \\
4.0 (3.8 dB)  & JEPA  & \underline{\textbf{0.999}} / \underline{\textbf{0.999}}      & \underline{\textbf{0.004}} / \underline{\textbf{-0.010}}   & 16.6s / 0.02s \\
4.0 (3.8 dB)  & VJEPA & 0.994 / 0.993      & 0.025 / -0.007   & 13.8s / 0.01s \\
4.0 (3.8 dB)  & BJEPA & 0.920 / 0.899      & 0.213 / 0.156    & 23.2s / 0.01s \\
\midrule
5.0 (1.8 dB)  & VAE   & 0.777 / 0.681      & 0.543 / 0.492    & 12.5s / 0.01s \\
5.0 (1.8 dB)  & AR    & 0.806 / 0.717      & 0.418 / 0.364    & \underline{\textbf{6.4s}} / 0.01s \\
5.0 (1.8 dB)  & JEPA  & \underline{\textbf{0.947}} / \underline{\textbf{0.930}}      & \underline{\textbf{0.200}} / \underline{\textbf{0.183}}    & 16.5s / 0.01s \\
5.0 (1.8 dB)  & VJEPA & 0.945 / 0.928      & 0.227 / 0.190    & 14.1s / 0.01s \\
5.0 (1.8 dB)  & BJEPA & 0.884 / 0.841      & 0.317 / 0.240    & 22.3s / 0.01s \\
\midrule
6.0 (0.3 dB)  & VAE   & 0.729 / 0.627      & 0.577 / 0.531    & 12.3s / 0.01s \\
6.0 (0.3 dB)  & AR    & 0.770 / 0.677      & 0.437 / 0.386    & \underline{\textbf{7.1s}} / 0.01s \\
6.0 (0.3 dB)  & JEPA  & \underline{\textbf{0.999}} / \underline{\textbf{0.999}}      & \underline{\textbf{0.007}} / \underline{\textbf{-0.011}}   & 16.0s / 0.01s \\
6.0 (0.3 dB)  & VJEPA & 0.947 / 0.930      & 0.197 / 0.202    & 13.7s / 0.01s \\
6.0 (0.3 dB)  & BJEPA & 0.935 / 0.914      & 0.205 / 0.171    & 22.8s / 0.01s \\
\midrule
7.0 (-1.1 dB) & VAE   & 0.675 / 0.566      & 0.615 / 0.574    & 12.7s / 0.01s \\
7.0 (-1.1 dB) & AR    & 0.729 / 0.631      & 0.462 / 0.416    & \underline{\textbf{7.1s}} / 0.01s \\
7.0 (-1.1 dB) & JEPA  & \underline{\textbf{0.946}} / \underline{\textbf{0.929}}      & \underline{\textbf{0.209}} / 0.191    & 16.1s / 0.01s \\
7.0 (-1.1 dB) & VJEPA & 0.939 / 0.918      & 0.264 / \underline{\textbf{0.182}}    & 13.4s / 0.01s \\
7.0 (-1.1 dB) & BJEPA & 0.894 / 0.851      & 0.291 / 0.237    & 23.1s / 0.01s \\
\midrule
8.0 (-2.2 dB) & VAE   & 0.613 / 0.499      & 0.656 / 0.620    & 12.3s / 0.01s \\
8.0 (-2.2 dB) & AR    & 0.680 / 0.578      & 0.491 / 0.449    & \underline{\textbf{7.1s}} / 0.01s \\
8.0 (-2.2 dB) & JEPA  & \underline{\textbf{0.947}} / \underline{\textbf{0.930}} & \underline{\textbf{0.226}} / \underline{\textbf{0.183}}    & 16.1s / 0.01s \\
8.0 (-2.2 dB) & VJEPA & 0.905 / 0.870      & 0.299 / 0.251    & 13.4s / 0.01s \\
8.0 (-2.2 dB) & BJEPA & 0.896 / 0.841      & 0.292 / 0.238    & 23.0s / 0.01s \\
\bottomrule
\end{tabular}
\caption{Performance metrics ($R^2$) across models at all noise scales. $Tr$ denotes training set, $Te$ denotes test set. Note the robustness of JEPA family models at high noise levels (e.g. Scale 8.0) compared to generative baselines.}
\label{tab:toyExperimentResults_full}
\end{table}

\subsection{Computational Resources}
All experiments were conducted using the standard CPU runtime provided by Google Colab \cite{Edwards2024Colab}. The key hardware specifications are summarized below:
\begin{itemize}
    \item \textbf{Platform:} Google Colab (Linux x86\_64)
    \item \textbf{CPU:} Intel(R) Xeon(R) CPU @ 2.20GHz
    \item \textbf{CPU Topology:} 2 Logical CPUs (1 Physical Core, 2 Threads per core)
    \item \textbf{Memory (RAM):} 13.61 GB
    \item \textbf{Disk Space:} 242.49 GB
    \item \textbf{Instruction Set:} x86\_64 with AVX2 and FMA support
\end{itemize}
Given the lightweight nature of the linear toy experiment (latent dimension $D_z=4$), training was performed entirely on the CPU, with individual model runs completing in approximately 10-35 seconds (see Table~\ref{tab:toyExperimentResults_full}).

\subsection{Code Availability}
The complete source code and data generation procedures used for the experiments in this paper are available at: \url{https://github.com/YongchaoHuang/VJEPA}.

\end{document}